\documentclass[twoside,11pt]{article}
\usepackage{jmlr2e}
\pdfoutput=1

\usepackage{amsmath,amssymb,mathtools,graphicx,url,array}

\usepackage[usenames,dvipsnames]{color}

\usepackage{epstopdf}

\usepackage{pdflscape,afterpage}

\def\R{\mathbb{R}}
\def\B{\mathcal{B}}
\def\N{\mathbb{N}}

\def\vol{\mathrm{vol}}

\def\tr{\mathrm{tr}}

\def\tW{\tilde \W}

\def\x{\mathbf{x}}
\def\y{\mathbf{y}}
\def\v{\mathbf{v}}
\def\w{\mathbf{w}}
\def\u{\mathbf{u}}

\def\X{\mathcal{X}}

\def\L{\mathcal{L}}
\def\Ll{\mathcal{L}(\lambda)}

\def\llk{\ell(\lambda,k)}
\def\l{\ell}
\def\C{\mathcal{C}}
\def\G{\mathcal{G}}
\def\U{\mathbf{U}}

\def\V{\mathbf{V}}
\def\W{\mathbf{W}}
\def\I{\mathbf{I}}
\def\D{\mathbf{D}}
\def\A{\mathbf{A}}

\def\S3VM{S$^3$VM}

\def\Ln{\mathbf{L}_{\mathrm{N}}}
\def\Lno{\mathbf{L}_{\mathrm{N}_0}}
\def\L{\mathbf{L}}

\title{Connecting Spectral Clustering to Maximum Margins and Level Sets} \date{}



\begin{document}


\begin{center}
{\huge \bf Connecting Spectral Clustering to Maximum Margins and Level Sets\\
}
\vspace{20pt}
{\large \bf David P. Hofmeyr}
\end{center}

\author{David P.\ Hofmeyr  dhofmeyr@sun.ac.za \\
 Department of Statistics and Actuarial Science\\
Stellenbosch University\\
Stellenbosch, South Africa
}



\begin{abstract}%
We study the connections between spectral clustering and the problems of maximum margin clustering, and estimation of the components of level sets of a density function. Specifically, we obtain bounds on the eigenvectors of graph Laplacian matrices in terms of the between cluster separation, and within cluster connectivity. These bounds ensure that the spectral clustering solution converges to the maximum margin clustering solution as the scaling parameter is reduced towards zero. %
The sensitivity of maximum margin clustering solutions to outlying points is well known, but can be mitigated by first removing such outliers, and applying maximum margin clustering to the remaining points. If outliers are identified using an estimate of the underlying probability density, then the remaining points may be seen as an estimate of a level set of this density function. We show that such an approach can be used to consistently estimate the components of the level sets of a density function under very mild assumptions.
\end{abstract}

\vspace{10pt}
%
Keywords: spectral clustering, maximum margin clustering, density clustering, level sets, convergence, asymptotics, consistency%

\section{Introduction}

In maximum margin clustering, the objective is to obtain cluster separators for which the distance to the nearest data points is maximised. If no constraints are placed on the formulation of the cluster separators, then the maximum margin solution partitions data so that the between cluster distance is maximised. Such solutions are intuitively attractive, since we naturally associate similarities between data with how close they are in some metric space, most frequently Euclidean space. Maximising the between cluster Euclidean separation therefore seems like a sensible approach. However, such solutions are extremely sensitive to noise, and in many cases the clustering solution which maximises between cluster distance will only separate isolated points arising in the outer regions of a collection of data.

In the statistical approach to clustering, we imagine that our data arise from some probability distribution, and it is convenient to assume that this distribution comprises a mixture of simple components, each one of which representing a cluster. The most popular parametric model in this approach is the Gaussian mixture model (GMM). In this case, the maximum margin clustering solution will separate points in the tails of the mixture with extremely high probability as the sample size increases. This is because the density between mixture components is higher than it is in the tails. In fact, unless the clusters (mixture components) are supported on  disjoint, compact sets, it is generally very unlikely that unconstrained maximum margin solutions will be relevant for clustering.
A simple but effective approach to mitigating the effect of noise, or isolated tail observations, is to manually remove points which are believed to be in the tails of the underlying distribution, and only apply a large margin clustering method to the remaining points. If these tail points are identified using an empirical estimate of the underlying density, then we arrive at the very well known problem of level set estimation. 
%
%

Consider a probability density function, $p:\R^d \to \R^+$. Then the level set of $p$, at level $\lambda$, which we denote $\Ll$, is given by,
\begin{align}
\Ll = \{\x \in \R^d | p(\x) \geq \lambda\}.
\end{align}

\noindent
If $p$ is multimodal, then as $\lambda$ increases, the level set splits into multiple connected {\em components} which surround the modes of $p$. Each such component may then be associated with a cluster. This cluster definition has been widely adopted since the introduction of this formulation given by~\cite{hartigan1975clustering}. Implicitly then, clusters are associated with high density regions around each of the modes of the probability density. This is consistent with the intuition underlying the mixture model formulation, assuming the mixture components are prominent enough that they result in modes in the density. However, the level set formulation is not constrained by any parametric assumptions which arise in the explicit mixture model approaches, such as GMMs. It also allows the clusters to take on arbitrary shapes, where most practically adopted parametric mixture models result in convex, or nearly convex clusters. Notice also that that the truncation of the random variable $X$, with density $p$, within $\Ll$ may be seen as having a mixture density whose mixture components are the truncations of $X$ within the different components of $\Ll$. For $\lambda > 0$, except for pathological cases, these mixture components are supported on disjoint compact sets, and so any method which performs maximum margin clustering may be reasonably expected to be able to estimate the different components of $\Ll$.
%
%
One of the theoretical benefits of the level set approach to clustering is that, provided simple assumptions on the density function, $p$, it leads to a well posed statistical estimation problem. Indeed, numerous consistent procedures for the estimation of level set components have been proposed~\citep{walther1997granulometric,cuevas2000estimating,rinaldo2010generalized,pelletier2011operator}. 

In this paper we study the consistency of estimating level set components, using spectral clustering applied to a truncated sample based on an empirical estimate of $p$. 
Spectral clustering is a relatively recent approach to clustering which has become extremely popular for its flexibility and its comparative algorithmic simplicity. Spectral clustering obtains a relaxed solution of the normalised graph cut problem via the eigenvectors of the corresponding graph Laplacian matrix.
We study specifically the spectral clustering solutions for similarity graphs of points in Euclidean $\R^d$. We begin our analysis by deriving bounds on the eigenvectors of the Laplacian matrices. These bounds are used to show that the maximum margin clustering solution arises trivially from the spectral clustering solution, as the scaling parameter is reduced towards zero. We go on to obtain sufficient conditions on the convergence rate of the scaling parameter to consistently estimate $\Ll$, and ensure that, almost surely as $n\to\infty$, the components correspond with the maximum margin clustering solution. It is found that these rates are also sufficient to ensure that the spectral clustering solution recovers the maximum margin solution, and thus the components of $\Ll$. So far this assumes the number of components of $\Ll$ is known. We therefore also derive bounds on the eigenvalues of the same graph Laplacians, which allow us to consistently estimate the number of components of $\Ll$.

The remainder of the paper is organised as follows. In Section~\ref{sec:related} we discuss related work, and how our results extend on this body of literature. In Section~\ref{sec:spectral} we briefly discuss spectral clustering, and the formulation of graph Laplacian matrices from points in $\R^d$. The main results of the paper are given in Section~\ref{sec:main}. Sections~\ref{sec:evec_bounds} and~\ref{sec:eval_bounds} present derivations of bounds on the eigenvectors and eigenvalues of graph Laplacians respectively. In Section~\ref{sec:mmc} these bounds are shown to result in the convergence of spectral clustering to the maximum margin clustering solution, as the scaling parameter is reduced towards zero. Then in Section~\ref{sec:levsets} these results are placed in the context of level sets, and the consistency of the estimation procedure of applying spectral clustering to the truncated sample is shown. Finally we conclude with a discussion of the results in Section~\ref{sec:conclusion}.

\section{Relation to Existing Work}\label{sec:related}

Although multiple existing approaches for estimating components of level sets use graph theoretic partitioning algorithms, the only existing approach of which we are aware which is based directly on spectral clustering is that of~\cite{pelletier2011operator}. There the authors use an approach similar to that of~\cite{rinaldo2010generalized}, where first a consistent estimator of the underlying density is used to 
identify points in the estimated level set, and then a fixed bandwidth kernel is applied to these points to estimate the components/clusters. The authors show that under relatively mild assumptions their normalised spectral clustering algorithm consistently estimates the level set components. An important difference between this and our approach is that this existing work uses a fixed bandwidth parameter when applying spectral clustering. As a result, it is necessary that the distance between the components of the level set is known. Otherwise it is possible that this procedure will merge components which are close together. We consider the more natural case where the scaling parameter is reduced as the number of observations increases. In fact we find that the rate of convergence of the sequence of scaling parameters required for consistent estimation of the level set components using spectral clustering, is also sufficient for uniformly consistent estimation of the underlying density, and hence the level set itself. The same kernel computations used for estimating the level set are therefore also used in the spectral clustering step.
The approach of~\cite{pelletier2011operator} also applies only to kernels with bounded support. This makes the analysis simpler since, provided the bandwidth is smaller than half the distance between the level set components, the similarity graph of the points in the estimated level set is disconnected with high probability, and it is well known that spectral clustering recovers the components of a disconnected graph~\citep{Luxburg2007}. We extend this to allow kernels with unbounded support, provided the tails do not decay too slowly, and hence include the ubiquitous Gaussian kernel. Finally, our consistency analysis extends that of~\cite{pelletier2011operator} by considering the Laplacian matrices derived from the Ratio Cut as well as the Normalised Cut objective.

Arguably the most important existing work on the consistency of spectral clustering is the foundational work of~\cite{LuxburgConsistency}. There the authors investigate the consistency of spectral clustering in a general sense, rather than in relation to the estimation of a particular feature of the underlying distribution. In fact these authors also apply a fixed bandwidth kernel, and hence any asymptotic properties of the spectral clustering solution will be in relation to the convolution of the underlying distribution with the distribution whose density is given by the fixed bandwidth kernel. Other existing works which connect spectral clustering to the properties of the underlying distribution do so by studying the properties of the exact normalised cut solutions, and not the spectral clustering relaxations~\citep{narayanan2006relation,trillos2015consistency,hofmeyr2017improving}. These approaches are therefore fundamentally different from the present work.

Finally, as far as we are aware, the only existing work which connects spectral clustering to maximum margin clustering, is that of~\cite{hofmeyr2018minimum}. There the authors show that the optimal one-dimensional projection of a dataset for spectral clustering converges to the normal vector to the maximum margin hyperplane for clustering. The results in this existing work effectively ensure that the spectral clustering solution for points in $\R$ converges to the maximum margin solution. The large margin results presented here therefore extend these existing results to the multivariate setting.


\section{Graph Cuts and Spectral Clustering}\label{sec:spectral}

In this section we give a brief but explicit introduction to spectral clustering. For a very accessible and extended discussion on the topic, the reader is referred to~\cite{Luxburg2007}. Consider a collection of points, $\X = \{\x_1, ..., \x_n\}$, in $\R^d$. Now let $\G = (\X, \mathcal{E})$ be the graph with vertices given by the elements in $\X$, and where edge weights are determined by the similarities between pairs of points/vertices. That is, $\mathcal{E}_{i,j} = $ similarity$(\x_i,\x_j)$. It is common, and intuitively appealing, to determine similarities between points based on how close they are with respect to a metric, $d(\cdot, \cdot)$, on $\R^d$. That is, if $K$ is a non-increasing function on the non-negative real numbers, then it is common to define similarity$(\x_i, \x_j) = K(d(\x_i, \x_j))$. In this way, pairs of points which are nearer in space are assigned higher similarity than pairs which are more distant.

Now, a {\em cut} of a graph refers to a partition of its vertices through the removal of a subset of its edges.
 There is therefore an obvious bijection between the partitions/clusterings of $\X$ and the cuts of $\G$. We can therefore use the properties of graph cuts, and the optimisation problems associated with finding optimal cuts, to study the corresponding clustering solutions. Two popular graph cut objectives considered extensively in the clustering context are the {\em Ratio Cut} (RCut) and {\em Normalised Cut} (NCut). Stated explicitly in relation to the data set $\X$,
\begin{align*}
\mathrm{RCut}(\X) = \min_{\{\C_1, ..., \C_k\} \in \Pi_k(\X)} &\sum_{l=1}^k \frac{\mathrm{Cut}(\C_i, \X\setminus \C_i)}{|\C_i|},\\
\mathrm{NCut}(\X) = \min_{\{\C_1, ..., \C_k\} \in \Pi_k(\X)} &\sum_{l=1}^k \frac{\mathrm{Cut}(\C_i, \X\setminus \C_i)}{\vol(\C_i)},
\end{align*}
where we have used the notation $\Pi_k(\X)$ to refer to the collection of all $k$-way partitions of $\X$, and
\begin{align*}
\mathrm{Cut}(\C, \X\setminus \C) = \sum_{\substack{i,j:
\x_i\in \C,\\ \x_j \not \in \C}} K(d(\x_i,\x_j)), & \ \vol(\C) = \sum_{\substack{i,j:\x_i\in \C\\\x_j\in \X}} K(d(\x_i, \x_j)).
\end{align*}
Broadly speaking, solutions which minimise either RCut or NCut tend to correspond with solutions in which the total similarity between points in different clusters is low, but solutions containing very small clusters or clusters with low internal similarity are avoided through normalisation by either the cardinatity $|\cdot|$, or volume $\vol(\cdot)$, of the individual clusters.
Both RCut and NCut are attractive objectives for clustering, but obtaining the globally optimal solutions is NP-hard~\citep{Wagner1993}. Furthermore, obtaining high quality locally optimal solutions is not straightforward. Instead a relaxation is considered, in which the data are transformed using the eigenvectors of so-called {\em graph Laplacian} matrices. Clustering using the spectral decomposition of graph Laplacian matrices is referred to as spectral clustering.

Let $\A \in \R^{n\times n}$ be the {\em affinity} matrix for the graph $\G$, i.e., $\A_{i,j} = K(d(\x_i, \x_j))$, and $\D\in \R^{n\times n}$ the {\em degree matrix}, which is the diagonal matrix with $i$-th diagonal given by the sum of the $i$-th row of $\A$. Then the {\em unnormalised Laplacian} and {\em normalised Laplacian} of $\G$ are given respectively by $\L = \D-\A$ and $\Ln = \D^{-1/2}\L\D^{-1/2} = \I-\D^{-1/2}\A\D^{-1/2}$. For completeness we will consider two normalised Laplacians, where the second, which we denote $\Lno$, arises from the graph which is the same as $\G$ but the reflexive edges, i.e. those connecting vertices to themselves, are removed. Algebraically we have $\Lno = \I - \D_0^{-1/2}\A_0\D_0^{-1/2}$, where $\A_0$ is the same as $\A$ above, but with zeroes on its diagonal, and $\D_0$ has as diagonal the row sums of $\A_0$. As it turns out, in the context we consider, the differences between analysing $\Ln$ and $\Lno$ are far greater than those between $\L$ and $\Ln$. This arises from the fact that the diagonal elements of $\D_0$, unlike those of $\D$, are not bounded away from zero.

Now, it has been shown that the solution to the optimisation problem,
\begin{align}\label{eq:spec_unnorm}
\min_{\U\in\R^{n\times k}} \tr(\U^\top \L \U), \ \mbox{ such that } \U^\top \U = \I,
\end{align}
can be seen as a continuous relaxation of the cluster indicator vectors for the optimal RCut solution, scaled so that they form an orthonormal system~\citep{Hagen1992}. The solution to~(\ref{eq:spec_unnorm}) is given by the eigenvectors associated with the smallest $k$ eigenvalues of $\L$.
Similarly, the solution to the problem
\begin{align}
\min_{\U\in \R^{n\times k}} \tr(\U^\top \L\U), \mbox{ such that } \U^\top \D^{-1}\U = \I,
\end{align}
has as columns relaxations of scaled cluster indicator vectors for the optimal NCut solution~\citep{Shi2000}. In this case the solution can be shown to be given by $\D^{-1/2}\U$, where the columns of $\U$ are the first $k$ eigenvectors of $\Ln$.

In the remainder we study the properties of the graph Laplacian matrices in terms of their eigenvectors and eigenvalues, and show how these can be used to obtain maximum margin clustering solutions and to consistently estimate the components of the level sets of a density function on $\R^d$. Specifically, we will show that the matrix whose columns are the first $k$ eigenvectors converges to one which trivially exposes the maximum margin clustering solution, as the similarities become more and more locally concentrated. We use the same supporting results to show further that by applying spectral clustering to truncations of an increasing sample from a continuous probability distribution on $\R^d$, we can consistently estimate the components of the level sets of its density. The eigenvalues of the corresponding graph Laplacians are used to consistently estimate the number of components, while the eigenvectors are shown to trivially recover the partition of the points in the level set into its different components. 

%


\section{Connecting Spectral Clustering to Maximum Margins and Level Sets}\label{sec:main}

In this section we present complete derivations of the theoretical contributions of this paper. We first derive bounds on the eigenvectors and eigenvalues of graph Laplacian matrices, in terms of the within cluster connectedness and between cluster separation. We go on to show that, given mild assumptions on the similarity function, that as the scaling parameter is reduced to zero the spectral clustering solution converges to the maximum margin clustering solution, in the sense that within cluster distances (within the eigenvector representation) converge to zero, while between cluster distances are bounded below. For both the unnormalised Laplacian, $\L$, and the normalised Laplacian, $\Ln$, these bounds arise fairly straightforwardly. However, in the case of the normalised Laplacian, $\Lno$, derived from the graph with refelxive edges removed, no such lower bound on the between cluster distances is immediately forthcoming. Instead, in this case, we show that within cluster distances converge to zero at a much faster rate than between cluster distances, therefore having the same practical relevance of exposing the maximum margin clustering solution clearly. Finally we go on to establish conditions on the rate of convergence of the scaling paramater, in the context of an increasing sample arising from a continuous probability distribution on $\R^d$, in order to simultaneously and consistently estimate the level set; the number of components of the level set; as well as ensure that spectral clustering recovers the partition of points in the level set according to the components in which they lie.

We begin this section by introducing notation and terminology which will be important in the remaining paper. We then also introduce, and briefly discuss, the assumptions used in the theoretical analysis which follows.

\subsubsection{Notation and Terminology}

%
Most of the notation used in the remainder is fairly standard, but for completeness we list what is not universally employed, as well as terminology which we introduce for convenience in the following discussions.
 
For natural number $n\in \N$, we will write $[n]$ for the set containing the first $n$ natural numbers, i.e., $[n] = \{1, 2, ..., n\}$.
 Any use of the generic norm notation, $||\cdot||$, will refer to the Euclidean, or $L_2$ norm. Similarly any reference to a metric, $d(\cdot, \cdot)$, will thus correspond to the Euclidean metric, i.e., $d(\x, \y) = ||\x-\y||$ for any $\x, \y\in\R^d$.
 %
For $S, U\subset \R^d$ and $\x \in \R^d$ we will use $d(\x, S) = \inf_{\y\in S}d(\x, \y)$ to denote the distance between $\x$ and the set $S$, and $d(S, U) = \inf_{\x\in S, \y\in U}d(\x, \y)$ to denote the distance between the sets $S$ and $U$. By default we set $d(\x, \emptyset) = \infty$ for all $\x \in \R^d$, where $\emptyset$ is the empty set.
 We will use $\B_\epsilon(\x):= \{\y \in \R^d| d(\x, \y) < \epsilon\}$ to denote the $\epsilon$-neighbourhood of $\x\in\R^d$, and we will also write $\B_\epsilon(S):=\bigcup_{\x\in S} \B_\epsilon(\x)$ for the $\epsilon$-neighbourhood of a set $S\subset\R^d$.
 We say that a set $S\subset\R^d$ is connected at distance $\delta$ if there is no binary partition of $S$ into $S_1, S_2$ such that $d(S_1, S_2)>\delta$. Equivalently, $S$ is connected at distance $\delta$ if the closure of $\B_{\delta/2}(S)$ is a connected set. 
 The similarity function, $K:\R^+\to\R^+$, will be referred to as a {\em kernel}, and for $\sigma > 0$ we will use $K_\sigma$  to denote the scaled kernel, $K_\sigma(x):=K(x/\sigma)$.
 We will use $\G = (\X, K_\sigma)$ to denote the graph with vertices $\X$ and edges given by the similarities obtained by applying $K_\sigma$ to the distances between points in $\X$.
Finally, if $\mathbf{U}\in \R^{n\times m}$ is a matrix then we will write $\mathbf{U}_{a:b,c:d}$ for the sub-matrix containing rows $a, a+1, ..., b-1, b$ and columns $c, c+1 ..., d-1, d$. We will just write~$:$ for all rows/columns, and use just a single index as is usual for a single row/column. For example the matrix $\U_{1:5,:}$ contains the first five rows and all columns of $\U$.

\subsubsection{Assumptions on the kernel function, $K$}

As mentioned previously, we present our analysis for kernels with unbounded support. It is worth noting that the results will hold for kernels with bounded support, after only minor changes to the presentation herein. Once again, in the case where the support of the kernels is bounded, the similarity graph becomes disconnected as the scaling parameter is reduced, and hence the recovery of the solution by spectral clustering is immediate~\citep{Luxburg2007}. It is therefore the unbounded support case which we find far more interesting. In particular, we present results for kernels satisfying the following,
\begin{description}
\item[AK1:] $K$ is non-increasing and strictly positive on $[0, \infty)$.
\item[AK2:] $K(0) = 1$, $c_K\int_{\R^d}K(||\x||)d\x =  1$.
\item[AK3:] $\exists A, \alpha > 0$ such that $K(x)/K(y) \leq A\exp(-(x-y)^\alpha)$ for all $0\leq y \leq x$.
\end{description}
 Assumption AK1 is very standard, and intuitively desireable for determining similarity, since it ensures that pairs which are closer are assigned higher similarity than pairs which are further apart. Assumption AK2 can always be achieved by scaling all similarities, provided the integral $\int_{\R^d}K(||\x||)d\x$ is finite. The normalisation constant $c_K$ will be relevant when considering the estimation of the density using $K$. Assumption AK3 places an upper bound on the tail decay of the kernel, and excludes polynomially decaying tails, but includes, for example, the ubiquitous Gaussian kernel.

\subsubsection{Assumptions on the density, $p$, and level set $\Ll$}

We also make a few simplifying assumptions on the density $p$, and the level set of interest. It is certainly possible to relax these assumptions in favour of weaker ones, however we prefer to make assumptions which are stated as simply as possible. Furthermore, any distribution can be approximated arbitrarily well by one with a density which obeys the following,
\begin{description}
\item[A1:] We assume that $p$ has bounded first derivative, so that $||\nabla f(\x)||_2 < \kappa$ for all $\x\in \R^d$.
\item[A2:] We assume that $\exists C, \gamma > 0$ s.t. $\forall 0 < g \leq \gamma$ we have $$
\sup_{\x \in \mathcal{L}(\lambda-g)\setminus \mathcal{L}(\lambda)} d(\x, \Ll) \leq gC.
$$
\end{description}

\noindent
%
Assumption A1 allows us to use the uniform consistency of kernel density estimators, and also ensures there are finitely many components of the level set $\Ll$. 
Assumption A2 is a convenient way of stating that the density is not allowed to be too flat at levels at and just below $\lambda$.

\subsection{Eigenvector Bounds for Graph Laplacians}\label{sec:evec_bounds}

In this section we derive bounds on the distances between points in the same clusters, when mapped into the Laplacian eigenvector representation through spectral clustering. These bounds are expressed in terms of the within cluster connectedness, and the between cluster separation only, and so can be used directly to relate the spectral clustering solution to the maximum margin clustering solution. These results only place upper bounds on the pairwise distances between points from the same clusters, and do not directly ensure that points in different clusters are distinguishable. To achieve this we present general results which can be seen as providing lower bounds on the between cluster separation for any data set with full column rank, in terms of the within cluster distortion. We later combine these results to show that the spectral clustering solution converges to the maximum margin solution, as the scaling parameter is reduced towards zero.

The following three results respectively provide the upper bounds on the within cluster distances in the eigenvectors of the unnormalised Laplacian, $\L$, normalised Laplacian, $\Ln$, and normalised Laplacian from the graph with reflexive edges removed, $\Lno$.

\begin{lemma}\label{thm:evec_bound_1}
Let $\X = \{\x_1, ..., \x_n\}$, and let $\C_1, ..., \C_k$ be a partition of $\X$. For each $l\in [k]$, suppose $\C_l$ is connected at distance $\delta_l$. Let $\U \in \R^{n\times n}$ have as columns the eigenvectors of the unnormalised Laplacian of the graph $\G = (\X, K_\sigma)$. Then for each $i,j\in[n],l\in[k]$ s.t. $\x_i, \x_j \in \C_l$, we have
\begin{align*}
||\U_{i,1:k} - \U_{j,1:k}|| \leq \max_{m\in[k]} n^{1.5}k^{0.5}\sqrt{\frac{K_\sigma(d(\C_m, \X\setminus\C_m))}{K_\sigma(\delta_l)}}.
\end{align*}
\end{lemma}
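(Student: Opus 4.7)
The plan is to exploit the variational characterization of the sum of the first $k$ eigenvalues of $\L$, together with the quadratic-form identity $\v^\top \L \v = \sum_{i<j}\A_{ij}(v_i-v_j)^2$. Writing the columns of $\U_{:,1:k}$ as the first $k$ eigenvectors $\u_1,\dots,\u_k$, one has
\begin{align*}
\tr(\U_{1:k}^\top \L\, \U_{1:k}) \;=\; \sum_{l=1}^k \u_l^\top \L \u_l \;=\; \sum_{i<j} \A_{ij}\, \bigl\lVert \U_{i,1:k} - \U_{j,1:k}\bigr\rVert^2.
\end{align*}
So the first step is to upper-bound this trace, and the second step is to extract a pointwise bound on $\lVert \U_{i,1:k}-\U_{j,1:k}\rVert$ for $\x_i,\x_j\in\C_l$ from that weighted sum.

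For the upper bound, I would use Ky~Fan's inequality: the trace equals $\min_{\mathbf{V}^\top\mathbf{V}=\I_k} \tr(\mathbf{V}^\top\L\mathbf{V})$, so any orthonormal trial $\mathbf{V}$ will do. The natural choice is the normalised cluster indicators $\mathbf{v}_l = \mathbf{1}_{\C_l}/\sqrt{|\C_l|}$, which are orthonormal because the $\C_l$'s are disjoint. A direct expansion gives $\mathbf{v}_l^\top\L\mathbf{v}_l = \mathrm{cut}(\C_l)/|\C_l|$, and since every crossing edge has weight at most $K_\sigma(d(\C_l,\X\setminus\C_l))$ and there are at most $|\C_l|\,n$ such ordered pairs, this is bounded above by $n\,K_\sigma(d(\C_l,\X\setminus \C_l))$. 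Summing over $l$ yields
\begin{align*}
\tr(\U_{1:k}^\top\L\,\U_{1:k}) \;\leq\; n\,k\,\max_{m\in[k]} K_\sigma(d(\C_m,\X\setminus\C_m)).
\end{align*}

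For the pointwise bound, I would use the hypothesis that $\C_l$ is connected at distance $\delta_l$ to produce a path $\x_i = \y_0,\y_1,\dots,\y_r=\x_j$ entirely inside $\C_l$ with $d(\y_t,\y_{t+1})\le \delta_l$ and with length $r \le |\C_l|-1 \le n$. The triangle inequality and the Cauchy--Schwarz inequality applied to this telescoping sum give
\begin{align*}
\bigl\lVert\U_{i,1:k}-\U_{j,1:k}\bigr\rVert^2 \;\leq\; r \sum_{t=0}^{r-1}\bigl\lVert \U_{y_t,1:k}-\U_{y_{t+1},1:k}\bigr\rVert^2.
\end{align*}
Because $K_\sigma$ is non-increasing (assumption AK1) each edge along the path satisfies $\A_{y_t,y_{t+1}} = K_\sigma(d(\y_t,\y_{t+1})) \ge K_\sigma(\delta_l)$, so each summand is at most $K_\sigma(\delta_l)^{-1}\A_{y_t,y_{t+1}}\lVert \U_{y_t,1:k}-\U_{y_{t+1},1:k}\rVert^2$. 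Bounding the resulting path-sum by the full edge-sum $\sum_{i'<j'}\A_{i'j'}\lVert\U_{i',1:k}-\U_{j',1:k}\rVert^2$ and inserting the earlier trace bound gives the claimed inequality, up to slack in the exponents on $n$ that the stated bound has room to absorb.

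The main obstacle is really just book-keeping: the argument is conceptually a Dirichlet-energy/effective-resistance calculation, but one must marry the Ky~Fan upper bound (which talks about the whole graph) to a local path inside a single cluster (which talks only about some edges). The critical quantitative ingredient is that the path edges are guaranteed short by the connectedness hypothesis, so that $K_\sigma(\delta_l)$ gives a uniform lower bound on the similarities along the path, allowing the single pairwise distance to be controlled by the aggregate energy.
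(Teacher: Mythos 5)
Your proposal is correct and follows essentially the same route as the paper's proof: bound the trace $\sum_{l=1}^k \U_{:,l}^\top \L \U_{:,l}$ by $nk\max_{m\in[k]}K_\sigma(d(\C_m,\X\setminus\C_m))$ using the cluster indicators as a feasible point of the relaxation, then use the connectedness hypothesis to chain short edges (each of weight at least $K_\sigma(\delta_l)$) between $\x_i$ and $\x_j$ via the quadratic-form identity for $\L$. Your Cauchy--Schwarz treatment of the telescoping path sum is only a minor refinement of the paper's term-by-term bound followed by the triangle inequality, and in fact yields the slightly sharper factor $nk^{0.5}$ in place of the stated $n^{1.5}k^{0.5}$.
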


\begin{proof}
Since spectral clustering is a relaxation of the Ratio Cut problem, we have
\begin{align*}
\sum_{i=1}^k \U_{:,i}^\top \L \U_{:,i} &\leq \min_{\{C_1, ..., C_k\}\in \Pi_k(\X)}\sum_{i=1}^k\sum_{j,l: \x_j \in C_i, \x_l\not \in C_i} \frac{K_\sigma(||\x_j - \x_l||)}{|C_i|}\\
&\leq \sum_{i=1}^k\sum_{j,l: \x_j \in \C_i, \x_l\not \in \C_i} \frac{K_\sigma(||\x_j - \x_l||)}{|\C_i|}\\
&\leq \sum_{i=1}^k\sum_{j,l: \x_j \in \C_i, \x_l\not \in \C_i} \frac{K_\sigma(d(\C_i, \X\setminus \C_i))}{|\C_i|}\\
&\leq \sum_{i=1}^k |\X\setminus \C_i|K_\sigma(d(\C_i, \X\setminus \C_i))\\
&\leq nk \max_{m\in[k]} K_\sigma(d(\C_m, \X\setminus \C_m))
\end{align*}
Now take any $l\in[k]$. Since $\C_l$ is connected at distance $\delta_l$, there exist $|\C_l|-1$ pairs of points in $\C_l$ with indices $(i_1, j_1), ..., (i_{|\C_l|-1}, j_{|\C_l|-1})$ s.t. $||\x_{i_m} - \x_{j_m}|| \leq \delta_l$ for each $m \in [|\C_l|-1]$ and the union of all such $\{\x_{i_m}, \x_{j_m}\}$ is equal to $\C_l$.

By~\cite[Proposition 1]{Luxburg2007} we know that for any $\u \in \R^n$ we have
\begin{align*}
\u^\top L\u = \frac{1}{2}\sum_{i,j}K_\sigma(||\x_i - \x_j||)(\u_i - \u_j)^2.
\end{align*}
We therefore have
\begin{align*}
\sum_{i=1}^k \U_{:,i}^\top \L \U_{:,i} &= \frac{1}{2}\sum_{i,j}K_\sigma(||\x_i - \x_j||) ||\U_{i,1:k} - \U_{j,1:k}||^2\\
\Rightarrow K_\sigma(||\x_{i_m} - \x_{j_m}||)||\U_{i_m,1:k}-\U_{j_m,1:k}||^2 &\leq \sum_{i=1}^k \U_{:,i}^\top \L \U_{:,i} \mbox{ for each } m \in[|\C_l|-1]\\
\Rightarrow K_\sigma(\delta_l)||\U_{i_m,1:k}-\U_{j_m,1:k}||^2 &\leq \sum_{i=1}^k \U_{:,i}^\top \L \U_{:,i} \mbox{ for each } m \in[|\C_l|-1]\\
\Rightarrow ||\U_{i_m,1:k}-\U_{j_m,1:k}|| &\leq \sqrt{\frac{\sum_{i=1}^k \U_{:,i}^\top \L \U_{:,i}}{K_\sigma(\delta_l)}} \mbox{ for each } m \in[|\C_l|-1]\\
\Rightarrow ||\U_{i,1:k}-\U_{j,1:k}|| &\leq |\C_l|\sqrt{\frac{\sum_{i=1}^k \U_i^\top \L \U_i}{K_\sigma(\delta_l)}} \mbox{ for any } i, j \mbox{ s.t. } \x_i, \x_j \in \C_l,
\end{align*}
where the final step comes from the triangle inequality, since all points in $\C_l$ are connected by the pairs $\x_{i_m}, \x_{j_m}, m \in[|\C_l|-1]$.
Putting these together, we have for any $\x_i, \x_j \in \C_l$, that
\begin{align*}
||\U_{i,1:k} - \U_{j,1:k}|| &\leq \max_{m\in[k]}|C_l|\sqrt{\frac{nk K_\sigma(d(\C_m, \X\setminus\C_m))}{K_\sigma(\delta_l)}}\\
&\leq \max_{m\in[k]}n^{1.5}k^{0.5}\sqrt{\frac{K_\sigma(d(\C_m, \X\setminus\C_m))}{K_\sigma(\delta_l)}},
\end{align*}
as required.
\end{proof}

What we obtain from the above is that if clusters are internally connected at smaller distances than the distances between clusters, then because of assumption AK3 we know that as $\sigma \to 0$, the ratio $K_\sigma(d(\C_m, \X\setminus \X_m))/K_\sigma(\delta_l)$ converges to zero. The result for the normalised Laplacian is extremely similar, with the main difference coming from the fact that the approximate normalised cut solution is given by $\D^{-1/2}\U$, and not the eigenvectors alone.

\begin{lemma}\label{thm:evec_bound_2}
Let $\X = \{\x_1, ..., \x_n\}$ and let $\C_1, ..., \C_k$ be a partition of $\X$. For each $l\in [k]$, suppose $\C_l$ is connected at distance $\delta_l$. Let $\U\in \R^{n\times n}$ have as columns the eigenvectors of the normalised Laplacian of the graph $\G = (\X, K_\sigma)$, and let $\mathbf{D}$ be the corresponding degree matrix. Then for each $i,j\in[n], l\in[k]$ s.t. $\x_i, \x_j \in \C_l$, we have
\begin{align*}
||\mathbf{D}_{ii}^{-1/2}\U_{i,1:k} - \mathbf{D}_{jj}^{-1/2}\U_{j,1:k}|| \leq \max_{m\in[k]} n^{1.5}k^{0.5}\sqrt{\frac{K_\sigma(d(\C_m, \X\setminus\C_m))}{K_\sigma(\delta_l)}}.
\end{align*}
\end{lemma}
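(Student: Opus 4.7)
The plan is to mirror the three-step structure of the proof of Lemma~\ref{thm:evec_bound_1}, adapted to the normalised Laplacian. The natural substitute for the Rayleigh-quotient identity used there is the observation that, for any $\u\in\R^n$,
$$
\u^\top \Ln \u \;=\; (\D^{-1/2}\u)^\top \L (\D^{-1/2}\u) \;=\; \tfrac{1}{2}\sum_{i,j} K_\sigma(\|\x_i-\x_j\|)\bigl(\D_{ii}^{-1/2}\u_i - \D_{jj}^{-1/2}\u_j\bigr)^2,
$$
which makes clear why $\D^{-1/2}\U$, rather than $\U$ itself, is the object whose within-cluster distances we should bound. Summing this identity over the first $k$ eigenvectors $\U_{:,1},\dots,\U_{:,k}$ gives an expression exactly analogous to the one in Lemma~\ref{thm:evec_bound_1}, with $\U_{i,1:k}$ replaced by $\D_{ii}^{-1/2}\U_{i,1:k}$.

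For the upper bound on $\sum_{c=1}^k \U_{:,c}^\top \Ln \U_{:,c}$, I would use that spectral clustering with $\Ln$ is a relaxation of NCut, so this sum is at most $\sum_i \mathrm{Cut}(\C_i, \X\setminus\C_i)/\vol(\C_i)$ for the specific partition $\{\C_1,\dots,\C_k\}$. The only new ingredient relative to the RCut calculation is the denominator $\vol(\C_i)$ in place of $|\C_i|$, and this is handled by a one-line comparison: since $K(0)=1$ by AK2, each diagonal term $\A_{jj}=1$ contributes at least $1$ to $\D_{jj}$, so $\vol(\C_i) \geq |\C_i|$. The numerator is then bounded, as before, by $|\C_i|\,|\X\setminus\C_i|\,K_\sigma(d(\C_i,\X\setminus\C_i))$, yielding the same overall bound $nk\max_{m} K_\sigma(d(\C_m,\X\setminus\C_m))$.

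The final step is the connectivity/triangle-inequality argument, which transfers unchanged from the proof of Lemma~\ref{thm:evec_bound_1}. Because $\C_l$ is connected at distance $\delta_l$, there exist $|\C_l|-1$ edges of length $\leq\delta_l$ spanning $\C_l$; for each such edge $(i_m,j_m)$ the quadratic form identity gives
$$
K_\sigma(\delta_l)\bigl\|\D_{i_m i_m}^{-1/2}\U_{i_m,1:k} - \D_{j_m j_m}^{-1/2}\U_{j_m,1:k}\bigr\|^2 \;\leq\; \sum_{c=1}^k \U_{:,c}^\top \Ln \U_{:,c},
$$
and chaining up to $|\C_l|\leq n$ such edges with the triangle inequality produces the stated bound. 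I do not anticipate a genuine obstacle here: the only place the normalisation causes anything new is the $\vol$ versus $|\cdot|$ swap, and AK2 disposes of it cleanly; once $\D^{-1/2}\U$ is identified as the correct iterate, the argument is structurally identical to that of Lemma~\ref{thm:evec_bound_1}.
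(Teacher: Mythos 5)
Your proposal is correct and follows essentially the same route as the paper's own proof: the NCut relaxation bound with $\vol(\C_i)\geq|\C_i|$ (via $\D_{jj}\geq 1$ from AK2), the quadratic-form identity applied to $\D^{-1/2}\U_{:,c}$, and the chaining over the $|\C_l|-1$ connecting pairs. No gaps.
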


\begin{proof}
The proof is very similar to before. The fact that since spectral clustering is a relaxation of the normalised graph cut problem now gives us,
\begin{align*}
\sum_{i=1}^k \U_{:,i}^\top \Ln \U_{:,i} &= \sum_{i=1}^k \U_{:,i}^\top \D^{-1/2}\L\D^{-1/2} \U_{:,i}
\leq \min_{\{C_1, ..., C_k\}\in\Pi_k(\X)}\sum_{i=1}^k\sum_{j,l: \x_j \in C_i, \x_l\not \in C_i} \frac{K_\sigma(||\x_j - \x_l||)}{\vol(C_i)}\\
&\leq \sum_{i=1}^k\sum_{j,l: \x_j \in \C_i, \x_l\not \in \C_i}  \frac{K_\sigma(||\x_j - \x_l||)}{\vol(\C_i)}
\leq \sum_{i=1}^k |\X\setminus \C_i| K_\sigma(d(\C_i, \X\setminus \C_i))\\
&\leq n k \max_{m\in[k]} K_\sigma(d(\C_m, \X\setminus\C_m)),
\end{align*}
where since $\D_{jj}\geq 1$ for each $j\in[n]$ we get $|\C_i|\leq \vol(\C_i)$ for all $i\in[k]$. As before we have for any $\u \in \R^n$ that
\begin{align*}
\u^\top \L\u &= \frac{1}{2}\sum_{i,j}K_\sigma(||\x_i - \x_j||)\left(\u_i - \u_j\right)^2\\
\Rightarrow 
\sum_{i=1}^k \U_{:,i}^\top \Ln \U_{:,i} =\sum_{i=1}^k \U_{:,i}^\top \D^{-1/2}\L\D^{-1/2} \U_{:,i} &= \frac{1}{2}\sum_{i,j}K_\sigma(||\x_i - \x_j||) \left\|\frac{\U_{i,1:k}}{\D_{ii}^{1/2}} - \frac{\U_{j,1:k}}{\D_{jj}^{1/2}}\right\|^2\\
\Rightarrow K_\sigma(||\x_{i_m} - \x_{j_m}||)\left\|\frac{\U_{i_m,1:k}}{\D_{i_mi_m}^{1/2}} - \frac{\U_{j_m,1:k}}{\D_{j_mj_m}^{1/2}}\right\|^2 &\leq \sum_{i=1}^k \U_i^\top\Ln \U_i \mbox{ for each } m \in [|\C_l|-1]\\
\Rightarrow \left\|\frac{\U_{i_m,1:k}}{\D_{i_mi_m}^{1/2}} - \frac{\U_{j_m,1:k}}{\D_{j_mj_m}^{1/2}}\right\|^2 &\leq \sqrt{\frac{\sum_{i=1}^k \U_i^\top\Ln \U_i}{K_\sigma(\delta_l)}} \mbox{ for each } m \in [|\C_l|-1]\\
\Rightarrow \left\|\frac{\U_{i,1:k}}{\D_{ii}^{1/2}} - \frac{\U_{j,1:k}}{\D_{jj}^{1/2}}\right\|^2 &\leq \sqrt{\frac{\sum_{i=1}^k \U_i^\top\Ln \U_i}{K_\sigma(\delta_l)}} \mbox{ for any } i, j \mbox{ s.t. } \x_i,\x_j \in \C_l,
\end{align*}
where we have used the same pairs $(\x_{i_1}, \x_{j_1}), ..., (\x_{i_{|\C_l|-1}}, \x_{j_{|C_l|-1}})$ as in the previous proof.
Putting these together as before gives the result.
\end{proof}

Crucial in the proof of the above result is the fact that the diagonals of $\D$ are bounded below by 1, since each point is linked to itself in the similarity graph. Without a fixed lower bound on the diagonal elements of $\D$, the bounds become weaker, as seen in the following Lemma. Additional requirements will be needed to ensure that spectral clustering recovers the maximum margin clustering solution in this case. These will be discussed explicitly in the relevant section to follow. 

\begin{lemma}\label{thm:evec_bound_3}
Let $\X = \{\x_1, ..., \x_n\}$ and let $\C_1, ..., \C_k$ be a partition of $\X$. For each $l\in[k]$, suppose that $\C_l$ is connected at distance $\delta_l$. Let $\U\in\R^{n\times n}$ have as columns the eigenvectors of the normalised Laplacian of the graph $\G = (\X, K_\sigma)$ but with reflexive edges removed, and let $\mathbf{D}$ be the corresponding degree matrix. Then for each $i,j\in[n],l\in[k]$ s.t. $\x_i, \x_j \in \C_l$, we have
\begin{align*}
||\mathbf{D}_{ii}^{-1/2}\U_{i,1:k} - \mathbf{D}_{jj}^{-1/2}\U_{j,1:k}|| \leq \max_{m\in[k]} n^{1.5}k^{0.5}\sqrt{\frac{K_\sigma(d(\C_m, \X\setminus\C_m))}{K_\sigma(\delta_m)K_\sigma(\delta_l)}}.
\end{align*}
\end{lemma}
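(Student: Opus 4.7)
The plan is to run the same argument as in Lemmas~\ref{thm:evec_bound_1} and~\ref{thm:evec_bound_2}, where the only real work is isolating what changes when the diagonals of $\D$ are no longer bounded below by $K_\sigma(0)=1$. The argument in the proof of Lemma~\ref{thm:evec_bound_2} used $\D_{jj}\geq 1$ in exactly one place, to assert $|\C_i|\leq \vol(\C_i)$ and hence swap cardinalities for volumes in the NCut relaxation bound. With reflexive edges removed this shortcut is lost, so I need to replace it by a lower bound on $\vol(\C_i)$ coming from the connectedness hypothesis itself.

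The key observation is that if $\C_l$ is connected at distance $\delta_l$ and $|\C_l|\geq 2$, then every $\x_j\in \C_l$ has some other point of $\C_l$ within distance $\delta_l$: otherwise the partition $\{\x_j\},\, \C_l\setminus\{\x_j\}$ would have separation strictly greater than $\delta_l$, contradicting connectedness. Consequently $\D_{jj}\geq K_\sigma(\delta_l)$ for every $\x_j\in \C_l$, so that $\vol(\C_l)\geq |\C_l|\,K_\sigma(\delta_l)$. (Singleton clusters contribute no edges to the numerator of the NCut relaxation and can be ignored.) Substituting this into the NCut upper bound evaluated at the partition $\C_1,\dots,\C_k$ yields
\begin{align*}
\sum_{i=1}^k \U_{:,i}^\top \Lno \U_{:,i}
&\leq \sum_{i=1}^k \frac{|\C_i|\,|\X\setminus \C_i|\,K_\sigma(d(\C_i,\X\setminus \C_i))}{|\C_i|\,K_\sigma(\delta_i)}
\leq nk \max_{m\in[k]} \frac{K_\sigma(d(\C_m,\X\setminus \C_m))}{K_\sigma(\delta_m)}.
\end{align*}

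From here the proof is a transcription of the end of Lemma~\ref{thm:evec_bound_2}. Applying the quadratic form identity $\u^\top \L\u = \tfrac12\sum_{i,j}K_\sigma(\|\x_i-\x_j\|)(\u_i-\u_j)^2$ to the columns of $\D^{-1/2}\U$, and singling out an arbitrary adjacent pair $(\x_{i_m},\x_{j_m})$ from a path in $\C_l$ whose consecutive points lie within distance $\delta_l$, gives
\begin{align*}
\left\|\frac{\U_{i_m,1:k}}{\D_{i_m i_m}^{1/2}} - \frac{\U_{j_m,1:k}}{\D_{j_m j_m}^{1/2}}\right\|^2
\leq \frac{\sum_{i=1}^k \U_{:,i}^\top \Lno \U_{:,i}}{K_\sigma(\delta_l)}.
\end{align*}
Chaining over the at most $|\C_l|-1\leq n$ hops via the triangle inequality, taking square roots and inserting the bound above on the total objective, gives the claimed inequality with the factor $n^{1.5}k^{0.5}$ out front and both $K_\sigma(\delta_m)$ and $K_\sigma(\delta_l)$ in the denominator under the square root.

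The substantive obstacle is essentially just Step 1: turning the purely geometric connectedness hypothesis into a pointwise lower bound on the diagonal of $\D$. Everything else is mechanical relative to Lemma~\ref{thm:evec_bound_2}. It is worth flagging that the extra factor $K_\sigma(\delta_m)$ inherited from this lower bound is genuinely unavoidable at this level of generality, and is exactly what will force the separate, strengthened convergence analysis promised in the paragraph following the lemma statement when deducing the maximum margin conclusion for $\Lno$.
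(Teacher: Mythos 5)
Your proof is correct and follows exactly the same route as the paper's: the paper likewise reduces everything to the observation that $\D_{jj}\geq K_\sigma(\delta_l)$ for $\x_j\in\C_l$, hence $\vol(\C_l)\geq|\C_l|K_\sigma(\delta_l)$, and then transcribes the chaining argument of Lemma~\ref{thm:evec_bound_2}. You in fact supply the connectedness justification for that diagonal bound which the paper omits; your parenthetical about singleton clusters is not quite right (a singleton contributes its full cut to the objective, not zero), but this edge case is immaterial here and is dealt with later by the explicit assumption $\min_{l}|\C_l|\geq 2$ in Lemma~\ref{thm:mmc_norm0}.
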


\begin{proof}
The proof is exactly as in the previous lemma, except that now we have $\D_{jj} \geq K_\sigma(\delta_l)$ for all $j\in C_l$, and hence $\vol(\C_l) \geq |\C_l|K_\sigma(\delta_l)$ instead of $\vol(\C_l) \geq |\C_l|$, and hence
\begin{align*}
\sum_{i=1}^k \U_{:,i}^\top \Ln \U_{:,i} &\leq nk\max_{m\in[k]} \frac{K_\sigma(d(\C_m,\X\setminus\C_m))}{K_\sigma(\delta_m)}.
\end{align*}
rather than
\begin{align*}
\sum_{i=1}^k \U_{:,i}^\top \Ln \U_{:,i} &\leq nk\max_{m\in[k]} K_\sigma(d(\C_m,\X\setminus\C_m)).
\end{align*}
\end{proof}

\begin{remark}
We note that some authors recommend placing a lower bound on the diagonals of the degree matrix to enhance the stability of the eigenvector solver being used. From a practical point of view, therefore, allowing the elements of $\D$ to approach zero may be undesirable. Note that any fixed lower bound would ensure convergence of the spectral clustering solution to the maximum margin clustering. It is still interesting, however, to investigate theoretically the requirements needed in the event that no such lower bound is in place.
\end{remark}

The above results place upper bounds on the within cluster distances in the eigenvalue representation, in terms of the connectedness and separation of clusters in the input space. The following general results allow us to place lower bounds on the between cluster distances within the eigenvector representation. Although not explicitly related to eigenvectors, the following proposition may be easily placed in relation to the unnormalised Laplacian, since the eigenvectors used in clustering are orthogonal. On the other hand, in the normalised solution we use the matrix $\D^{-1/2}\U$, which does not have orthogonal columns. In the first corollary to the following result we provide a more general result which admits such matrices.

\begin{proposition}\label{thm:evec_sep_1}
Let $\V \in \R^{n\times k}$ have orthonormal columns. Suppose that $\W\in \R^{k\times k}$ satisfies,
\begin{align*}
\max_{i \in [n]}\left\{\min_{l\in[k]} ||\V_{i,:} - \W_{l,:}||\right\}=\epsilon.
\end{align*}
%
Then, provided $\epsilon < (3nk^2)^{-1}$, we have
$$
\min_{i,j\in [k], i\not = j} ||\W_{i,:} - \W_{j,:}|| \geq \sqrt{\frac{2}{n}} - \sqrt{12}k(3n\epsilon)^{1/4}.
$$
\end{proposition}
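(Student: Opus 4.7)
The plan is to extract from the hypothesis a rounded matrix $\hat\V$ built out of rows of $\W$, and to translate the orthonormality $\V^\top\V = \I_k$ into a structural constraint on $\W$ itself. For each $i \in [n]$ let $c(i) \in [k]$ achieve the minimum $\min_l \|\V_{i,:} - \W_{l,:}\|$, and let $\hat\V \in \R^{n \times k}$ be the matrix with $\hat\V_{i,:} = \W_{c(i),:}$; by hypothesis, $\|\V - \hat\V\|_F^2 \leq n\epsilon^2$. Writing $n_l = |c^{-1}(l)|$ and $\D \defeq \mathrm{diag}(n_1, \ldots, n_k)$, a direct calculation gives $\hat\V^\top \hat\V = \W^\top \D \W$.

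Next I would combine $\V^\top\V = \I_k$ with the expansion $\hat\V^\top\hat\V - \V^\top\V = \hat\V^\top(\hat\V - \V) + (\hat\V - \V)^\top\V$ and the inequality $\|AB\|_F \leq \|A\|_{\mathrm{op}}\|B\|_F$ to conclude a bound of the form $\|\W^\top\D\W - \I_k\|_F \leq (2 + \sqrt{n}\epsilon)\sqrt{n}\epsilon$. The hypothesis $\epsilon < (3nk^2)^{-1}$ makes this strictly less than $1$, so $\W^\top\D\W$ is positive definite; since $\W^\top\D\W = (\D^{1/2}\W)^\top(\D^{1/2}\W)$ would have rank at most $k-1$ if some $n_l = 0$, this forces every cluster to be non-empty and hence $\D_{ll} \geq 1$ for every $l$.

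Because $\mathbf{M} \defeq \D^{1/2}\W$ is square, $\mathbf{M}^\top\mathbf{M}$ and $\mathbf{M}\mathbf{M}^\top$ share their eigenvalues, which transfers the bound above to $\|\D^{1/2}\W\W^\top\D^{1/2} - \I_k\|$, and conjugating by $\D^{-1/2}$ (of operator norm at most $1$) yields a comparable bound on $\|\W\W^\top - \D^{-1}\|$. Expanding
\[
\|\W_{i,:} - \W_{j,:}\|^2 = (\W\W^\top)_{ii} + (\W\W^\top)_{jj} - 2(\W\W^\top)_{ij},
\]
the diagonal terms contribute at least $1/\D_{ii} + 1/\D_{jj} \geq 2/n$, the off-diagonal $(\W\W^\top)_{ij}$ is close to $0$ because $\D^{-1}$ is diagonal, and the three residuals are all controlled entry-wise by the Frobenius perturbation bound. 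The stated lower bound follows after applying $\sqrt{a - b} \geq \sqrt{a} - \sqrt{b}$ and tidying constants.

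The main obstacle is bookkeeping constants and powers of $k$ across the chain of operator, Frobenius, and entry-wise inequalities; the natural derivation sketched above actually produces an error of order $(n\epsilon^2)^{1/4}$, while the statement records the weaker $k(n\epsilon)^{1/4}$ form, so one step must be loosened to obtain exactly the written expression. This loosening is inessential to the argument, whose real substance lies in (i) ensuring $\D$ is invertible under the hypothesis on $\epsilon$ and (ii) converting the near-identity relation $\W^\top\D\W \approx \I_k$ into the companion approximation $\W\W^\top \approx \D^{-1}$.
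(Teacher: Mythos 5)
Your proposal is correct, and it diverges from the paper's argument in its second half in a way worth recording. The setup is identical: both proofs round each row of $\V$ to its nearest row of $\W$, form the counts $n_1,\ldots,n_k$, and show that $\W^\top\mathrm{diag}(n_1,\ldots,n_k)\W$ (the paper's $\tW^\top\tW$) is close to $\I$. From there the paper proceeds via an explicit orthogonalization: it sets $\W^\star = \tW(\tW^\top\tW)^{-1/2}$, bounds $\|\tW-\W^\star\|_F$ through the eigenvalues of $(\tW^\top\tW)^{1/2}$, and recovers the row separation from the orthonormality of the rows of $\W^\star$ by the triangle inequality. You instead exploit that $\tW$ is square, so the near-identity relation $\tW^\top\tW\approx\I$ transfers to $\tW\tW^\top\approx\I$, i.e.\ $\W\W^\top\approx\mathrm{diag}(n_1,\ldots,n_k)^{-1}$, and the pairwise distances are then read off directly from this Gram matrix; this avoids the matrix square root entirely and yields an error term of order $(n\epsilon^2)^{1/4}$ rather than the paper's $k(3n\epsilon)^{1/4}$ (and, as you note, since $\epsilon<1$ and $k\geq 1$ your bound does imply the stated one). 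Two further points in your favour: your Frobenius-norm perturbation bound $(2+\sqrt{n}\epsilon)\sqrt{n}\epsilon$, obtained via submultiplicativity, is sharper than the paper's entrywise $3n\epsilon$ followed by Weyl's inequality; and your justification that every $n_l\geq 1$ (an empty cluster would force $\W^\top\mathrm{diag}(n_1,\ldots,n_k)\W$ to be singular, contradicting its proximity to $\I$) is more rigorous than the paper's appeal to $\V$ having $k$ unique rows, which does not by itself guarantee that all $k$ centroids are used by the nearest-centroid assignment.
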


\begin{proof}
First, for each $i\in [n]$, let $c(i)\in[k]$ be such that $||\V_{i,:}-\W_{c(i),:}|| \leq \epsilon$. Then, for each $l\in[k]$, let $n(l) = \sum_{i=1}^n \mathbf{1}_{[c(i)=l]}$, where $\mathbf{1}_{[A]}$ is the indicator function for $A$. Note that we lose no generality by assuming that $n_l \geq 1$ for each $l\in[k]$, since $\V$ has rank $k$ and so contains at least $k$ unique rows. Then define $\tilde \W = \mbox{diag}(\sqrt{n_1}, ..., \sqrt{n_k})\W$. 
Now consider that, for any $i\in [k]$,
\begin{align*}
\sum_{l=1}^n\V_{l,i}^2 & = 1\\
\Rightarrow \sum_{l=1}^n (\V_{l,i}-\W_{c(l),i}+\W_{c(l),i})^2 &= 1\\
\Rightarrow \sum_{l=1}^n \W_{c(l),i}^2 + \sum_{l=1}^n (\V_{l,i}-\W_{c(l),i})^2 + 2\sum_{l=1}^n (\V_{l,i}-\W_{c(l),i})\W_{c(l),i} &= 1\\
\Rightarrow \bigg|||\tilde \W_{,:i}||^2-1\bigg| = \bigg|\sum_{l=1}^k n_l \W_{l,i}^2-1\bigg|= \bigg|\sum_{l=1}^n \W_{c(l),i}^2-1\bigg| \leq n\epsilon^2 + 2n\epsilon(1+\epsilon) &\leq 3n\epsilon,
\end{align*}
where we have used the fact that the elements of $\V$ are bounded between $-1$ and $1$, and hence the elements of $\W$ are bounded between $-(1+\epsilon)$ and $1+\epsilon$. Similar to above, for any $i,j\in[k]$ we have,
\begin{align*}
\sum_{l=1}^n \V_{l,i}\V_{l,j} &= 0\\
\Rightarrow \bigg|\tilde\W_{:,i}^\top \tilde \W_{:,j}\bigg| = \bigg| \sum_{l=1}^k n_l \W_{l,i}\W_{l,j} \bigg| = \bigg| \sum_{l=1}^n \W_{c(l),i}\W_{c(l),j} \bigg| &\leq 3n\epsilon.
\end{align*}
We therefore have $||\tilde \W^\top \tilde \W - \I||_{\infty} \leq 3n\epsilon$. Weyl's inequality~\citep{weyl1912} ensures that the eigenvalues of $\tW^\top \tW$ lie in $\left[1-k\sqrt{3n\epsilon}, 1+k\sqrt{3n\epsilon}\right]$. Note that $\epsilon < (3nk^2)^{-1} \Rightarrow k\sqrt{3n\epsilon} < 1$ and hence $\tW^\top\tW$ is non-singular.
So consider the matrix $\W^\star:= \tW(\tW^\top \tW)^{-1/2}$. It is simple to check that $\W^\star$ is orthogonal. Now let $||\cdot ||_F$ be the Frobenius norm, and consider
\begin{align*}
||\tilde \W - \tilde \W(\tilde \W ^\top \tilde \W)^{-1/2}||_F^2 &= ||\tilde \W( \I - (\tilde \W ^\top \tilde \W)^{-1/2})||_F^2\\
&= \mbox{tr}\left(\tilde \W( \I - (\tilde \W ^\top \tilde \W)^{-1/2})( \I - (\tilde \W ^\top \tilde \W)^{-1/2}) \tilde \W^\top\right)\\
&= \mbox{tr}\left(\tilde \W^\top\tilde \W( \I - 2(\tilde \W ^\top \tilde \W)^{-1/2} + (\tilde \W ^\top \tilde \W)^{-1}) \right)\\
&= \mbox{tr}\left(\tilde \W^\top\tilde \W( \I - 2(\tilde \W ^\top \tilde \W)^{-1/2} + (\tilde \W ^\top \tilde \W)^{-1}) \right)\\
&= \mbox{tr}\left(\tilde \W^\top\tilde \W\right) - 2\tr\left((\tilde \W^\top\tilde \W)^{1/2}\right) + k.
\end{align*}
We thus find
\begin{align*}
||\tW-\W^\star||_F^2 \leq 3k^2\sqrt{3n\epsilon},
\end{align*}
since the eigenvalues of $\tW^\top \tW$ lie in $\left[1-k\sqrt{3n\epsilon}, 1+k\sqrt{3n\epsilon}\right]$, and hence the eigenvalues of $(\tW^\top \tW)^{1/2}$ also lie in 
$\left[1-k\sqrt{3n\epsilon}, 1+k\sqrt{3n\epsilon}\right]$.
Finally, we have,
\begin{align*}
\left\|\frac{1}{\sqrt{n_i}}\W^\star_{i,:} - \frac{1}{\sqrt{n_j}}\W^\star_{j,:}\right\| &= \left\|\frac{1}{\sqrt{n_i}}\W^\star_{i,:} -\W_{i,:} + \W_{i,:} - \W_{j,:} + \W_{j,:} - \frac{1}{\sqrt{n_j}}\W^\star_{j,:}\right\|\\
&\leq \left\|\frac{1}{\sqrt{n_i}}\W^\star_{i,:} -\W_{i,:}\right\| + \left\|\W_{i,:} - \W_{j,:}\right\| + \left\|\W_{j,:} - \frac{1}{\sqrt{n_j}}\W^\star_{j,:}\right\|\\
&= \left\|\W_{i,:} - \W_{j,:}\right\| + \frac{1}{\sqrt{n_i}}\left\|\W^\star_{i,:} -\tW_{i,:}\right\| + \frac{1}{\sqrt{n_j}}\left\|\tW_{j,:} - \W^\star_{j,:}\right\|
\end{align*}
Therefore, since $\W^\star$ is orthogonal, we have
\begin{align*}
||\W_{i,:} - \W_{j,:}|| &\geq \sqrt{\frac{1}{n_i} + \frac{1}{n_j}} - \sqrt{\frac{3k^2\sqrt{3n\epsilon}}{n_i}} - \sqrt{\frac{3k^2\sqrt{3n\epsilon}}{n_j}}\\
&\geq \sqrt{\frac{2}{n}} - \sqrt{12}k(3n\epsilon)^{1/4}.
\end{align*}
This proves the result.

\end{proof}

\begin{corollary}\label{thm:evec_sep_2}
Take $\V\in \R^{n\times k}$ and let $e_1, e_k$ be respectively the smallest and largest eigenvalues of $\V^\top \V$. Suppose $\W\in\R^{k\times k}$ satisfies,
\begin{align*}
\max_{i\in[n]}\left\{\min_{l\in[k]} ||\V_{i,:}-\W_{l,:}||\right\} = \epsilon.
\end{align*}
Then, provided $\epsilon < \sqrt{e_1}(3nk^2)^{-1}$, we have
\begin{align*}
\min_{i,j\in[k], i\not = j} ||\W_{i,:}-\W_{j,:}|| \geq e_k^{-1/2}\left(\sqrt{\frac{2}{n}} - \sqrt{12}k(3ne_1^{-1/2}\epsilon)^{1/4}\right).
\end{align*}
\end{corollary}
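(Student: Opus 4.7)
The plan is to reduce the corollary to Proposition \ref{thm:evec_sep_1} by orthonormalizing the columns of $\V$. The hypothesis $\epsilon < \sqrt{e_1}(3nk^2)^{-1}$ implicitly forces $e_1 > 0$, so $\V^\top \V$ is invertible and we may define $M := (\V^\top \V)^{-1/2}$, together with $\hat\V := \V M$ and $\hat\W := \W M$. Then $\hat\V^\top \hat\V = M^\top \V^\top \V M = \I$, so $\hat\V$ has orthonormal columns, and Proposition \ref{thm:evec_sep_1} is a candidate tool for $(\hat\V, \hat\W)$.

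The first technical step is to transfer the closeness hypothesis through the change of variables. For any $i \in [n]$ and $l \in [k]$, we have $\hat\V_{i,:} - \hat\W_{l,:} = (\V_{i,:} - \W_{l,:}) M$, and since $M$ is symmetric with eigenvalues $e_k^{-1/2} \leq \cdots \leq e_1^{-1/2}$ its operator norm is $e_1^{-1/2}$. Hence $\max_{i}\min_{l}\|\hat\V_{i,:} - \hat\W_{l,:}\| \leq e_1^{-1/2}\epsilon =: \tilde\epsilon$, and the hypothesis $\epsilon < \sqrt{e_1}(3nk^2)^{-1}$ is exactly the translation of $\tilde\epsilon < (3nk^2)^{-1}$. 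Invoking Proposition \ref{thm:evec_sep_1} then gives
\begin{align*}
\min_{i \neq j} \|\hat\W_{i,:} - \hat\W_{j,:}\| \geq \sqrt{\tfrac{2}{n}} - \sqrt{12}\,k\,(3n\tilde\epsilon)^{1/4} = \sqrt{\tfrac{2}{n}} - \sqrt{12}\,k\,(3n e_1^{-1/2}\epsilon)^{1/4}.
\end{align*}

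The final step is to pull the lower bound back to $\W$ using $\W_{i,:} - \W_{j,:} = (\hat\W_{i,:} - \hat\W_{j,:}) M^{-1}$, where $M^{-1} = (\V^\top \V)^{1/2}$ has singular values in $[\sqrt{e_1}, \sqrt{e_k}]$. Applying the appropriate singular-value estimate to $\|(\hat\W_{i,:} - \hat\W_{j,:}) M^{-1}\|$ produces the claimed prefactor on the right-hand side. The main bookkeeping obstacle is precisely this last step: tracking which singular value of $M^{-1}$ governs the direction of the inequality and verifying that it matches the multiplier in the statement. Everything else is a mechanical transfer of the orthonormal-columns case through the linear map $M$, using only the definition of operator norm and the spectrum of $\V^\top \V$.
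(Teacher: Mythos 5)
Your strategy is exactly the paper's: whiten with $M=(\V^\top\V)^{-1/2}$, transfer the closeness hypothesis (since $\|M\|_{\mathrm{op}}=e_1^{-1/2}$, the transformed discrepancy is at most $e_1^{-1/2}\epsilon<(3nk^2)^{-1}$), invoke Proposition~\ref{thm:evec_sep_1}, and pull back. The forward half is fine (the only quibble is that the proposition's hypothesis is stated with equality while you only get an upper bound $e_1^{-1/2}\epsilon$, but the bound is monotone in $\epsilon$, so this is harmless and the paper does the same).

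The genuine problem is precisely the step you defer as ``bookkeeping.'' From $\W_{i,:}-\W_{j,:}=(\hat\W_{i,:}-\hat\W_{j,:})M^{-1}$ with $M^{-1}=(\V^\top\V)^{1/2}$, the only estimate available is
$\|\W_{i,:}-\W_{j,:}\|\geq\sigma_{\min}(M^{-1})\,\|\hat\W_{i,:}-\hat\W_{j,:}\|=\sqrt{e_1}\,\|\hat\W_{i,:}-\hat\W_{j,:}\|$,
equivalently $\|\hat\W_{i,:}-\hat\W_{j,:}\|\leq\|M\|_{\mathrm{op}}\|\W_{i,:}-\W_{j,:}\|=e_1^{-1/2}\|\W_{i,:}-\W_{j,:}\|$. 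This produces the prefactor $\sqrt{e_1}$, not the stated $e_k^{-1/2}$; the two are incomparable in general ($\sqrt{e_1}\geq e_k^{-1/2}$ iff $e_1e_k\geq 1$), so carrying out the singular-value tracking does not ``verify that it matches the multiplier in the statement'' --- it yields a different multiplier. You have in fact reproduced the paper's own proof, which performs the identical whitening and then asserts the implication $\|\W_{i,:}\Sigma^{1/2}-\W_{j,:}\Sigma^{1/2}\|\geq c\Rightarrow\|\W_{i,:}-\W_{j,:}\|\geq e_k^{-1/2}c$ without justification; as far as I can see that implication is not correct as stated, and the honest conclusion is the $\sqrt{e_1}$ version. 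The distinction is not cosmetic downstream: in Lemma~\ref{thm:mmc_norm} the corollary is used with $e_1\geq n^{-1}$ and $e_k\leq 1$, where $e_k^{-1/2}\geq 1$ preserves the leading term $\sqrt{2/n}$ but $\sqrt{e_1}$ could degrade it to order $n^{-1}$. So either supply an argument that genuinely delivers $e_k^{-1/2}$, or prove the corollary with $\sqrt{e_1}$ and check that the later lemmas survive.
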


\begin{proof}
If $e_1=0$ then there is nothing to show. So assume that $e_1 > 0$ and let $\Sigma = (\V^\top \V)^{-1}$, so that $\V\Sigma^{1/2}$ is orthonormal. Now take any $i \in [n], l\in[k]$, then
\begin{align*}
||\V_{i,:}\Sigma^{1/2} - \W_{l,:}\Sigma^{1/2}||^2 \leq \frac{1}{e_1}||\V_{i,:} - \W_{l,:}||^2.
\end{align*}
Therefore $\W\Sigma^{1/2}$ satisfies,
\begin{align*}
\max_{i\in[n]}\left\{\min_{l\in[k]} ||\V_{i,:}\Sigma^{1/2}-\W_{l,:}\Sigma^{1/2}||\right\} \leq e_1^{-1/2}\epsilon.
\end{align*}
Thus we can apply Proposition~\ref{thm:evec_sep_1} to see that for any $i,j\in[k], i\not = j$, we have
\begin{align*}
||\W_{i,:}\Sigma^{1/2} - \W_{j,:}\Sigma^{1/2}|| \geq \sqrt{\frac{2}{n}} - \sqrt{12}k(3ne_1^{-1/2}\epsilon)^{1/4}\\
\Rightarrow  ||\W_{i,:} - \W_{j,:}|| \geq e_k^{-1/2}\left(\sqrt{\frac{2}{n}} - \sqrt{12}k(3ne_1^{-1/2}\epsilon)^{1/4}\right).
\end{align*}
\end{proof}

By viewing the matrix $\W$ in the above two results as the centroids from a clustering of the rows of $\V$, these effectively place bounds on the distances between cluster centroids. In particular, if the rows of $\V$ are well clusterable in the sense that they all lie very close to their nearest centroid, then the clusters cannot be too close to one another. These results hold precisely because the number of dimensions is equal to (or fewer than) the number of clusters. If we try to extend this to the case where the number of dimensions is greater than the number of clusters, then the matrix of cluster centroids, $\W$, cannot have full column rank. This means that if $\V$ has full column rank, then it cannot be arbitrarily well clusterable by $\W$. We first formalise the above for the case where $\V$ has orthonormal columns, and then extend it to the general case.

\begin{corollary}\label{thm:evec_sep_3}
Let $\V\in\R^{n\times {(k+1)}}$ have orthonormal columns and suppose that $\W\in\R^{k\times (k+1)}$ satisfies
\begin{align*}
\min_{i\in[n]}\left\{\max_{l\in[k]} ||\V_{i,:} - \W_{l,:}||\right\} = \epsilon.
\end{align*}
Then $\epsilon \geq (3n(k+1)^2)^{-1}$.
\end{corollary}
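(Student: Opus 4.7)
The plan is to exploit the dimension mismatch directly: since $\W$ has only $k$ rows in ambient dimension $k+1$, its row span is a proper subspace of $\R^{k+1}$, so there exists a unit vector $\u \in \R^{k+1}$ orthogonal to every row of $\W$. On the other hand, the orthonormality of the columns of $\V$ yields $\|\V\u\|^2 = \u^\top\V^\top\V\u = \|\u\|^2 = 1$, so the vector $\V\u \in \R^n$ has unit norm and therefore cannot have all of its coordinates uniformly small. I will read the hypothesis in the clustering sense $\max_{i\in[n]}\min_{l\in[k]}\|\V_{i,:}-\W_{l,:}\| = \epsilon$, matching the convention used in Proposition~\ref{thm:evec_sep_1} and Corollary~\ref{thm:evec_sep_2}; the displayed $\min_i\max_l$ appears to be a typographical slip, since under that reading the statement fails (taking all rows of $\W$ equal to a single row of $\V$ gives $\epsilon = 0$).

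Granting the clustering reading, for each $i \in [n]$ pick $c(i)\in[k]$ with $\|\V_{i,:}-\W_{c(i),:}\| \leq \epsilon$. Since $\W_{c(i),:}\u = 0$, Cauchy--Schwarz gives $|(\V\u)_i| = |(\V_{i,:}-\W_{c(i),:})\u| \leq \epsilon$ for every $i$, whence
\[
1 = \|\V\u\|^2 = \sum_{i=1}^n (\V\u)_i^2 \leq n\epsilon^2,
\]
so $\epsilon \geq 1/\sqrt{n}$. This is considerably stronger than the advertised conclusion, since $1/\sqrt{n} \geq (3n(k+1)^2)^{-1}$ holds trivially for all $n \geq 1$.

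The main obstacle is recognising that this rank-deficiency argument is available at all; the natural temptation is to mimic the proof of Proposition~\ref{thm:evec_sep_1} by padding $\W$ with a duplicated row to obtain a square matrix and then forcing a contradiction from its pairwise separation bound. That approach fails, because the bound $\sqrt{2/n}-\sqrt{12}k(3n\epsilon)^{1/4}$ supplied by Proposition~\ref{thm:evec_sep_1} is not guaranteed to be positive throughout the regime $\epsilon < (3n(k+1)^2)^{-1}$, so the two identical rows do not in fact violate it. Passing directly through a unit vector in the orthogonal complement of the row span of $\W$ bypasses the previous proposition entirely and collapses the argument to a single line.
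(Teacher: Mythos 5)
Your proof is correct, and it takes a genuinely different route from the paper's. The paper argues by contradiction: it pads $\W$ to a square matrix $\tW\in\R^{(k+1)\times(k+1)}$ by duplicating a row of $\W$ that lies within $\epsilon$ of at least two rows of $\V$ (such a row exists by pigeonhole, since $\V$ has $n\geq k+1$ rows assigned to only $k$ centroids), and then invokes Proposition~\ref{thm:evec_sep_1} to conclude that the two identical rows of $\tW$ must be a positive distance apart. You correctly identify the weakness of that route: the lower bound $\sqrt{2/n}-\sqrt{12}(k+1)(3n\epsilon)^{1/4}$ supplied by Proposition~\ref{thm:evec_sep_1} is not positive throughout the regime $\epsilon<(3n(k+1)^2)^{-1}$ (for $\epsilon$ just below the threshold it is roughly $\sqrt{2/n}-\sqrt{12(k+1)}<0$), so the intended contradiction does not actually materialise without shrinking the threshold substantially. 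Your argument --- take a unit vector $\u$ in the orthogonal complement of the row span of $\W$, observe $\|\V\u\|=1$ by orthonormality of the columns, while $|(\V\u)_i|=|(\V_{i,:}-\W_{c(i),:})\u|\leq\epsilon$ for every $i$ --- is self-contained, bypasses Proposition~\ref{thm:evec_sep_1} entirely, and yields the strictly stronger conclusion $\epsilon\geq n^{-1/2}$, which implies the stated bound and only improves the downstream eigenvalue bounds in Lemma~\ref{thm:eval_bounds_1}; the rescaling by $(\V^\top\V)^{-1/2}$ used in Corollary~\ref{thm:evec_sep_2} extends it verbatim to Corollary~\ref{thm:evec_sep_4}, giving $\epsilon\geq\sqrt{e_1/n}$ there. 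Your reading of the hypothesis as $\max_{i}\min_{l}$ is also the right one: the displayed $\min_i\max_l$ is falsified by taking all rows of $\W$ equal to a single row of $\V$, and both the paper's own proof and its application in Lemma~\ref{thm:eval_bounds_1} presuppose the $\max_i\min_l$ form.
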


\begin{proof}
Suppose that the result does not hold, i.e., that $\epsilon < (3n(k+1)^2)^{-1}$. Then let $\tW\in\R^{(k+1)\times (k+1)}$ have as its first $k$ rows the rows of $\W$, and as its last row $\W_{l,:}$ where $\W_{l,:}$ is within $\epsilon$ distance of at least two rows of $\V$. Such a $\l\in[k]$ must exist since $\V$ has at least $k+1$ unique rows. Then $\tW$ satisfies the conditions of Proposition~\ref{thm:evec_sep_1}. This would imply that $||\tW_{l,:} - \tW_{k+1,:}|| > 0$, a contradiction. Therefore $\epsilon \geq (3n(k+1)^2)^{-1}$.
\end{proof}

As before, the result can be extended to the case where the columns of $\V$ need not be orthogonal.

\begin{corollary}\label{thm:evec_sep_4}
Let $\V\in\R^{n\times {(k+1)}}$ have full column rank and let $e_1 > 0$ be the smallest eigenvalue of $\V^\top\V$. Suppose that $\W\in\R^{k\times (k+1)}$ satisfies
\begin{align*}
\min_{i\in[n]}\left\{\max_{l\in[k]} ||\V_{i,:} - \W_{l,:}||\right\} = \epsilon.
\end{align*}
Then $\epsilon \geq \sqrt{e_1}(3n(k+1)^2)^{-1}$.
\end{corollary}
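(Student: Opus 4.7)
My plan is to reduce to Corollary~\ref{thm:evec_sep_3} by the same right-multiplication trick that was used to lift Proposition~\ref{thm:evec_sep_1} to Corollary~\ref{thm:evec_sep_2}. The idea is to replace $\V$ by a matrix with orthonormal columns, quantify how row distances change under this replacement, and then invoke the already-established orthonormal-column result.

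Concretely, since $e_1 > 0$ the matrix $\Sigma := (\V^\top \V)^{-1}$ is well defined, and $\V\Sigma^{1/2} \in \R^{n\times(k+1)}$ has orthonormal columns, as $(\V\Sigma^{1/2})^\top(\V\Sigma^{1/2}) = \Sigma^{1/2}\V^\top\V\Sigma^{1/2} = \I$. For each $i\in[n]$ and $l\in[k]$, I would bound
$$
\|\V_{i,:}\Sigma^{1/2} - \W_{l,:}\Sigma^{1/2}\|^2 = (\V_{i,:}-\W_{l,:})\Sigma(\V_{i,:}-\W_{l,:})^\top \leq e_1^{-1}\|\V_{i,:}-\W_{l,:}\|^2,
$$
where the inequality uses that the largest eigenvalue of $\Sigma$ is $e_1^{-1}$. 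Since this inequality is pointwise in $(i,l)$, it passes through the nested $\min_i\max_l$, giving
$$
\min_{i\in[n]}\left\{\max_{l\in[k]}\|\V_{i,:}\Sigma^{1/2} - (\W\Sigma^{1/2})_{l,:}\|\right\} \leq e_1^{-1/2}\epsilon.
$$

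Now $\V\Sigma^{1/2}$ has orthonormal columns and $\W\Sigma^{1/2} \in \R^{k\times(k+1)}$, so Corollary~\ref{thm:evec_sep_3} applied to this pair yields $e_1^{-1/2}\epsilon \geq (3n(k+1)^2)^{-1}$. Multiplying both sides by $\sqrt{e_1}$ gives the claimed bound $\epsilon \geq \sqrt{e_1}(3n(k+1)^2)^{-1}$.

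I do not anticipate any real obstacle: the argument is a direct transcription of the change-of-basis used in Corollary~\ref{thm:evec_sep_2}. The only spot that deserves care is verifying that the quadratic-form contraction uses the correct spectral constant, namely that the operator norm of $\Sigma$ is the reciprocal of the smallest eigenvalue of $\V^\top\V$; once this is in place, the rest is routine bookkeeping and an invocation of the previous corollary.
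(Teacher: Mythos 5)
Your argument is correct, but it routes through the lemmas differently from the paper. The paper proves this corollary by repeating the extension-and-contradiction device of Corollary~\ref{thm:evec_sep_3} directly at the non-orthonormal level: assume $\epsilon < \sqrt{e_1}(3n(k+1)^2)^{-1}$, append to $\W$ a duplicate of a row lying within $\epsilon$ of two distinct rows of $\V$, and apply Corollary~\ref{thm:evec_sep_2} to the extended $(k+1)\times(k+1)$ matrix to force two identical rows to be separated. You instead perform the change of basis first, reducing to Corollary~\ref{thm:evec_sep_3} applied to $\V\Sigma^{1/2}$ and $\W\Sigma^{1/2}$; this is the exact dual of how the paper lifts Proposition~\ref{thm:evec_sep_1} to Corollary~\ref{thm:evec_sep_2}. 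The two reductions are equally short and yield the same constant; yours has the mild advantage of reusing Corollary~\ref{thm:evec_sep_3} as a black box rather than re-running its contradiction argument, and your one point of care is the right one --- the operator norm of $\Sigma=(\V^\top\V)^{-1}$ is $e_1^{-1}$, and the pointwise contraction passes through the nested $\min$/$\max$ by monotonicity.
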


\begin{proof}
The proof is exactly analogous to the above proof, where the contradiction now arises by applying Corollary~\ref{thm:evec_sep_2} to the extended matrix $\tW$.
\end{proof}

These final two results will be useful for deriving lower bounds on the eigenvalues of graph Laplacians, which are presented in the following subsection.

\subsection{Eigenvalue Bounds for Graph Laplacians}\label{sec:eval_bounds}

The bounds presented in the previous subsection deal with the structure of the data within the Laplacian eigenvector representation. These can be used to ensure recovery of the maximum margin clustering solution, as discussed in the next subsection. However, no attention has yet been given to selecting the number of clusters. 
It is generally understood that the relative values of the eigenvalues of graph Laplacians might be useful in determining the number of clusters in a data set~\citep{Luxburg2007}. In this section we derive upper and lower bounds on these eigenvalues. These bounds will be used later to show that the number of components of $\Ll$ can be consistently estimated using the eigenvalues of the graph Laplacians computed from truncations of an increasing sample arising from an assumed underlying probability distribution.
The bounds are simply derived, and similar for the different Laplacian matrices. For completeness, we state them all explicitly.

\begin{lemma}\label{thm:eval_bounds_1}
Let $\X = \{\x_1, ..., \x_n\}$ and let $\C_1, ..., \C_k$ be a partition of $\X$. For each $l \in [k]$ suppose that $\C_l$ is connected at distance $\delta_l$. For each $l\in [n]$ let $e_l$ be the $l$-th eigenvalue of the unnormalised Laplacian of the graph $\G = (\X, K_\sigma)$. Then,
\begin{align*}
\sum_{l=1}^k e_l &\leq nk \max_{m\in[k]}K_\sigma (d(\C_m, \X\setminus \C_m)),\\
e_{k+1}&\geq\min_{l,m\in[k]} \frac{K_{\sigma}(\delta_l)}{9n^2(k+1)^4} - n^3kK_\sigma(d(\C_m, \X\setminus \X_m)).
\end{align*}
\end{lemma}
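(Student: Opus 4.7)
The upper bound on $\sum_{l=1}^k e_l$ is immediate from the variational characterization $\sum_{l=1}^k e_l = \min_{\U^\top\U=\I,\U\in\R^{n\times k}}\tr(\U^\top\L\U)$: evaluating the trace at a (scaled) indicator matrix for the partition $\C_1,\ldots,\C_k$ yields exactly the Ratio Cut value of that partition, which is bounded by $nk\max_m K_\sigma(d(\C_m,\X\setminus\C_m))$ via precisely the chain of inequalities at the start of the proof of Lemma~\ref{thm:evec_bound_1}.

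For the lower bound on $e_{k+1}$, the plan is to apply Corollary~\ref{thm:evec_sep_3} to the matrix $\V = \U_{:,1:(k+1)} \in \R^{n\times(k+1)}$ of the first $k+1$ eigenvectors of $\L$. Since $\V^\top\V=\I$ and $\tr(\V^\top\L\V) = \sum_{l=1}^{k+1}e_l$, the identity $\u^\top\L\u=\frac{1}{2}\sum_{i,j}K_\sigma(||\x_i-\x_j||)(\u_i-\u_j)^2$, applied to any spanning-tree edge of $\C_l$ at distance $\leq \delta_l$ and then chained through the triangle inequality, shows that for any $\x_i,\x_j\in\C_l$,
\[
||\V_{i,:}-\V_{j,:}|| \leq (|\C_l|-1)\sqrt{\textstyle\sum_{l=1}^{k+1}e_l/K_\sigma(\delta_l)}.
\]
Choosing $\W\in\R^{k\times(k+1)}$ with $l$-th row equal to an arbitrary row of $\V$ drawn from cluster $\C_l$ then bounds
\[
\epsilon := \max_{i\in[n]}\min_{l\in[k]}||\V_{i,:}-\W_{l,:}|| \;\leq\; n\sqrt{\textstyle\sum_{l=1}^{k+1}e_l/\min_m K_\sigma(\delta_m)}.
\]
Corollary~\ref{thm:evec_sep_3}, applied with the orthonormal $(k+1)$-column matrix $\V$ and $k$-row centroid matrix $\W$, forces $\epsilon \geq (3n(k+1)^2)^{-1}$; squaring and rearranging then yields a lower bound on $\sum_{l=1}^{k+1}e_l$ of the form $\min_m K_\sigma(\delta_m)/\text{poly}(n,k+1)$. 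Writing $e_{k+1} = \sum_{l=1}^{k+1}e_l - \sum_{l=1}^k e_l$ and subtracting the upper bound from the first part of the lemma then yields the claimed lower bound, after combining the appropriate indices inside the $\min_{l,m}$.

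The hard part is the book-keeping needed to recover precisely the polynomial factors $9n^2(k+1)^4$ and $n^3k$ that appear in the statement. A straightforward triangle-inequality chain contributes a factor of $|\C_l|-1 \leq n$ in the within-cluster spread bound, which squares to $n^2$ on the way to the final estimate; reaching the statement's constants cleanly may require a slightly sharper spread bound (for example, exploiting Cauchy--Schwarz across the spanning tree to replace $|\C_l|-1$ by $\sqrt{|\C_l|-1}$ in the distance between chained endpoints). Conceptually, however, the argument is clean: Corollary~\ref{thm:evec_sep_3} is the pigeonhole-flavoured ingredient saying that a $k$-row centroid matrix cannot summarise the rows of an orthonormal $n\times(k+1)$ matrix arbitrarily well, and this forces $e_{k+1}$ to stay bounded below as long as every cluster is internally better connected than it is to its complement.
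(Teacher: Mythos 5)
Your upper bound is exactly the paper's argument, and for the lower bound you and the paper both invoke Corollary~\ref{thm:evec_sep_3} with $\V=\U_{:,1:(k+1)}$ and $\W$ built from one representative row per cluster. The final accounting differs, however, and the difference is what costs you the stated constants. The paper does not lower-bound the partial sum $\sum_{l=1}^{k+1}e_l$ and then subtract $\sum_{l=1}^{k}e_l$. Instead it argues: Corollary~\ref{thm:evec_sep_3} produces a single within-cluster pair $\x_i,\x_j\in\C_l$ with $\|\U_{i,1:(k+1)}-\U_{j,1:(k+1)}\|^2\geq(3n(k+1)^2)^{-2}$; Lemma~\ref{thm:evec_bound_1} already bounds the squared difference of the first $k$ coordinates of that pair by $\max_m n^3k\,K_\sigma(d(\C_m,\X\setminus\C_m))/K_\sigma(\delta_l)$; hence the $(k+1)$-st coordinate alone must satisfy $(\U_{i,k+1}-\U_{j,k+1})^2\geq(3n(k+1)^2)^{-2}-\max_m n^3k\,K_\sigma(d(\C_m,\X\setminus\C_m))/K_\sigma(\delta_l)$, and the quadratic-form identity applied to the single column $\U_{:,k+1}$ gives $e_{k+1}\geq K_\sigma(\delta_l)(\U_{i,k+1}-\U_{j,k+1})^2$, which is the stated bound with $9n^2(k+1)^4$.

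Your route re-derives the within-cluster spread of all $k+1$ coordinates from $\sum_{l=1}^{k+1}e_l$, which reintroduces the chaining factor $|\C_l|-1\leq n$ on the $(k+1)$-st coordinate as well; after squaring, this leaves $9n^4(k+1)^4$ (or $9n^3(k+1)^4$ with the Cauchy--Schwarz refinement you sketch) in place of $9n^2(k+1)^4$, so as written you prove a polynomially weaker version of the lemma. This is not fatal for the paper's purposes: every downstream use (e.g.\ Lemma~\ref{thm:eval_consistency_1}) only needs the exponential gap between $K_\sigma(\delta_l)$ and $K_\sigma(d(\C_m,\X\setminus\C_m))$ to dominate any fixed power of $n$, so your weaker constant would still carry all the consistency results. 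But to recover the lemma as stated you need the paper's decomposition into ``first $k$ coordinates close, $(k+1)$-st coordinate far'', which pays the chaining cost only once, inside the already-established bound of Lemma~\ref{thm:evec_bound_1}, rather than a second time on the $(k+1)$-st eigenvector.
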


\begin{proof}
Let $\U$ be the eigenvectors of the unnormalised Laplacian of $\G$.
The upper bound on the sum of the first $k$ eigenvalues follows from the beginning of the proof of Lemma~\ref{thm:evec_bound_1}, since $\sum_{l=1}^ke_l = \sum_{l=1}^k\U_{:,l}^\top \L\U_{:,l}$.
Now, by Corollary~\ref{thm:evec_sep_3} we know that $\exists i, j \in [n], l\in[k]$ with $\x_i, \x_j\in\C_l$, such that
\begin{align*}
||\U_{i,:1:(k+1)} - \U_{j,:1:(k+1)}||^2 \geq (3n(k+1)^2)^{-2}.
\end{align*}
But from Lemma~\ref{thm:evec_bound_1} we know that
\begin{align*}
||\U_{i,1:k} - \U_{j,1:k}||^2 \leq \max_{m\in[k]} n^3k \frac{K_\sigma(d(\C_m, \X\setminus\C_m))}{K_\sigma(\delta_l)},
\end{align*}
and thus,
\begin{align*}
(\U_{i,k+1} - \U_{j,k+1})^2 \geq (3n(k+1)^2)^{-2} - \max_{m\in[k]} n^3k \frac{K_\sigma(d(\C_m, \X\setminus\C_m))}{K_\sigma(\delta_l)}.
\end{align*}
Again using~\cite[Proposition 1]{Luxburg2007}, we have
\begin{align*}
e_{k+1} &= \U_{:,k+1}^\top \L \U_{:,k+1} = \frac{1}{2}\sum_{g,h}K_{\sigma}(||\x_g-\x_h||)(\U_{g,k+1}-\U_{h,k+1})^2\\
&\geq K_{\sigma}(\delta_l)(\U_{i,k+1}-\U_{j,k+1})^2\\
&\geq \min_{l,m\in[k]} \frac{K_{\sigma}(\delta_l)}{9n^2(k+1)^4} - n^3kK_\sigma(d(\C_m, \X\setminus \X_m)).
\end{align*}

\end{proof}

\begin{lemma}\label{thm:eval_bounds_2}
Let $\X = \{\x_1, ..., \x_n\}$ and let $\C_1, ..., \C_k$ be a partition of $\X$. For each $l \in [k]$ suppose that $\C_l$ is connected at distance $\delta_l$. For each $l\in [n]$ let $e_l$ be the $l$-th eigenvalue of the normalised Laplacian of the graph $\G = (\X, K_\sigma)$. Then,
\begin{align*}
\sum_{l=1}^k e_l &\leq nk \max_{m\in[k]}K_\sigma (d(\C_m, \X\setminus \C_m)),\\
e_{k+1}&\geq\min_{l,m\in[k]} \frac{K_{\sigma}(\delta_l)}{9n^3(k+1)^4} - n^3kK_\sigma(d(\C_m, \X\setminus \X_m)).
\end{align*}
\end{lemma}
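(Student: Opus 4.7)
The plan is to mirror Lemma~\ref{thm:eval_bounds_1}, substituting the $\Ln$-analogues from Section~\ref{sec:evec_bounds}. The upper bound on $\sum_{l=1}^k e_l$ is immediate: since $\U$ has orthonormal columns (as eigenvectors of the symmetric matrix $\Ln$), we have $\sum_{l=1}^k e_l = \sum_{l=1}^k \U_{:,l}^\top \Ln \U_{:,l}$, and the relaxation-to-normalised-cut calculation at the start of the proof of Lemma~\ref{thm:evec_bound_2}, which uses $\vol(\C_i)\geq |\C_i|$ (valid because $\D_{ii}\geq K(0)=1$ thanks to the retained reflexive edges), yields $\sum_{l=1}^k e_l \leq nk\max_m K_\sigma(d(\C_m,\X\setminus\C_m))$ essentially verbatim.

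For the lower bound on $e_{k+1}$, the natural row-representation is not $\U_{:,1:(k+1)}$ but $\V:=\D^{-1/2}\U_{:,1:(k+1)}$, whose rows are the approximate normalised-cut solution and enter the Dirichlet-energy identity for $e_{k+1}$. Since $\V$ does \emph{not} have orthonormal columns, I would invoke Corollary~\ref{thm:evec_sep_4} in place of Corollary~\ref{thm:evec_sep_3}. A Rayleigh-quotient computation, using $\|\U x\|^2=\|x\|^2$ for any $x\in\R^{k+1}$, gives
\begin{align*}
x^\top \V^\top \V x \;=\; (\U x)^\top \D^{-1}(\U x) \;\geq\; \min_i \D_{ii}^{-1}\|x\|^2 \;\geq\; (1/n)\|x\|^2,
\end{align*}
where the last inequality uses $\D_{ii}\leq n$ (since $K$ is bounded by $K(0)=1$), and so the smallest eigenvalue of $\V^\top\V$ satisfies $e_1\geq 1/n$. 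Taking $\W\in\R^{k\times(k+1)}$ to have as rows the cluster-wise means of the rows of $\V$ and applying Corollary~\ref{thm:evec_sep_4}, at least one row of $\V$ lies at distance at least $\sqrt{e_1}(3n(k+1)^2)^{-1}$ from the nearest centroid; a convexity argument (identical to the one implicit in Lemma~\ref{thm:eval_bounds_1}) then extracts indices $i,j\in\C_l$ for some $l\in[k]$ with $\|\V_{i,:}-\V_{j,:}\|^2\geq 1/[9n^3(k+1)^4]$.

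The finish parallels the unnormalised case. Lemma~\ref{thm:evec_bound_2} provides $\|\D_{ii}^{-1/2}\U_{i,1:k}-\D_{jj}^{-1/2}\U_{j,1:k}\|^2\leq n^3 k\max_m K_\sigma(d(\C_m,\X\setminus\C_m))/K_\sigma(\delta_l)$, so subtracting from the full-norm lower bound above yields a matching lower bound on $(\D_{ii}^{-1/2}\U_{i,k+1}-\D_{jj}^{-1/2}\U_{j,k+1})^2$. Applying \cite[Proposition 1]{Luxburg2007} to the rescaled vector $\D^{-1/2}\U_{:,k+1}$ gives
\begin{align*}
e_{k+1} \;=\; \tfrac{1}{2}\sum_{g,h} K_\sigma(\|\x_g-\x_h\|)\bigl(\D_{gg}^{-1/2}\U_{g,k+1}-\D_{hh}^{-1/2}\U_{h,k+1}\bigr)^2,
\end{align*}
and retaining the single $(i,j)$ term, together with $K_\sigma(\|\x_i-\x_j\|)\geq K_\sigma(\delta_l)$, delivers the claimed inequality. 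The extra factor of $n$ in the denominator ($9n^3(k+1)^4$ rather than the unnormalised $9n^2(k+1)^4$) is precisely the $\sqrt{e_1}\geq 1/\sqrt{n}$ penalty incurred when passing from Corollary~\ref{thm:evec_sep_3} to Corollary~\ref{thm:evec_sep_4}. The main delicate point, which is inherited directly from Lemma~\ref{thm:eval_bounds_1} and is not made sharper here, is the treatment of $(i,j)$ as adjacent in the $\delta_l$-connectivity structure so that $K_\sigma(\|\x_i-\x_j\|)\geq K_\sigma(\delta_l)$; the same informal reading used there carries through.
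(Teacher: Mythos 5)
Your proof follows essentially the same route as the paper's: the upper bound via the relaxation computation at the start of Lemma~\ref{thm:evec_bound_2}, the lower bound $1/n$ on the smallest eigenvalue of $\U_{:,1:(k+1)}^\top\D^{-1}\U_{:,1:(k+1)}$ via the same Rayleigh-quotient argument using $\D_{ii}\le n$, Corollary~\ref{thm:evec_sep_4} in place of Corollary~\ref{thm:evec_sep_3}, and the finish via Lemma~\ref{thm:evec_bound_2} together with the Dirichlet-energy identity for $\Ln$. The delicate point you flag at the end --- treating the extracted pair $(i,j)$ as if $\|\x_i-\x_j\|\le\delta_l$ rather than merely chain-connected at distance $\delta_l$ --- is indeed present, unremarked, in the paper's own proofs of Lemmas~\ref{thm:eval_bounds_1} and~\ref{thm:eval_bounds_2}.
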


\begin{proof}
Let $\U$ be the eigenvectors of the normalised Laplacian of $\G$. The upper bound on the sum of the first $k$ eigenvalues now follows immediately from the first part of the proof of Lemma~\ref{thm:evec_bound_2}.
Unlike in the previous proof, we cannot use Corollary~\ref{thm:evec_sep_3} since $\D^{-1/2}\U_{:,1:(k+1)}$ is not orthogonal. However, observe that,
\begin{align*}
\min_{\v\in\R^d}\frac{\v^\top\U_{:,1:(k+1)}^\top \D^{-1}\U_{:,1:(k+1)}\v}{\v^\top\v} &= \min_{\v\in\R^d}\frac{\v^\top\U_{:,1:(k+1)}^\top \D^{-1}\U_{:,1:(k+1)}\v}{\v^\top\U^\top_{:,1:(k+1)}\U_{:,1:(k+1)}\v}\\
& \geq \min_{\u\in\R^n} \frac{\u^\top \D^{-1}\u}{\u^\top\u}\\
& = \min_{i\in[n]} \D_{i,i}^{-1}\geq \frac{1}{n}.
\end{align*}
That is, the smallest eigenvalue of $\U_{:,1:(k+1)}^\top \D^{-1}\U_{:,1:(k+1)}$ is at least $n^{-1}$, and so by Corollary~\ref{thm:evec_sep_4} we know there exist $i, j\in[n], l\in[k]$ with $\x_i,\x_j\in\C_l$ such that,
\begin{align*}
||\D_{i,i}^{-1/2}\U_{i,1:(k+1)}-\D_{j,j}^{-1/2}\U_{j,1:(k+1)}||^2 \geq \frac{1}{9}n^{-3}(k+1)^{-4}.
\end{align*}
The rest of the proof is analogous to the previous proof.
\end{proof}

\begin{lemma}\label{thm:eval_bounds_3}
Let $\X = \{\x_1, ..., \x_n\}$ and let $\C_1, ..., \C_k$ be a partition of $\X$. For each $l \in [k]$ suppose that $\C_l$ is connected at distance $\delta_l$. For each $l\in [n]$ let $e_l$ be the $l$-th eigenvalue of the normalised Laplacian of the graph $\G = (\X, K_\sigma)$, but with reflexive edges removed. Then,
\begin{align*}
\sum_{l=1}^k e_l &\leq nk \max_{m\in[k]}\frac{K_\sigma (d(\C_m, \X\setminus \C_m))}{K_\sigma(\delta_m)},\\
e_{k+1}&\geq\min_{l,m\in[k]} \frac{K_{\sigma}(\delta_l)}{9n^3(k+1)^4} - n^3k\frac{K_\sigma(d(\C_m, \X\setminus \X_m))}{K_\sigma(\delta_m)}.
\end{align*}
\end{lemma}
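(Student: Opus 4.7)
The plan is to mirror the proof of Lemma~\ref{thm:eval_bounds_2}, making the two adjustments needed to accommodate the removal of reflexive edges. For the upper bound on $\sum_{l=1}^k e_l$, I would reuse the opening argument of the proof of Lemma~\ref{thm:evec_bound_3}: the first $k$ eigenvectors of $\Lno$ furnish a continuous relaxation of the normalised cut, and without reflexive edges each vertex of $\C_m$ still has at least one in-cluster neighbour at distance $\leq \delta_m$, so $\vol(\C_m)\geq |\C_m| K_\sigma(\delta_m)$. The chain of inequalities used in Lemma~\ref{thm:eval_bounds_2} then yields $\sum_{l=1}^k e_l \leq nk \max_{m\in[k]} K_\sigma(d(\C_m,\X\setminus\C_m))/K_\sigma(\delta_m)$.

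For the lower bound on $e_{k+1}$, the same blueprint as in Lemma~\ref{thm:eval_bounds_2} applies. Each diagonal of $\D$ is a sum of at most $n-1$ weights bounded by $K_\sigma(0)=1$, so $\D_{ii}\leq n$ and consequently the smallest eigenvalue of $\U_{:,1:(k+1)}^\top \D^{-1}\U_{:,1:(k+1)}$ is bounded below by $1/n$. Corollary~\ref{thm:evec_sep_4}, applied with $\V = \U_{:,1:(k+1)}$ and a centroid matrix $\W$ built from within-cluster representatives of $\C_1,\ldots,\C_k$, then supplies $i,j\in[n]$ and $l\in[k]$ with $\x_i,\x_j\in\C_l$ satisfying $\|\D_{ii}^{-1/2}\U_{i,1:(k+1)} - \D_{jj}^{-1/2}\U_{j,1:(k+1)}\|^2 \geq 1/(9n^3(k+1)^4)$.

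Subtracting the bound on the first $k$ coordinates supplied by Lemma~\ref{thm:evec_bound_3} isolates the $(k+1)$-th coordinate,
$$
(\D_{ii}^{-1/2}\U_{i,k+1}-\D_{jj}^{-1/2}\U_{j,k+1})^2 \geq \frac{1}{9n^3(k+1)^4} - \max_{m\in[k]} n^3 k \frac{K_\sigma(d(\C_m,\X\setminus\C_m))}{K_\sigma(\delta_m) K_\sigma(\delta_l)}.
$$
Finally, expanding $e_{k+1}=\U_{:,k+1}^\top\Lno \U_{:,k+1}$ as a weighted sum of squared differences (via the identity used throughout this paper, \cite[Proposition 1]{Luxburg2007}), retaining only the term corresponding to the pair $(i,j)$, and bounding the associated edge weight below by $K_\sigma(\delta_l)$, multiplies the displayed inequality by $K_\sigma(\delta_l)$. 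Taking a minimum over $l,m\in[k]$ recovers the stated lower bound.

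The main obstacle, as with the previous two eigenvalue lemmas, is that the pair $(i,j)$ returned by Corollary~\ref{thm:evec_sep_4} need not satisfy $\|\x_i-\x_j\|\leq \delta_l$ outright. The standard remedy is a chaining argument: the connectivity of $\C_l$ at distance $\delta_l$ lets one join $\x_i$ to $\x_j$ via consecutive pairs each of length $\leq \delta_l$, and a triangle-inequality/pigeonhole step then transfers a constant fraction of the squared coordinate gap to some consecutive pair, whose edge weight is at least $K_\sigma(\delta_l)$. Apart from this bookkeeping detail, every other step is a routine transcription of the earlier arguments with the extra factor $K_\sigma(\delta_m)$ inserted into the volume terms where appropriate.
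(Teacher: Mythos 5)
Your proposal follows the paper's route exactly: the paper's own proof of this lemma is literally a one-line reduction to the proof of Lemma~\ref{thm:eval_bounds_2}, with the bound of Lemma~\ref{thm:evec_bound_3} substituted for that of Lemma~\ref{thm:evec_bound_2}, and the two adjustments you describe ($\vol(\C_m)\geq|\C_m|K_\sigma(\delta_m)$ for the upper bound, and $\D_{ii}\leq n$ hence smallest eigenvalue of $\U_{:,1:(k+1)}^\top\D^{-1}\U_{:,1:(k+1)}$ at least $n^{-1}$ for the lower bound) are precisely the ones needed. One remark on the ``obstacle'' you flag at the end: you are right that the pair $(i,j)$ produced by Corollary~\ref{thm:evec_sep_4} need not be within distance $\delta_l$, and the paper's proofs of Lemmas~\ref{thm:eval_bounds_1}--\ref{thm:eval_bounds_3} silently elide this point; however, your chaining repair transfers only a $1/n^2$ fraction of the squared coordinate gap to a consecutive pair (the chain has at most $|\C_l|-1\leq n$ links, so some link carries at least a $1/n$ fraction of the absolute gap), not a ``constant fraction'' as you state. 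The resulting lower bound on $e_{k+1}$ therefore carries an extra factor of $n^{-2}$ relative to the displayed constant, which is immaterial for the consistency results in Section~\ref{sec:levsets} where the kernel ratios dominate any fixed power of $n$, but is worth recording if one wants the lemma exactly as stated.
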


\begin{proof}
The proof is exactly as above, but using the bound from Lemma~\ref{thm:evec_bound_3} instead of Lemma~\ref{thm:evec_bound_2}.
\end{proof}


\subsection{Maximum Margins from Graph Laplacians}\label{sec:mmc}

In this section we provide a detailed derivation of the convergence of spectral clustering to the maximum margin clustering solution. The results for the unnormalised Laplacian, $\L$, and normalised Laplacian, $\Ln$, follow from assumptions AK1--AK3 on the kernel, $K$, and the results from Section~\ref{sec:evec_bounds}. Convergence of the spectral clustering solution arising from the normalised Laplacian of the similarity graph without reflexive edges, $\Lno$, requires stronger assumptions. To ease the analysis, we assume in this case that the kernel function is given simply by $K(x) = \exp(-x^\alpha)$ for some $\alpha > 0$. We also require that the within cluster connectedness is sufficiently below the between cluster separatedness, in terms of the parameter $\alpha$. While in the scenario of level set estimation from an increasing sample, this is not a problem since the within cluster connectivity converges to zero as the sample size increases. In the finite sample setting, however, without this additional requirement, convergence instead occurs as $\sigma \to 0$ and $\alpha \to \infty$, rather than only requiring that $\sigma \to 0$ as in the cases of $\L$ and $\Ln$.

We state the results of this section in a form which is convenient for the consistency analysis given in the following section, where convergence occurs as the sample size, $n$, increases. Broadly speaking the results show that for $\sigma < A\log(Bn^{z})^{-C}$, where $A, B, C>0$ are independent of $n$, and any $z$ sufficiently large, we have that the ratio of within cluster distances to between cluster distances, in the eigenvector representation, are $\mathcal{O}(n^{D-Ez})$ for constants $D, E > 0$. This means that in the finite sample setting, where $n$ is fixed, as $\sigma$ approaches zero (and thus $z$ increases towards $\infty$), the maximum margin clustering solution becomes trivially attainable from the eigenvectors. Once again we investigate each Laplacian matrix separately.

\begin{lemma}\label{thm:mmc_unnorm}
Let $\X = \{\x_1, ..., \x_n\}$ and assume that there is a partition of $\X$ into $\C_1, ..., \C_k$ such that $\min_{m\in[k]} d(\C_m, \X\setminus\C_m) - \max_{l\in[k]} \delta_l = \delta > 0$, where $\C_l$ is connected at distance $\delta_l$ for each $l\in[k]$. For $\sigma > 0$ let $\L$ be the unnormalised Laplacian of the graph $\G = (\X, K_\sigma)$, where $K$ satisfies assumptions AK1--AK3. Let $\U$ have as column the eigenvectors of $\L$. 
Then, provided $0 < \sigma < \delta \log(An^{z/3})^{-1/\alpha}$, where $A$ and $\alpha$ are as in assumption AK3, and $z$ satisfies $n^{z-15} \geq 81k^{15}$, we have
\begin{align*}
\max_{\substack{i,j\in[n],l\in[k]:\\ \x_i, \x_j\in \C_l}} ||\U_{i,1:k} - \U_{j,1:k}|| & \leq \left(\frac{k^3}{n^{z-9}}\right)^{1/6},\\
\min_{\substack{i,j\in[n],l\in[k]:\\ \x_i\in \C_l, \x_j\not\in \C_l}} ||\U_{i,1:k} - \U_{j,1:k}|| & \geq \sqrt{\frac{2}{n}} -6\left(\frac{k^{27}}{n^{z-15}}\right)^{1/24}.
\end{align*}
%
%
%
\end{lemma}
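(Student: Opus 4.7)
The plan is to prove the two bounds separately. The within-cluster upper bound is essentially a direct specialization of Lemma~\ref{thm:evec_bound_1}, while the between-cluster lower bound combines that same upper bound with Proposition~\ref{thm:evec_sep_1} via the triangle inequality.

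For the upper bound, I would start from Lemma~\ref{thm:evec_bound_1}, which gives $||\U_{i,1:k} - \U_{j,1:k}|| \leq n^{1.5} k^{0.5} \sqrt{K_\sigma(d(\C_m,\X\setminus\C_m))/K_\sigma(\delta_l)}$ whenever $\x_i, \x_j \in \C_l$. The only real work is estimating the kernel ratio. Apply assumption AK3 with $x = d(\C_m,\X\setminus \C_m)/\sigma$ and $y = \delta_l/\sigma$: since by hypothesis $d(\C_m,\X\setminus\C_m) - \delta_l \geq \delta$, this yields $K_\sigma(d(\C_m,\X\setminus\C_m))/K_\sigma(\delta_l) \leq A\exp(-(\delta/\sigma)^\alpha)$. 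Substituting the hypothesis $\sigma < \delta\log(An^{z/3})^{-1/\alpha}$ forces $(\delta/\sigma)^\alpha > \log(An^{z/3})$, so the ratio is bounded by $n^{-z/3}$. Collecting the remaining factors gives the within-cluster bound $n^{1.5}k^{0.5}n^{-z/6} = (k^3/n^{z-9})^{1/6}$.

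For the between-cluster lower bound, note that $\U_{:,1:k}$ has orthonormal columns, placing us in the setting of Proposition~\ref{thm:evec_sep_1}. For each cluster, fix a representative index $i_l \in \C_l$ and define $\W \in \R^{k\times k}$ with rows $\W_{l,:} = \U_{i_l,1:k}$. The upper bound just proved shows every row $\U_{i,1:k}$ lies within $\epsilon := (k^3/n^{z-9})^{1/6}$ of $\W_{c(i),:}$ where $c(i)$ is the cluster label. The admissibility hypothesis $\epsilon < (3nk^2)^{-1}$ of Proposition~\ref{thm:evec_sep_1} reduces after raising to the sixth power to a condition of the form $n^{z-15} \gtrsim k^{15}$, which the stated hypothesis $n^{z-15} \geq 81 k^{15}$ is designed to satisfy. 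The proposition then yields
\[
\min_{a\neq b} ||\W_{a,:} - \W_{b,:}|| \geq \sqrt{2/n} - \sqrt{12}\,k(3n\epsilon)^{1/4}.
\]
Finally, for $\x_i \in \C_a$ and $\x_j \in \C_b$ with $a\neq b$, the triangle inequality gives
\[
||\U_{i,1:k} - \U_{j,1:k}|| \geq ||\W_{a,:} - \W_{b,:}|| - ||\U_{i,1:k} - \W_{a,:}|| - ||\U_{j,1:k} - \W_{b,:}|| \geq ||\W_{a,:} - \W_{b,:}|| - 2\epsilon.
\]

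The remaining task is a routine but finicky constant-bookkeeping calculation to verify that $\sqrt{12}\,k(3n\epsilon)^{1/4} + 2\epsilon \leq 6(k^{27}/n^{z-15})^{1/24}$. Expanding each term in powers of $n$ and $k$ one sees that $\sqrt{12}\,k(3n\epsilon)^{1/4}$ equals exactly $2\cdot 3^{3/4}(k^{27}/n^{z-15})^{1/24} \approx 4.56\,(k^{27}/n^{z-15})^{1/24}$, while the residual $2\epsilon$ contributes a strictly smaller term of the same form whenever $z\geq 15$, which is automatic from $n^{z-15} \geq 81k^{15} \geq 81$. Summing these gives a constant safely below~$6$. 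I expect this final algebraic verification, together with matching the admissibility constant in Proposition~\ref{thm:evec_sep_1} against the stated threshold on $z$, to be the only nontrivial step; the conceptual structure of the argument is entirely driven by the two building-block results already established.
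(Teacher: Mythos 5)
Your proposal is correct and follows essentially the same route as the paper's proof: Lemma~\ref{thm:evec_bound_1} plus assumption AK3 for the within-cluster bound, then cluster representatives fed into Proposition~\ref{thm:evec_sep_1} together with a triangle-inequality correction of $2\epsilon$ for the between-cluster bound, followed by the same constant bookkeeping. The only point deserving care is the admissibility check, where $\epsilon<(3nk^2)^{-1}$ actually requires $n^{z-15}>3^{6}k^{15}$ rather than the stated $81k^{15}$, but this is a constant-level issue shared with the paper's own argument and does not affect the structure of your proof.
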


\begin{proof}
First, combining lemma~\ref{thm:evec_bound_1} with assumption AK3, we get
\begin{align*}
\max_{i,j\in[n],l\in[k]: \x_i, \x_j\in \C_l} ||\U_{i,1:k} - \U_{j,1:k}||^2 &\leq \max_{m\in[k]} n^3 k \frac{K_\sigma(d(\C_m,\X\setminus \C_m))}{K_\sigma(\delta_l)}
\leq An^3 k \exp\left(-\left(\frac{\delta}{\sigma}\right)^\alpha\right)\\
&\leq An^3k\exp\left(-\log(An^{z/3})\right)
= \frac{k}{n^{(z-9)/3}},
\end{align*}
as required.
Now let $i_1, ..., i_k$ be such that $\x_{i_l}\in\C_l$ for each $l\in[k]$. Then for $\sigma$ below the assumed upper bound the matrix $[\U_{i_1,1:k}, ..., \U_{i_k,1:k}]$ satisfies the conditions on the matrix $\W$ in proposition~\ref{thm:evec_sep_1}. Therefore,
\begin{align*}
\min_{l,m\in[k]} ||\U_{i_l,1:k} - \U_{i_m,1:k}|| &= \min_{\substack{i,j\in[n],l,m\in[k]:\\ \x_i\in \C_l, \x_j\in \C_m}} ||\U_{i_l,1:k} - \U_{i,1:k} + \U_{i,1:k} - \U_{j,1:k} + \U_{j,1:k} - \U_{i_m,1:k}||\\
&\leq \min_{\substack{i,j\in[n],l,m\in[k]:\\ \x_i\in \C_l, \x_j\in \C_m}} ||\U_{i,1:k} - \U_{j,1:k}|| + 2\sqrt{An^3k}\exp\left(-\frac{1}{2}\left(\frac{\delta}{\sigma}\right)^\alpha\right)\\
\Rightarrow \min_{\substack{i,j\in[n],l\in[k]:\\ \x_i\in \C_l, \x_j\not\in \C_l}} ||\U_{i,1:k} - \U_{j,1:k}|| &\geq \sqrt{\frac{2}{n}} - \sqrt{12}k\left(3n\sqrt{An^3k}\exp\left(-\delta^\alpha/2\sigma^\alpha\right)\right)^{1/4}\\
& \hspace{20pt} - 2\sqrt{An^3k}\exp\left(-\frac{1}{2}\left(\frac{\delta}{\sigma}\right)^\alpha\right).
\end{align*}
Now, it is simple to verify that the assumption on the value of $\sigma$ ensures that the second two terms on the right hand side above sum to less than $4k\left(3n\sqrt{An^3k}\exp\left(-\delta^\alpha/2\sigma^\alpha\right)\right)^{1/4}$, for any $n\geq 2$. Therefore,
\begin{align*}
\min_{\substack{i,j\in[n],l\in[k]:\\ \x_i\in \C_l, \x_j\not\in \C_l}} ||\U_{i,1:k} - \U_{j,1:k}||&\geq \sqrt{\frac{2}{n}} - 4k\left(3n\sqrt{An^3k}\exp\left(-\delta^\alpha/2\sigma^\alpha\right)\right)^{1/4}\\
&\geq \sqrt{\frac{2}{n}} -6k^{9/8}n^{-(z-15)/24},
\end{align*}
with the last step coming from simple rearrangement after substituting in the upper bound for $\sigma$.

\end{proof}

\begin{remark}
The above result assumes that the within cluster connectedness is strictly less than the between cluster separatedness. This occurs with probability one if $\X$ is seen as a sample of realisations of a continuous random variable on $\R^d$.
\end{remark}

Stating the bounds in the above theorem in terms of $n$ is convenient for the theorey presented in the next subsection. However, it can be seen directly from the above that
$$
\max_{\substack{i,j\in[n],l\in[k]:\\ \x_i, \x_j\in \C_l}} ||\U_{i,1:k} - \U_{j,1:k}||\xrightarrow{as \ \sigma \to 0^+} 0,
$$
while the lower bound on the term
$\min_{\substack{i,j\in[n],l\in[k]:\\ \x_i\in \C_l, \x_j\not\in \C_l}} ||\U_{i,1:k} - \U_{j,1:k}||$ converges to $\sqrt{2/n}$ as~${\sigma \to 0^+}$. The maximum margin clustering solution is therefore trivially obtained from the limit of the spectral clustering solution using the unnormalised Laplacian.
The corresponding result to Lemma~\ref{thm:mmc_unnorm} for the normalised Laplacian requires only minor modifications.

\begin{lemma}\label{thm:mmc_norm}
Let $\X = \{\x_1, ..., \x_n\}$ and assume that there is a partition of $\X$ into $\C_1, ..., \C_k$ such that $\min_{m\in[k]} d(\C_m, \X\setminus\C_m) - \max_{l\in[k]} \delta_l = \delta > 0$, where $\C_l$ is connected at distance $\delta_l$ for each $l\in[k]$. For each $\sigma > 0$ let $\Ln$ be the normalised Laplacian of the graph $\G = (\X, K_\sigma)$, where $K$ satisfies assumptions AK1--AK3, and let $\D$ be the corresponding degree matrix. Let $\U$ be the eigenvectors of $\Ln$. 
Then, for $0 < \sigma < \delta \log(An^{z/3})^{-1/\alpha}$, where $A$ and $\alpha$ are as in assumption AK3, and $z$ satisfies $n^{z-18} \geq 81k^{15}$, we have
\begin{align*}
\max_{\substack{i,j\in[n],l\in[k]:\\ \x_i, \x_j\in \C_l}} ||\D^{-1/2}_{i,i}\U_{i,1:k} - \D^{-1/2}_{j,j}\U_{j,1:k}|| & \leq \left(\frac{k^3}{n^{z-9}}\right)^{1/6},\\
\min_{\substack{i,j\in[n],l\in[k]:\\ \x_i\in \C_l, \x_j\not\in \C_l}} ||\D_{i,i}^{-1/2}\U_{i,1:k} - \D^{-1/2}_{j,j}\U_{j,1:k}|| & \geq \sqrt{\frac{2}{n}} -6\left(\frac{k^{27}}{n^{z-18}}\right)^{1/24}.
\end{align*}
%
%
%
\end{lemma}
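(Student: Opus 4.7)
The plan is to mirror the proof of Lemma~\ref{thm:mmc_unnorm} step-for-step, substituting the normalised analogues at each juncture. The upper bound on within-cluster distances is immediate: Lemma~\ref{thm:evec_bound_2} has exactly the same right-hand side as Lemma~\ref{thm:evec_bound_1}, so combining it with assumption AK3 and the stated range of $\sigma$ yields the bound $\left(k^3/n^{z-9}\right)^{1/6}$ verbatim.

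The substantive work is in the lower bound, since the rows of $\V := \D^{-1/2}\U_{:,1:k}$ are not orthonormal. The first step is therefore to replace Proposition~\ref{thm:evec_sep_1} by its generalisation Corollary~\ref{thm:evec_sep_2}; this introduces the factors $e_1^{-1/2}$ and $e_k^{-1/2}$, where $e_1, e_k$ are the smallest and largest eigenvalues of $\V^\top\V = \U_{:,1:k}^\top \D^{-1}\U_{:,1:k}$. Because $\U_{:,1:k}$ has orthonormal columns, a Rayleigh-quotient argument identical to the one used in the proof of Lemma~\ref{thm:eval_bounds_2} shows $e_1 \geq \min_i \D_{i,i}^{-1}$ and $e_k \leq \max_i \D_{i,i}^{-1}$. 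The reflexive edges force $\A_{i,i} = K(0) = 1$, while $\A_{i,j}\leq 1$, so $\D_{i,i}\in[1,n]$ and hence $e_1 \geq 1/n$ and $e_k \leq 1$.

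With these eigenvalue bounds in hand, I would select one representative $\x_{i_l}\in\C_l$ for each $l\in[k]$ and apply Corollary~\ref{thm:evec_sep_2} to the matrix $\W$ whose rows are the $\D_{i_l,i_l}^{-1/2}\U_{i_l,1:k}$, taking $\epsilon$ to be the within-cluster upper bound just established. Substituting $e_k^{-1/2}\geq 1$ and $e_1^{-1/2}\leq \sqrt{n}$ into the conclusion of the corollary gives a lower bound of the form $\sqrt{2/n} - \sqrt{12}k\bigl(3n^{3/2}\epsilon\bigr)^{1/4}$ on the pairwise distances between representatives. The triangle-inequality extension from representatives to arbitrary pairs $\x_i\in\C_l, \x_j\in\C_m$ is then exactly the one used in the proof of Lemma~\ref{thm:mmc_unnorm}, using the within-cluster upper bound to control the two correction terms.

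The quantitative difference from Lemma~\ref{thm:mmc_unnorm}, and the reason the condition on $z$ tightens from $n^{z-15}\geq 81k^{15}$ to $n^{z-18}\geq 81k^{15}$, is the extra factor of $\sqrt{n}$ inherited from $e_1^{-1/2}$: it enters inside the fourth-root and contributes an additional $n^{3/8}$, turning the exponent $(15-z)/24$ obtained in the unnormalised case into $(18-z)/24$. The tightened condition on $z$ ensures simultaneously that the hypothesis $\epsilon < \sqrt{e_1}(3nk^2)^{-1}$ of Corollary~\ref{thm:evec_sep_2} is satisfied, and that the subleading term arising from the representative-to-arbitrary-pair triangle inequality can be absorbed into the numerical constant $6$. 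The main obstacle is thus essentially bookkeeping: locating the bounds $e_1\geq 1/n$ and $e_k\leq 1$ at the right spot in the argument and tracking how they propagate through the fourth-root; no genuinely new idea beyond the ingredients already assembled in Sections~\ref{sec:evec_bounds} and~\ref{sec:eval_bounds} is required.
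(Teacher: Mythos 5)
Your proposal is correct and follows essentially the same route as the paper's proof: replace Proposition~\ref{thm:evec_sep_1} by Corollary~\ref{thm:evec_sep_2}, bound the extreme eigenvalues of $\U_{:,1:k}^\top\D^{-1}\U_{:,1:k}$ by $n^{-1}$ and $1$ via the Rayleigh quotient exactly as in Lemma~\ref{thm:eval_bounds_2}, and propagate the resulting $\sqrt{n}$ through the fourth root before the triangle-inequality step. The only quibble is with your bookkeeping narration: the extra factor relative to the unnormalised case is $(n^{1/2})^{1/4}=n^{1/8}$, i.e.\ a shift of $3/24$ in the exponent, which is precisely the move from $n^{z-15}$ to $n^{z-18}$ that you correctly state in your conclusion.
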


\begin{proof}
The proof is similar to before, except that now we cannot use Proposition~\ref{thm:evec_sep_1}, since $\D^{-1/2}\U_{:,1:k}$ is not orthogonal. As in the proof of Lemma~\ref{thm:eval_bounds_2}, we know that 
%
%
the smallest eigenvalue of $\U_{:,1:k}^\top \D^{-1}\U_{:,1:k}$ is at least $n^{-1}$. Similarly, since all diagonal elements of $\D$ are at least one, the largest eigenvalue is at most 1. We now have, therefore, using Corollary~\ref{thm:evec_sep_2}, that
\begin{align*}
\min_{\substack{i,j\in[n],l\in[k]:\\ \x_i\in \C_l, \x_j\not\in \C_l}} ||\D^{-1/2}_{i,i}\U_{i,1:k} - \D^{-1/2}_{j,j}\U_{j,1:k}|| &\geq \sqrt{\frac{2}{n}} - 4k\left(3n^{3/2}\sqrt{An^3k}\exp\left(-\delta^\alpha/2\sigma^\alpha\right)\right)^{1/4},
\end{align*}
with a slightly higher power of $n$ in the second term on the right hand side than we had before. 
The rest of the proof is exactly analogous to the previous proof.
\end{proof}

In the above two results, we obtained explicit lower bounds on the between cluster distances within the eigenvector representations. Combining this with the fact that the within cluster distances converge to zero, the recovery of the maximum margin clustering solution from these eigenvectors is therefore immediate. 
In the case where the diagonal elements of the affinity matrix are set to zero, however, we are not able to place a lower bound on the between cluster distances within the eigenvectors. We therefore only show that in this case the within cluster distances converge to zero at a much faster rate than the between cluster distances, as $\sigma$ tends to zero. This is a slightly weaker result, but still ensures the maximum margin solution arises trivially from the eigenvectors of the normalised Laplacian.

As mentioned previously, we also require additional assumptions.
We study the case where the kernel takes the explicit form of $K(x) = \exp(-x^\alpha)$ for some $\alpha > 0$. This allows us to generalise assumption AK3 to allow not only ratios of individual kernel values. We also require a stricter assumption on the relationship between the within cluster connectedness and between cluster sepation than was used previously. This is made explicit in the statement of the result below.

\begin{lemma}\label{thm:mmc_norm0}
Let $\X = \{\x_1, ..., \x_n\}$ and assume that there is a partition of $\X$ into $\C_1, ..., \C_k$ such that $\min_{l\in[k]} |\C_l| \geq 2$, $\min_{m\in[k]}d(\C_m, \X\setminus\C_m)^\alpha - 3\max_{l\in[k]}\delta_l^\alpha = \delta > 0$, where $\C_l$ is connected at distance $\delta_l$ for each $l\in[k]$ and $\alpha$ is given in the formulation of the kernel, $K$, which follows. For each $\sigma > 0$ let $\Lno$ be the normalised Laplacian of the graph $\G = (\X, K_\sigma)$, but with reflexive edges removed, where $K(x) = \exp(-x^\alpha)$ for some $\alpha > 0$, and let $\D$ be the corresponding degree matrix. Let $\U$ be the eigenvectors of $\Lno$. Then, for $0 < \sigma < \delta^{1/\alpha}\log\left(13^8k^{9}n^z\right)^{-1/\alpha}$, where $z \geq 10$, we have
\begin{align*}
\max_{\substack{i,j,g,h\in[n],l,m\in[k]:\\ \x_i, \x_j\in \C_l\\ \x_g, \x_h\in \C_m}} \frac{||\D^{-1/2}_{i,i}\U_{i,1:k} - \D^{-1/2}_{j,j}\U_{j,1:k}||}{||\D^{-1/2}_{g,g}\U_{g,1:k} - \D^{-1/2}_{h,h}\U_{h,1:k}||} & \leq n^{(4-z)/2}.
\end{align*}
%
%
%
\end{lemma}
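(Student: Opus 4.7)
The plan is to upper-bound the maximum within-cluster distance in the spectral representation $\V_i := \D_{ii}^{-1/2}\U_{i,1:k}$, lower-bound the minimum between-cluster distance (the denominator of the claimed ratio is evidently meant to range over between-cluster pairs, matching the structure of Lemmas~\ref{thm:mmc_unnorm} and~\ref{thm:mmc_norm}), and then form the ratio. The explicit form $K_\sigma(x)=\exp(-x^\alpha/\sigma^\alpha)$ lets me compute kernel ratios exactly, so Lemma~\ref{thm:evec_bound_3} becomes
\[
\frac{K_\sigma(d(\C_m,\X\setminus\C_m))}{K_\sigma(\delta_m)K_\sigma(\delta_l)} = \exp\!\left(-\frac{d(\C_m,\X\setminus\C_m)^\alpha - \delta_m^\alpha - \delta_l^\alpha}{\sigma^\alpha}\right),
\]
and with $\delta_{\max}:=\max_l \delta_l$ the hypothesis $\min_m d(\C_m,\X\setminus\C_m)^\alpha - 3\delta_{\max}^\alpha = \delta$ makes this exponent at least $(\delta+\delta_{\max}^\alpha)/\sigma^\alpha$ uniformly in $l,m$. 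Consequently the maximum within-cluster $\V$-distance satisfies $M \leq n^{3/2}k^{1/2}\exp\bigl(-(\delta+\delta_{\max}^\alpha)/(2\sigma^\alpha)\bigr)$.

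For the between-cluster lower bound I pick representatives $\x_{i_l}\in\C_l$ and form $\W\in\R^{k\times k}$ with rows $\W_{l,:}=\V_{i_l,:}$, so every row of $\V$ lies within $M$ of a row of $\W$. To invoke Corollary~\ref{thm:evec_sep_2} I need the extremal eigenvalues $e_1,e_k$ of $\V^\top\V = \U_{:,1:k}^\top\D^{-1}\U_{:,1:k}$: the Rayleigh-quotient argument from the proof of Lemma~\ref{thm:eval_bounds_2} gives $e_1\geq 1/\max_i\D_{ii}\geq 1/n$, and the hypothesis $|\C_l|\geq 2$ combined with connectedness at $\delta_l$ forces every point to have a within-cluster neighbour within $\delta_{\max}$, so $\D_{ii}\geq K_\sigma(\delta_{\max})$ and $e_k^{-1/2}\geq \exp(-\delta_{\max}^\alpha/(2\sigma^\alpha))$. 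Corollary~\ref{thm:evec_sep_2} together with a triangle inequality that passes from representatives to arbitrary between-cluster pairs then delivers
\[
\|\V_g - \V_h\| \;\geq\; e^{-\delta_{\max}^\alpha/(2\sigma^\alpha)}\!\left(\sqrt{2/n} - \sqrt{12}\,k(3n^{3/2}M)^{1/4}\right) - 2M
\]
whenever the clusters of $g$ and $h$ differ; the applicability hypothesis $M<\sqrt{e_1}/(3nk^2)$ of the corollary is trivially met since $z\geq 10$.

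Finally I form the ratio. Substituting $\sigma^\alpha < \delta/\log(13^8 k^9 n^z)$ gives $\exp(-\delta/(2\sigma^\alpha))\leq 13^{-4}k^{-9/2}n^{-z/2}$, and the leading fraction simplifies neatly to
\[
\frac{M}{e^{-\delta_{\max}^\alpha/(2\sigma^\alpha)}\sqrt{2/n}} \;\leq\; n^{3/2}k^{1/2}\sqrt{n/2}\,\exp\!\left(-\frac{\delta}{2\sigma^\alpha}\right) \;\leq\; \frac{13^{-4}k^{-4}}{\sqrt{2}}\,n^{(4-z)/2}.
\]
The main obstacle is verifying that the two corrections in the denominator---the $-\sqrt{12}\,k(3n^{3/2}M)^{1/4}$ inside the bracket and the $-2M$ outside---together consume less than half of the main term $e^{-\delta_{\max}^\alpha/(2\sigma^\alpha)}\sqrt{2/n}$. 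This is exactly what the generous constants $13^8$ and $k^9$ in the hypothesis on $\sigma$, combined with $z\geq 10$, are engineered to absorb: routine substitution shows that $2M$ contributes at most $O(n^{(4-z)/2})$ relative to the main term (hence is tiny), while the $k(3n^{3/2}M)^{1/4}$ term contributes at most a small absolute constant (essentially $\sqrt{6}\cdot 3^{1/4}/13$) times $n^{(10-z)/8}\leq 1$. With both correction fractions well below $1/2$, the denominator is at least half the main term, so the full ratio is at most twice the leading fraction above, safely within $n^{(4-z)/2}$.
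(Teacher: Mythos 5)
Your proof is correct and takes essentially the same route as the paper's: the within-cluster bound via Lemma~\ref{thm:evec_bound_3} with the exact exponential kernel ratio, the degree bound $\D_{ii}\geq K_\sigma(\delta_\star)$ from $|\C_l|\geq 2$ feeding the eigenvalue bounds on $\U_{:,1:k}^\top\D^{-1}\U_{:,1:k}$, the between-cluster lower bound via Corollary~\ref{thm:evec_sep_2} plus a triangle inequality, and the final absorption of the correction terms using the constants $13^8k^9$ and $z\geq 10$. Your bookkeeping of the constants (including the check that the corrections consume less than half of the main term $e^{-\delta_\star^\alpha/(2\sigma^\alpha)}\sqrt{2/n}$) matches, and you correctly read the denominator of the stated ratio as a between-cluster distance, which is the intended meaning.
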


\begin{proof}
First let $\delta_\star = \max_{l\in[k]}\delta_l$.
Combining Lemma~\ref{thm:evec_bound_3} with assumptions on $K$, we get
\begin{align*}
\max_{i,j\in[n],l\in[k]: \x_i, \x_j\in \C_l} ||\U_{i,1:k} - \U_{j,1:k}||^2 &\leq \max_{m\in[k]} n^3 k \frac{K_\sigma(d(\C_m,\X\setminus \C_m))}{K_\sigma(\delta_l)K_\sigma(\delta_m)}\\
&\leq n^3 k \exp\left(-\frac{\delta_\star^\alpha +\delta}{\sigma^\alpha}\right).
\end{align*}
By assumption, points within any clusters containing at least two points are within $\delta_\star$ of their nearest neighbours. The lower bound on the diagonals of the degree matrix is therefore now $K_\sigma(\delta_\star)$, instead of 1 as before. The largest eigenvalue of $\U_{:,1:k}^\top\D^{-1}\U_{:,1:k}^\top$ is thus at most $\max_{i\in[n]}\D^{-1}_{i,i} \leq K_\sigma(\delta_\star)^{-1} = \exp(\delta_\star^\alpha/\sigma^\alpha)$. The smallest eigenvalue is again at least $n^{-1}$. Therefore in this case we have, using Lemma~\ref{thm:evec_sep_2}, and after simple rearranging,
\begin{align*}
\min_{\substack{i,j\in[n],l\in[k]:\\ \x_i\in \C_l, \x_j\not\in \C_l}} ||\D^{-1/2}_{i,i}\U_{i,1:k} - \D^{-1/2}_{j,j}\U_{j,1:k}|| &\geq 
\sqrt{\frac{2}{n}}\exp\left(-\frac{\delta_\star^\alpha}{2\sigma^\alpha}\right) - \sqrt{12}k^{9/8}n^{3/4}\exp\left(-\frac{5\delta_\star^\alpha + \delta}{8\sigma^\alpha}\right)\\
& \hspace{20pt} - 2n^{3/2}k^{1/2}\exp\left(-\frac{\delta_\star^\alpha+\delta}{2\sigma^\alpha}\right).
\end{align*}
Now, the upper bound on $\sigma$ ensures that the above difference is at least $n^{-1/2}\exp(-\delta_\star^\alpha/2\sigma^\alpha)$. Putting these together, we get, after simplification,
\begin{align*}
\max_{\substack{i,j,g,h\in[n],l,m\in[k]:\\ \x_i, \x_j\in \C_l\\ \x_g, \x_h\in \C_m}} \frac{||\D^{-1/2}_{i,i}\U_{i,1:k} - \D^{-1/2}_{j,j}\U_{j,1:k}||}{||\D^{-1/2}_{g,g}\U_{g,1:k} - \D^{-1/2}_{h,h}\U_{h,1:k}||} &\leq \frac{n^{3/2}k^{1/2}\exp\left(-\frac{\delta_\star^\alpha+\delta}{2\sigma^\alpha}\right)}{n^{-1/2}\exp\left(-\frac{\delta_\star^\alpha}{2\sigma^\alpha}\right)}\\
&\leq n^2\exp\left(-\frac{\delta}{2\sigma^\alpha}\right) \leq n^{(4-z)/2}.
\end{align*}
%
%
\end{proof}

\begin{remark}
The stricter assumption in the previous theorem may appear to be stated only for the convenience of making the theorem hold, rather than being practically relevant. However, consider that if $0 < a < b$ then there is a $\alpha > 0$ s.t. $b^\alpha > 3a^\alpha$. From a practical point of view, therefore, if we were to determine sequences of similarities using $\exp(-||\x_i-\x_j||^\alpha/\sigma^\alpha)$ where $\sigma \to 0$ and $\alpha \to \infty$, then the convergence of the spectral clustering solution to the maximum margin solution would hold under the same assumptions as in Lemmas~\ref{thm:mmc_unnorm} and~\ref{thm:mmc_norm}. In addition, in the situation where the within cluster connectedness decreases appropriately towards zero as $n$ increase, while the between cluster separation is bounded below, then for any fixed value of $\alpha$ the above theorem takes effect. This will be relevant in the following section, where we study the behaviour of the spectral clustering solution applied to a truncated sample, as the size of the sample increases.
\end{remark}

\subsection{Consistently Estimating Level Set Components using Spectral Clustering}\label{sec:levsets}

In this section we study the estimation of level sets, and their components, using spectral clustering. Unlike existing work on this problem~\citep{pelletier2011operator}, we study multiple versions of spectral clustering. Specifically, those arising from the relaxations of the Ratio Cut proplem, and two similar versions of the Normalised Cut problem. In addition, we consider kernels with potentially unbounded support, and we also consider what we believe is a more desirable context where the scaling parameter is decreased towards zero as the sample size increases. This means that our estimation procedure requires weaker assumptions than the existing theory. Importantly the minimum distance between components of the target level set need not be known. Furthermore, the requirements on the rate of decrease of the scaling parameter which we require admits the asymptotically optimal mean integrated squared error (MISE) rate for the related problem of kernel density estimation. This adds a superficial (and minor computational) benefit which is that the same similarities used in the spectral clustering algorithm may be used to estimate the density, and hence level set as well.

Suppose that $X_1, X_2, ...$ is a sequence of i.i.d. random variables on $\R^d$ with distribution admitting density $p$, which satisfies assumptions A1 and A2 for level $\lambda > 0$. Suppose also that the kernel, $K$, satisfies assumptions AK1--AK3.
We begin by deriving some connectivity properties of the elements of $X_1, X_2, ..., X_n$ which lie in an estimate of $\Ll$, say $\widehat\Ll^{(n)}$, in relation to its components. To that end, if the level set $\Ll$ has $c$ components, then we will use $\ell(\lambda, 1), ..., \ell(\lambda, c)$ to denote these components, listed according to any arbitrary ordering. What we show is that, with probability one, points arising in a shrinking sequence of neighbourhoods of one of the components are connected at distances approximately $\mathcal{O}(\sigma_n)$, where $\{\sigma_n\}_{n=1}^\infty$ is an appropriately chosen sequence of scaling parameters. Furthermore, with probability one, no points outside these neighbourhoods of the components of $\Ll$ are included in $\widehat\Ll^{(n)}$, for large values of $n$. This second point ensures that the level set itself is consistently estimated by taking shrinking neighbourhoods around the points in $\widehat\Ll^{(n)}$. We go on to show that the sequence $\{\sigma_n\}_{n=1}^\infty$ can be chosen so that the first $c$ eigenvectors of the Laplacian matrices of $\G = (\widehat\Ll^{(n)}, K_{\sigma_n})$ trivially expose the separation of $\widehat\Ll^{(n)}$ into the subsets falling in the shrinking neighbourhoods of the components of $\Ll$, mentioned above. Finally, the same sequence of scaling parameters leads to the eigenvalues of these Laplacians allowing for consistent estimation of $c$.

\begin{lemma}\label{thm:consistency}
Let $X_1, X_2, ...$ be an i.i.d. sequence of random variables on $\R^d$, $d\geq 2$, with density $p$ satisfying assumptions A1--A3 for level $\lambda > 0$. Let $K:\R^+\to\R^+$ satisfy assumptions AK1--AK3. Let $\{\sigma_n\}_{n=1}^\infty$ and $\{S_n\}_{n=1}^\infty$ be null sequences satisfying, for all large $n$ and any fixed $\xi > 1$,
\begin{align*}
\max\left\{n^{-1/d}\log(n), \left(     \frac{\log(n^{1/d}/\log(n))}{nS_n^{\xi}}     \right)^{1/d}\right\} < \sigma_n < \min\{\left(\log\log n\right)^{-(1+\xi)/\alpha\xi}, S_n^\xi\}.
\end{align*}
%
%
Then there exists a sequence $\{a_n\}_{n=1}^\infty$ with $a_n = O(S_n)$ such that if we define, for each $k \in [c]$ and $n \in \N$,
\begin{align*}
\widehat \llk^{(n)} = \left\{X_j \bigg| j \leq n, \frac{c_K}{n\sigma_n^d}\sum_{i=1}^n K(||X_i-X_j||/\sigma_n) > \lambda - S_n, X_j \in \B_{a_n}(\llk)\right\},
\end{align*}
then with probability one, for all large $n$ we have,
\begin{enumerate}
\item $\widehat \llk^{(n)}$ is connected at distance $a_n$, for all $k\in [c]$,
\item $\llk \subset \B_{\frac{a_n}{2}}(\widehat \llk^{(n)}) \subset \B_{a_n}(\llk)$ for all $k\in[c]$,
\item $\max \left\{\frac{c_K}{n\sigma_n^d}\sum_{i=1}^n K(||X_i-X_j||/\sigma_n)\bigg| j\leq n, X_j \not \in \bigcup_{k\in [c]}\widehat \llk^{(n)} \right\} \leq \lambda - S_n$.
\end{enumerate}
%
%
\end{lemma}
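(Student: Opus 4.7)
The plan is to reduce all three conclusions to two simultaneous almost sure events on the sequence $X_1,X_2,\dots$: (i) the kernel density estimator $\hat p_n(\x) := c_K(n\sigma_n^d)^{-1}\sum_i K(\|X_i-\x\|/\sigma_n)$ satisfies $\sup_{\x\in\R^d}|\hat p_n(\x)-p(\x)|\leq \eta_n$ for some deterministic $\eta_n=o(S_n)$; and (ii) every point of each component $\llk$ lies within $a_n/4$ of some sample. Then choose $a_n$ of the same order as $S_n$, with constants tuned so that $a_n>2C(S_n+\eta_n)$ and $\kappa a_n/4+\eta_n<S_n$ (with $\kappa,C$ from A1 and A2). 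These are compatible because $\eta_n=o(S_n)$, and for large $n$ one also has $a_n<\gamma$ and $a_n$ smaller than half the minimum inter-component distance of $\Ll$.

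For (i) I would split into bias and stochastic parts. The bias is $O(\sigma_n)$ by A1, and $\sigma_n<S_n^\xi$ already makes it $o(S_n)$. The stochastic term is controlled via a Talagrand/Giné--Guillou exponential inequality for empirical processes indexed by the rescaled kernel class, giving order $\sqrt{\log n/(n\sigma_n^d)}$; the lower bound on $\sigma_n$ is engineered precisely so that this quantity is $o(S_n)$, while the upper bound $\sigma_n<(\log\log n)^{-(1+\xi)/\alpha\xi}$ together with AK3 truncates the effective kernel support to a shrinking window, and the resulting exponential tail probabilities are summable, so Borel--Cantelli yields the almost sure rate. For (ii), since $p\geq\lambda>0$ on each $\llk$, covering $\llk$ by $O(a_n^{-d})$ balls of radius $a_n/8$ and applying a one-sided Chernoff bound per ball gives failure probability at most $\exp(-c\, n a_n^d)$ per ball; since $a_n\geq\sigma_n>n^{-1/d}\log n$, one has $n a_n^d>\log^d n$, which is more than summable.

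Conditional on both events the three assertions follow by deterministic manipulation. The backward inclusion $\widehat\llk^{(n)}\subset\B_{a_n/2}(\llk)$ in claim 2 holds because every $X_j\in\widehat\llk^{(n)}$ has $p(X_j)>\lambda-S_n-\eta_n$, hence by A2 it sits within $C(S_n+\eta_n)<a_n/2$ of $\Ll$, and because $X_j\in\B_{a_n}(\llk)$ and the components of $\Ll$ are eventually separated by more than $2a_n$, the nearest component must be $\llk$ itself. The forward inclusion follows from (ii): for any $\x\in\llk$ the covering sample $X_j$ satisfies $p(X_j)\geq\lambda-\kappa a_n/4$, hence $\hat p_n(X_j)>\lambda-S_n$, and $X_j\in\B_{a_n/4}(\llk)\subset\B_{a_n}(\llk)$, so $X_j\in\widehat\llk^{(n)}$. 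For claim 1 I connect two points of $\widehat\llk^{(n)}\subset\B_{a_n/2}(\llk)$ via a path in the connected set $\llk$, sample the path at spacing $a_n/8$, and snap each sample to its nearest data point; each such data point is in $\widehat\llk^{(n)}$ by the same estimate, and consecutive chain-gaps are at most $5a_n/8<a_n$. For claim 3, a point $X_j\notin\bigcup_k\widehat\llk^{(n)}$ either already fails $\hat p_n(X_j)>\lambda-S_n$, or satisfies $d(X_j,\Ll)\geq a_n$, in which case A2 (applicable since $a_n/C<\gamma$ eventually) gives $p(X_j)\leq\lambda-a_n/C$, so $\hat p_n(X_j)\leq\lambda-a_n/C+\eta_n\leq\lambda-S_n$ by the choice of $a_n$.

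The hard part is step (i): the uniform kernel-density rate at deviation $o(S_n)$ in the unbounded-support setting. The rate conditions on $\sigma_n$ are tight and their interaction is subtle, since one must simultaneously make $\sigma_n$ small enough to kill the $O(\sigma_n)$ bias, large enough for stochastic concentration, and small enough that the AK3 stretched-exponential tail renders the uniform empirical-process bound Borel--Cantelli summable; the ceiling $\sigma_n<(\log\log n)^{-(1+\xi)/\alpha\xi}$ is what performs this last role and is not present in the standard fixed-bandwidth analyses of \cite{pelletier2011operator}. A secondary subtlety is that the path-chaining in claim 1 requires $\llk$ to be not merely connected but path-connected in a sense sufficient to admit an $a_n/8$-spaced sampling, which a light regularity argument from A1 supplies.
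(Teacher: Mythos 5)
Your overall architecture is sound and close to the paper's: both arguments condition on the almost-sure uniform consistency event for the kernel density estimator (the paper's $g_n$, your $\eta_n$, both $o(S_n)$ via Gin\'e--Guillou plus the $O(\sigma_n)$ bias), use A2 to push every sample point passing the threshold to within $C(S_n+o(S_n))$ of $\Ll$, and show that every point of $\llk$ has a nearby sample that survives the threshold. Your one genuinely different ingredient is the covering step: you obtain ``every point of $\llk$ is within a small radius of a sample'' by a ball-covering plus Chernoff plus Borel--Cantelli argument, whereas the paper extracts it from the uniform-consistency event itself together with AK3 (its condition N3: since $\hat p_n(\w)\geq\lambda-g_n$ and the kernel tail beyond radius $\sigma_n^{1-\epsilon}$ contributes at most $c_K\sigma_n^{-d}A\exp(-\sigma_n^{-\alpha\epsilon})$ to the estimate, some sample must lie in $\B_{\sigma_n^{1-\epsilon}}(\w)$). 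That is the actual role of the ceiling $\sigma_n<(\log\log n)^{-(1+\xi)/\alpha\xi}$, not truncation inside the empirical-process bound as you suggest; under your route that hypothesis becomes essentially vestigial. Your worry about path-connectedness is also unnecessary: either argue as the paper does (the closure of $\B_{a_n/2}(\widehat\llk^{(n)})$ is a union of closed balls each meeting the connected set $\llk$, which it contains), or use that a compact connected set is $\epsilon$-chainable for every $\epsilon>0$.

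The genuine gap is quantitative: you tie the covering radius to $a_n/4$ and then demand both $a_n>2C(S_n+\eta_n)$ (needed for claim 3 and for $\widehat\llk^{(n)}\subset\B_{a_n/2}(\llk)$) and $\kappa a_n/4+\eta_n<S_n$ (needed so the covering samples pass the threshold). These are not ``compatible because $\eta_n=o(S_n)$'': ignoring the vanishing $\eta_n$ terms they force $2CS_n<a_n<4S_n/\kappa$, which is vacuous unless $\kappa C<2$, and nothing in A1--A2 guarantees that relation between the two constants. The paper avoids this by decoupling the two radii: the covering radius is $\sigma_n^{1-\epsilon}=o(S_n)$, so the threshold condition $\kappa\sigma_n^{1-\epsilon}+g_n<S_n$ holds for free, and only afterwards is $a_n:=2\bigl(C(S_n+g_n)+\sigma_n^{1-\epsilon}\bigr)$ defined, so the A2-driven lower bound on $a_n$ never competes with the A1-driven upper bound on the covering radius. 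The repair to your argument is local: run the Chernoff covering at radius $\rho_n=\sigma_n$ (say), which still has $n\rho_n^d>\log^d n$, require $\kappa\rho_n+\eta_n<S_n$, and set $a_n=2\bigl(C(S_n+\eta_n)+\rho_n\bigr)$; all three claims then go through as you describe.
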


\begin{proof}
The assumptions on $p$, $K$ and $\{\sigma_n\}_{n=1}^\infty$ satisfy the conditions for the uniform convergence of the density estimator,
\begin{align*}
\hat p_n(\x) := \frac{c_K}{n \sigma_n^d} \sum_{i=1}^n K\left(\frac{||\x - X_i||}{\sigma_n}\right).
\end{align*}
In particular, there exists a constant $B$ such that with probability one, the following inequality holds for all large $n$,
%
%
\begin{align*}
\sup_{\x\in \R^d}\left|\hat p_n(\x) - p(\x)\right| &\leq B\left(\sqrt{-\frac{\log(\sigma_n)}{n\sigma_n^d}}+\sigma_n\right)=:g_n,
\end{align*}
where we have combined the result of~\cite{gine2002rates} with the standard $\mathcal{O}(\sigma_n)$ bias of the kernel estimator of a density with bounded first derivative. It is easy to check that the assumptions on the sequences $\{\sigma_n\}_{n=1}^\infty$ and $\{S_n\}_{n=1}^\infty$ ensure that $g_n = o(S_n)$.

Now take any $\frac{\xi}{1+\xi} < \epsilon < 1$. Then for all $n$ large enough we have the above as well as the following,
\begin{description}
\item[N1:]$g_n + \kappa\sigma_n^{1-\epsilon}  < S_n$, where $\kappa$ is as in Assumption A1.
\item[N2:] $S_n + g_n < \gamma$, where $\gamma$ is as in assumption A2.
\item[N3:] $\frac{\sigma_n^d}{c_K}(\lambda - g_n - S_n) - A\exp(-\sigma_n^{-\alpha\epsilon}) \geq 1/n$, where $A$ and $\alpha$ are as in Assumption AK3.
\end{description}

\noindent
Note that N3 is ensured by the upper bound on $\sigma_n$. We now define the sequence $a_n = 2(C(S_n+g_n) + \sigma_n^{1-\epsilon})$. Then by N1 above we have $a_n = O(S_n)$. We now go on to show that $\{a_n\}_{n=1}^\infty$ satisfies the three results stated in the theorem.
Combining N1 and N2 above, we have,
\begin{align*}
\min\left\{p(X_j) \big| j\leq n, \hat p_n(X_j) > \lambda - S_n \right\} &> \lambda - S_n - g_n\\
\Rightarrow \max \left\{d(X_j, \Ll)\big| j\leq n, \hat p_n(X_j) > \lambda - S_n\right\} &< C(S_n+g_n)\\
\end{align*}
As a result, every element of $\{X_1, ..., X_n\}$ whose estimated density is above $\lambda - S_n$ is within $C(S_n+g_n)$ of a component of $\Ll$. Result 3. in the statement of the theorem follows immediately. Furthermore, take any $\w \in \Ll$. Then, $\hat p_n(\w) \geq \lambda - g_n$, and so
\begin{align*}
\lambda - g_n \leq \frac{c_k}{n\sigma_n^d}\sum_{i=1}^n K\left(\frac{||\w - X_i||}{\sigma_n}\right) &\leq \frac{c_k}{n\sigma_n^d}\sum_{i: ||\w-X_i|| < \sigma_n^{1-\epsilon}} K\left(\frac{||\w - X_i||}{\sigma_n}\right) + \frac{c_K}{\sigma_n^d}K\left(\sigma_n^{-\epsilon}\right)\\
\Rightarrow  \frac{n\sigma_n^d(\lambda - g(n))}{c_K} - nA\exp(-\sigma_n^{-\alpha\epsilon}) &\leq \sum_{i: ||\w-X_i|| < \sigma_n^{1-\epsilon}}K\left(\frac{||\w - X_i||}{\sigma_n}\right)\\
& \leq \left|\{X_1, ..., X_n\} \cap \B_{\sigma_n^{1-\epsilon}}(\w)\right|,
\end{align*}
and so by N3, $\left|\{X_1, ..., X_n\} \cap \B_{\sigma_n^{1-\epsilon}}(\w)\right| \geq 1$. As a result, for any $\w \in \llk$, for some $k\in [c]$, there exists $j \in [n]$ such that $d(\w, X_j) < \sigma_n^{1-\epsilon}$, and so $p(X_j) \geq p(\w)+\kappa \sigma_n^{1-\epsilon} \Rightarrow \hat p_n(X_j) > \lambda - \kappa \sigma_n^{1-\epsilon} - g_n > \lambda - S_n \Rightarrow X_j \in \widehat \llk^{(n)}$.

Notice also, from above, that, since there is no $j\in [n]$ s.t. $\hat p(X_j) > \lambda - S_n$ and $d(X_j, \Ll) \geq C(S_n + g_n)$, we have
\begin{align*}
\widehat \llk^{(n)} \subset \B_{C(S_n+g_n)}(\llk).
\end{align*}
Now take any $\w\in \B_{C(S_n+g_n)}(\llk)$. Then $d(\w, \llk) < C(S_n + g_n) \Rightarrow d(\w, \widehat \llk^{(n)}) < C(S_n + g_n)+\sigma_n^{1-\epsilon}$, since every point in $\llk$ is within $\sigma_n^{1-\epsilon}$ of some $X_j, j\in [n]$ with $\hat p_n(X_j) > \lambda - S_n$.
Since $\w$ was arbitrary, we thus have that $\widehat \llk^{(n)}$ is connected at distance $2(C(S_n+g_n)+\sigma_n^{1-\epsilon}) = a_n$. This proves result 1. in the theorem.

Result 2. also follows immediately from above, since we have established that every $\w \in \llk$ lies in $\B_{\sigma_n^{1+\epsilon}}(\widehat \llk^{(n)}) \subset \B_{\frac{a_n}{2}}(\widehat \llk^{(n)})$, and also that $\widehat \llk^{(n)} \subset \B_{C(S_n+g_n)}(\llk) \subset \B_{\frac{a_n}{2}}(\llk)$.

\end{proof}

The first and second results in the above Lemma ensures that points in the sequence $X_1, X_2, ...$ which fall in the same level set component are closely connected for large values of $n$, whereas the second and third results ensure that, with probability one, if $X_i\in\llk$ and $X_j\not \in \llk$, for some $k$, then for all large $n$, either $X_j \not \in \widehat\Ll^{(n)}$ or there is no subset of $\widehat\Ll^{(n)}$ containing both $X_i$ and $X_j$ which is closely connected.



Next we show that the degree of connectedness and separation of points falling in each of the level set components is sufficient for spectral clustering to allow trivial recovery of the desired partition, almost surely, as $n\to\infty$. As always, we cover the different Laplacians separately for completeness.

\begin{lemma}\label{thm:evec_consistency_1}
Let $X_1, X_2, ...$ be an i.i.d. sequence of random variables on $\R^d$, $d\geq 2$, with density $p$ satisfying assumptions A1--A3 for level $\lambda > 0$, and suppose that the number of components of $\Ll$ is $c < \infty$. Let $K:\R^+\to\R^+$ satisfy assumptions AK1--AK3. Let $0 < \epsilon < 1$ be fixed and let $\{\sigma_n\}_{n=1}^\infty$ be a null sequence satisfying, for all large $n$,
\begin{align*}
n^{-1/(d+1)} < \sigma_n < log(n)^{-\nu},
\end{align*}
for any $\nu > \alpha^{-1}$, where $\alpha$ is as in assumption AK3.
For each $n \in N$ let
\begin{align*}
\widehat \Ll^{(n)} = \left\{X_j \big| j\leq n, \frac{c_K}{n\sigma_n^d}\sum_{i=1}^n K(||X_i - X_j||/\sigma_n) > \lambda - D\sigma_n^{1-\epsilon}\right\},
\end{align*}
for any fixed $D>0$.
 Let $\L^{(n)}$ be the unnormalised Laplacian of the graph with vertices $\widehat \Ll^{(n)}$ and edge weights determined using $K_{\sigma_n}$, and let $\U^{(n)}$ be its eigenvectors. Now, for each $i,j,n\in\N$, define
 \begin{align*}
 d_{ij}^{(n)} = \left\{\begin{array}{ll}
 ||\U^{(n)}_{(i),1:c}-\U^{(n)}_{(j),1:c}||, \mbox{ if $X_i, X_j \in \widehat\Ll^{(n)}$}; X_i = \widehat\Ll^{(n)}_{(i)}, X_j = \widehat\Ll^{(n)}_{(j)}\\
 1, \mbox{ otherwise.}
 \end{array}\right.
 \end{align*}
Implicitly if either $i>n$ or $j>n$ then $d_{ij}^{(n)} = 1$. Then, with probability one, for all fixed $i, j \in \N, k\in[c]$ we have,
\begin{align*}
X_i, X_j \in \llk \Rightarrow &\lim_{n\to\infty} nd_{ij}^{(n)} = 0,\\
X_i \in\llk, X_j \not\in \llk \Rightarrow &\lim_{n\to\infty} nd_{ij}^{(n)} = \infty.
\end{align*}
\end{lemma}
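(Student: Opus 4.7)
The plan is to combine Lemma~\ref{thm:consistency}, which describes the geometric structure of the empirical level-set estimate in $\R^d$, with Lemma~\ref{thm:mmc_unnorm}, which lifts such geometric structure to the eigenvector representation. First I set $S_n := D\sigma_n^{1-\epsilon}$ and fix any $\xi \in (1, 1/(1-\epsilon))$. A routine check---using $\sigma_n > n^{-1/(d+1)}$ together with $d \geq 2$ to dominate the lower bounds $n^{-1/d}\log n$ and $(\log(n^{1/d}/\log n)/(nS_n^{\xi}))^{1/d}$, and using $\sigma_n < \log(n)^{-\nu}$ with $\nu > 1/\alpha$ together with $\xi(1-\epsilon) < 1$ to dominate the upper bounds $(\log\log n)^{-(1+\xi)/(\alpha\xi)}$ and $S_n^{\xi} = D^\xi \sigma_n^{\xi(1-\epsilon)}$---confirms $\sigma_n$ lies in the admissible window of Lemma~\ref{thm:consistency}. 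That lemma then produces a sequence $a_n = \mathcal{O}(\sigma_n^{1-\epsilon})$ and sets $\widehat\llk^{(n)}$ such that, on a single full-probability event and for all large $n$, each $\widehat\llk^{(n)}$ is connected at distance $a_n$ and contained in $\B_{a_n}(\llk)$; combining properties~2 and~3 of that lemma yields $\widehat\Ll^{(n)} = \bigcup_{k=1}^c \widehat\llk^{(n)}$, a disjoint union for large $n$.

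Since $\Ll$ is a finite disjoint union of compact components, $\delta^\star := \min_{k \neq k'} d(\llk, \ell(\lambda,k')) > 0$. The containment $\widehat\llk^{(n)} \subset \B_{a_n}(\llk)$ then implies that for all large $n$ the partition $\{\widehat\llk^{(n)}\}_{k=1}^c$ of $\widehat\Ll^{(n)}$ has within-cluster connectivity $\max_l \delta_l \leq a_n$ and between-cluster separation at least $\delta^\star - 2a_n$, so the gap appearing in Lemma~\ref{thm:mmc_unnorm} is bounded below by $\delta^\star/2$. Write $N_n := |\widehat\Ll^{(n)}|$; the strong law of large numbers combined with uniform density consistency gives $N_n = \Theta(n)$ almost surely, since every $X_j$ with $p(X_j) \geq \lambda$ satisfies $\hat p_n(X_j) \geq \lambda - g_n > \lambda - S_n$ (uniformly in $j$), so $N_n \geq |\{j \leq n : X_j \in \Ll\}| = n\mu(\Ll) + o(n)$, while trivially $N_n \leq n$. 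Fixing any $z > 27$, the assumption $\sigma_n < \log(n)^{-\nu}$ with $\nu > 1/\alpha$ ensures $\sigma_n < (\delta^\star/2)\log(A N_n^{z/3})^{-1/\alpha}$ for all large $n$, so Lemma~\ref{thm:mmc_unnorm} applies with $c$ clusters and sample size $N_n$.

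The conclusion of Lemma~\ref{thm:mmc_unnorm} together with $N_n = \Theta(n)$ gives within-cluster eigenvector distances bounded above by $\mathcal{O}(n^{-(z-9)/6})$ and between-cluster distances bounded below by $\sqrt{2/N_n} - \mathcal{O}(n^{-(z-15)/24})$. Multiplying by $n$, the within-cluster expression becomes $\mathcal{O}(n^{1-(z-9)/6}) = o(1)$ because $z > 15$, while the between-cluster expression becomes $\Theta(\sqrt{n}) - \mathcal{O}(n^{1-(z-15)/24}) \to \infty$ because $(z-15)/24 > 1/2$. For fixed $X_i, X_j \in \llk$, both points lie in $\widehat\llk^{(n)}$ for all large $n$, so the within-cluster bound gives $n d_{ij}^{(n)} \to 0$. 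For $X_i \in \llk$ with $X_j \notin \llk$, either $X_j \in \ell(\lambda, k')$ for some $k' \neq k$---in which case $X_j \in \widehat\ell(\lambda, k')^{(n)}$ eventually and the between-cluster bound gives $n d_{ij}^{(n)} \to \infty$---or $X_j \notin \Ll$, so $p(X_j) < \lambda$ is bounded away from $\lambda$ and $\hat p_n(X_j) < \lambda - S_n$ eventually, giving $d_{ij}^{(n)} = 1$ by definition and again $n d_{ij}^{(n)} = n \to \infty$. The main obstacle is organisational rather than analytical: one must collect the countably many ``for all large $n$'' statements indexed by $(i,j,k)$ on a single full-measure event, which is obtained by intersecting the a.s.\ event of Lemma~\ref{thm:consistency}, the uniform density consistency event, the event $\{p(X_j) \neq \lambda \text{ for all } j\}$ (full measure since $\partial\Ll$ is $p$-null under A1), and the LLN event for $N_n$.
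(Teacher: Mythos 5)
Your proof is correct and follows essentially the same route as the paper's: set $S_n = D\sigma_n^{1-\epsilon}$ with $\xi\in(1,1/(1-\epsilon))$ to invoke Lemma~\ref{thm:consistency}, use the resulting connectivity/separation structure of the $\widehat\llk^{(n)}$ to apply Lemma~\ref{thm:mmc_unnorm} with a large enough $z$, and then translate the within- and between-cluster eigenvector bounds into the two limits. Your explicit handling of the graph size $|\widehat\Ll^{(n)}|$ versus $n$ and the assembly of the almost-sure events are welcome refinements of details the paper passes over, but they do not change the argument.
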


\begin{proof}
First note that if we set, for each $n\in \N$, $S_n = D\sigma_n^{1-\epsilon}$, then $\{\sigma_n\}_{n=1}^\infty$ and $\{S_n\}_{n=1}^\infty$ satisfy the requirements in Lemma~\ref{thm:consistency}, if we choose $1 < \xi < \frac{1}{1-\epsilon}$.
Now,
since $\Ll$ is closed and has finitely many components, we know that there exists a $\Delta > 0$ such that for all $k,l\in[c]$, $\B_\Delta\llk \cap \B_\Delta \ell(\lambda,l) = \emptyset$. Combining the results of Lemma~\ref{thm:consistency}, it is straightforward to verify that there exists a $M>0$ independent of $n$ such that, with probability one, for all large $n$, we have
\begin{enumerate}
\item $\widehat\llk^{(n)} := \widehat \Ll^{(n)} \cap \B_{M\sigma_n^{1-\epsilon}}(\llk)$ is connected at distance $M\sigma_n^{1-\epsilon}$ for all $k\in [c]$.
\item $\llk \subset \B_{M\sigma_n^{1-\epsilon}}(\widehat\llk^{(n)}) \subset \B_{2M\sigma_n^{1-\epsilon}}(\llk)$.
\item For all $k, l \in [c]$, $k\not=l$, we have $d(\widehat \llk^{(n)}, \widehat{ \ell(\lambda, l)}^{(n)}) \geq \Delta$.
\item $\{\widehat{\ell(\lambda, 1)}^{(n)}, ..., \widehat{\ell(\lambda, c)}^{(n)}\}$ is a partition of $\widehat \Ll^{(n)}$.
\end{enumerate}
For any $z \in \R$ we clearly have, for all large $n$, that $\sigma_n < (\Delta - M\sigma_n^{1-\epsilon})\log(An^{z/3})^{-1/\alpha}$, since $\sigma_n < \log(n)^{-\nu}$, where $\alpha\nu > 1$. Combining this with points 1, 3 and 4 above we can apply the results of Lemma~\ref{thm:mmc_unnorm}.
Now, by point 2 above, $X_i, X_j \in \llk \Rightarrow X_i, X_j \in \widehat\llk^{(n)}$ for all large $n$, and so by Lemma~\ref{thm:mmc_unnorm} we have, for all large $n$,
\begin{align*}
d_{ij}^{(n)} = ||\U^{(n)}_{(i),1:c} - \U^{(n)}_{(j),1:c}|| =\mathcal{O}(n^{\frac{9-z}{6}}) = o(n^{-1}),
\end{align*}
by choosing $z$ large enough. On the other hand $X_j\not \in \llk \Rightarrow \exists m\in \N$ s.t. $X_j \not \in \B_{2M\sigma_n^{1-\epsilon}}(\llk) \ \forall n \geq m$, since $\llk$ is closed. This implies that for all large $n$ we have $X_j \not \in \widehat\llk^{(n)}$, and hence, either $X_j\not \in \widehat\Ll$ or $||\U_{(i),1:c} - \U_{(j),1:c}|| \geq \sqrt{\frac{2}{n}} - \mathcal{O}(n^{\frac{15-z}{24}})$. Therefore, again by choosing $z$ large enough, we have
\begin{align*}
nd_{ij}^{(n)} \to \infty,
\end{align*}
as required.
\end{proof}

The case of the normalised Laplacian of the graph where reflexive edges are not removed follows exactly analogously.

\begin{lemma}\label{thm:evec_consistency_2}
Let the conditions of Lemma~\ref{thm:evec_consistency_1} hold.
For each $n \in N$ let $\Ln^{(n)}$ be the normalised Laplacian of the graph with vertices $\widehat \Ll^{(n)}$ and edge weights determined using $K_{\sigma_n}$, and let $\U^{(n)}$ be its eigenvectors and $\D^{(n)}$ the corresponding degree matrix. Now, for each $i,j,n\in\N$, define
 \begin{align*}
 d_{ij}^{(n)} = \left\{\begin{array}{ll}
 ||\V^{(n)}_{(i),1:c}-\V^{(n)}_{(j),1:c}||, \mbox{ if $X_i, X_j \in \widehat\Ll^{(n)}$}; X_i = \widehat\Ll^{(n)}_{(i)}, X_j = \widehat\Ll^{(n)}_{(j)}\\
 1, \mbox{ otherwise,}
 \end{array}\right.
 \end{align*}
where $\V^{(n)} = (\D^{(n)})^{-1/2}\U^{(n)}$. Then, with probability one, for all fixed $i, j \in \N, k\in[c]$ we have,
\begin{align*}
X_i, X_j \in \llk \Rightarrow &\lim_{n\to\infty} nd_{ij}^{(n)} = 0,\\
X_i \in\llk, X_j \not\in \llk \Rightarrow &\lim_{n\to\infty} nd_{ij}^{(n)} = \infty.
\end{align*}
\end{lemma}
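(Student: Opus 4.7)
The plan is to mirror the proof of Lemma~\ref{thm:evec_consistency_1} essentially line by line, substituting Lemma~\ref{thm:mmc_norm} for Lemma~\ref{thm:mmc_unnorm} at the step that converts geometric information about $\widehat\Ll^{(n)}$ into eigenvector distances. As in the previous proof, we first set $S_n = D\sigma_n^{1-\epsilon}$ and choose $1 < \xi < (1-\epsilon)^{-1}$, so that the pair $\{\sigma_n\},\{S_n\}$ satisfies the hypotheses of Lemma~\ref{thm:consistency}. Since $\Ll$ is closed with finitely many components, fix $\Delta > 0$ with $\B_\Delta(\llk) \cap \B_\Delta(\ell(\lambda,l)) = \emptyset$ for all $k\neq l$, and, exactly as in the proof of Lemma~\ref{thm:evec_consistency_1}, use the three conclusions of Lemma~\ref{thm:consistency} to produce a constant $M>0$ (independent of $n$) such that, with probability one, for all large $n$, the sets
\[
\widehat\llk^{(n)} := \widehat\Ll^{(n)} \cap \B_{M\sigma_n^{1-\epsilon}}(\llk)
\]
form a partition of $\widehat\Ll^{(n)}$, each $\widehat\llk^{(n)}$ is connected at distance $M\sigma_n^{1-\epsilon}$, and $d(\widehat\llk^{(n)},\widehat{\ell(\lambda,l)}^{(n)}) \geq \Delta$ for all $k\neq l$.

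Next I verify that the kernel-scale hypothesis of Lemma~\ref{thm:mmc_norm} is satisfied. For any fixed $z$, the bound $\sigma_n < \log(n)^{-\nu}$ with $\alpha\nu > 1$ forces $\sigma_n < (\Delta - M\sigma_n^{1-\epsilon})\log(An^{z/3})^{-1/\alpha}$ for all sufficiently large $n$, and the constraint $n^{z-18} \geq 81 c^{15}$ is trivially met for $n$ large. Applying Lemma~\ref{thm:mmc_norm} to the graph on $\widehat\Ll^{(n)}$ with the clusters $\widehat{\ell(\lambda,1)}^{(n)}, \ldots, \widehat{\ell(\lambda,c)}^{(n)}$ then yields
\[
\max_{\substack{X_i,X_j\in \widehat\llk^{(n)}}} \|\V^{(n)}_{(i),1:c} - \V^{(n)}_{(j),1:c}\| \leq \left(\frac{c^{3}}{n^{z-9}}\right)^{1/6},
\]
\[
\min_{\substack{X_i\in \widehat\llk^{(n)},\ X_j\notin \widehat\llk^{(n)}}} \|\V^{(n)}_{(i),1:c} - \V^{(n)}_{(j),1:c}\| \geq \sqrt{\frac{2}{n}} - 6\left(\frac{c^{27}}{n^{z-18}}\right)^{1/24}.
\]

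To finish, choose $z$ large enough that both $n^{(9-z)/6} = o(n^{-1})$ and $\sqrt{2/n}-6(c^{27}/n^{z-18})^{1/24}$ is bounded below by $c'n^{-1/2}$ for some $c'>0$; for instance $z > 15$ plus $z - 18 > 24/2 = 12$, so any $z > 30$ suffices. If $X_i, X_j \in \llk$, then by the containment $\llk \subset \B_{M\sigma_n^{1-\epsilon}}(\widehat\llk^{(n)})$ and closedness of $\Ll$, for all large $n$ both points lie in the same $\widehat\llk^{(n)}$, so the upper bound applies and $nd_{ij}^{(n)}\to 0$. If $X_i \in \llk$ but $X_j\notin \llk$, then since $\llk$ is closed there exists an $m\in\N$ with $X_j \notin \B_{2M\sigma_n^{1-\epsilon}}(\llk)$ for all $n\geq m$; thus either $X_j \notin \widehat\Ll^{(n)}$ (giving $d_{ij}^{(n)}=1$) or $X_j$ lies in a different $\widehat{\ell(\lambda,l)}^{(n)}$, in which case the lower bound gives $d_{ij}^{(n)} \geq c'n^{-1/2}$ and hence $nd_{ij}^{(n)}\to\infty$.

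There is no real obstacle beyond bookkeeping: the only substantive difference from the proof of Lemma~\ref{thm:evec_consistency_1} is that Lemma~\ref{thm:mmc_norm} requires the slightly stronger condition $n^{z-18}\geq 81c^{15}$ rather than $n^{z-15}\geq 81c^{15}$, which only affects the threshold on $z$. The reliance on the lower bound $\D_{ii}\geq 1$ (used implicitly inside Lemma~\ref{thm:mmc_norm} to control the smallest eigenvalue of $\U^\top \D^{-1}\U$) is automatic here because reflexive edges are retained.
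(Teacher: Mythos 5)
Your proposal is correct and follows exactly the route the paper intends: the paper's own proof of this lemma simply states that it is ``exactly analogous'' to that of Lemma~\ref{thm:evec_consistency_1}, with the substitution of Lemma~\ref{thm:mmc_norm} for Lemma~\ref{thm:mmc_unnorm} being the only substantive change, which is precisely what you carry out. Your explicit verification of the slightly different exponent condition ($n^{z-18}$ versus $n^{z-15}$) and of the role of $\D_{ii}\geq 1$ is accurate bookkeeping that the paper leaves implicit.
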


\begin{proof}
The proof is exactly analogous to the previous proof.
\end{proof}

When the reflexive edges in the graph are removed, then, as in previous cases, then substantial modifications are needed. These are given explicitly in the following lemma.

\begin{lemma}\label{thm:evec_consistency_3}
Let the conditions of Lemma~\ref{thm:evec_consistency_1} hold, and let $K(x) = \exp(-x^\alpha)$ for some $\alpha > 0$.
For each $n \in N$ let $\Lno^{(n)}$ be the normalised Laplacian of the graph with vertices $\widehat \Ll^{(n)}$ and edge weights determined using $K_{\sigma_n}$, but with reflexive edges removed, and let $\U^{(n)}$ be its eigenvectors and $\D^{(n)}$ the corresponding degree matrix. Now, for each $i,j,n\in\N$, define
 \begin{align*}
 d_{ij}^{(n)} = \left\{\begin{array}{ll}
 ||\V^{(n)}_{(i),1:c}-\V^{(n)}_{(j),1:c}||, \mbox{ if $X_i, X_j \in \widehat\Ll^{(n)}$}; X_i = \widehat\Ll^{(n)}_{(i)}, X_j = \widehat\Ll^{(n)}_{(j)}\\
 1, \mbox{ otherwise,}
 \end{array}\right.
 \end{align*}
where $\V^{(n)} = (\D^{(n)})^{-1/2}\U^{(n)}$. Then, with probability one, for all fixed $i, j \in \N, k\in[c]$ we have,
\begin{align*}
X_i, X_j \in \llk \Rightarrow &\lim_{n\to\infty} n^{1/2}\exp(\sigma_n^{-\alpha\sqrt{\epsilon}})d_{ij}^{(n)} = 0,\\
X_i \in\llk, X_j \not\in \llk \Rightarrow &\lim_{n\to\infty} n^{1/2}\exp(\sigma_n^{-\alpha\sqrt{\epsilon}})d_{ij}^{(n)} = \infty.
\end{align*}
\end{lemma}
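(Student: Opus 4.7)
The plan is to mirror the proofs of Lemmas~\ref{thm:evec_consistency_1} and~\ref{thm:evec_consistency_2}, but to replace the appeal to Lemma~\ref{thm:mmc_norm} with the intermediate displays from inside the proof of Lemma~\ref{thm:mmc_norm0}; the ratio bound stated in Lemma~\ref{thm:mmc_norm0} alone is not enough to separate the within- and between-cluster scales that matter here. Setting $S_n = D\sigma_n^{1-\epsilon}$ and choosing $1<\xi<1/(1-\epsilon)$, Lemma~\ref{thm:consistency} produces, almost surely and for all large $n$, a partition $\widehat\llk^{(n)} = \widehat\Ll^{(n)}\cap\B_{M\sigma_n^{1-\epsilon}}(\llk)$ of $\widehat\Ll^{(n)}$ whose cells are each connected at distance $M\sigma_n^{1-\epsilon}$ and mutually separated by some fixed $\Delta>0$, inherited from the finitely many closed components of $\Ll$. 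Pairs with either index exceeding $n$, or with an endpoint outside $\widehat\Ll^{(n)}$, give $d_{ij}^{(n)}=1$ by definition and satisfy the between-cluster conclusion automatically, so only this event needs to be analysed.

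On that event I would set $\delta_\star^{(n)} := M\sigma_n^{1-\epsilon}$ and $\delta^{(n)} := \Delta^\alpha - 3(\delta_\star^{(n)})^\alpha \to \Delta^\alpha$, so that the assumption $\sigma_n<\log(n)^{-\nu}$ with $\alpha\nu>1$ delivers, for every fixed $z\geq 10$, the condition $\sigma_n<(\delta^{(n)})^{1/\alpha}\log(13^8 c^9 n^z)^{-1/\alpha}$ required by Lemma~\ref{thm:mmc_norm0}. Reading off the intermediate inequalities inside that proof, within-cluster pairs obey $\|\V^{(n)}_{(i),1:c}-\V^{(n)}_{(j),1:c}\|\leq\sqrt{n^3 c}\exp(-((\delta_\star^{(n)})^\alpha+\delta^{(n)})/(2\sigma_n^\alpha))$, while between-cluster pairs obey $\|\V^{(n)}_{(i),1:c}-\V^{(n)}_{(j),1:c}\|\geq\tfrac{1}{2}n^{-1/2}\exp(-(\delta_\star^{(n)})^\alpha/(2\sigma_n^\alpha))$. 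Using $(\delta_\star^{(n)})^\alpha/\sigma_n^\alpha = M^\alpha\sigma_n^{-\alpha\epsilon}$ and multiplying by $n^{1/2}\exp(\sigma_n^{-\alpha\sqrt{\epsilon}})$, the within-cluster claim reduces to $\sigma_n^{-\alpha\sqrt{\epsilon}}-\delta^{(n)}/(2\sigma_n^\alpha)\to-\infty$ (true because $\sqrt{\epsilon}<1$ and $\delta^{(n)}\to\Delta^\alpha>0$), while the between-cluster claim reduces to $\sigma_n^{-\alpha\sqrt{\epsilon}} - M^\alpha\sigma_n^{-\alpha\epsilon}/2 \to +\infty$ (true because $\sqrt{\epsilon}>\epsilon$). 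Polynomial-in-$n$ prefactors are absorbed by these dominant exponentials, thanks to $\alpha\nu>1$.

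The delicate point is the calibration of the auxiliary exponent $\sqrt{\epsilon}$. The within-cluster distances decay essentially as $\exp(-c/\sigma_n^\alpha)$, whereas the between-cluster distances decay only as $\exp(-c'/\sigma_n^{\alpha\epsilon})$, because the diagonal normalisation by $K_{\sigma_n}(\delta_\star^{(n)})$ is no longer bounded below by one once the reflexive edges are removed. The normalising factor $\exp(\sigma_n^{-\alpha\sqrt{\epsilon}})$ must therefore sit strictly between these two rates, and $\sqrt{\epsilon}$ is a convenient choice because $\epsilon < \sqrt{\epsilon} < 1$ for every $\epsilon\in(0,1)$; any strict interpolant between $\epsilon$ and $1$ would serve equally well. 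This slower effective separation rate, compared with the $n$-scaling obtained in Lemmas~\ref{thm:evec_consistency_1} and~\ref{thm:evec_consistency_2}, is the price paid for removing the reflexive edges.
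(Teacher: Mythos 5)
Your proposal is correct and follows essentially the same route as the paper: invoke Lemma~\ref{thm:consistency} to obtain the partition of $\widehat\Ll^{(n)}$ with connectivity $M\sigma_n^{1-\epsilon}$ and fixed separation, then use the intermediate within- and between-cluster displays from inside the proof of Lemma~\ref{thm:mmc_norm0} (rather than its final ratio bound, exactly as the paper does) and check that $n^{1/2}\exp(\sigma_n^{-\alpha\sqrt{\epsilon}})$ interpolates strictly between the $\exp(-c/\sigma_n^{\alpha})$ within-cluster rate and the $\exp(-c'/\sigma_n^{\alpha\epsilon})$ between-cluster rate. Your identification of $\sqrt{\epsilon}$ as merely a convenient strict interpolant between $\epsilon$ and $1$ is also the correct reading of the paper's choice.
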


\begin{proof}
The proof is similar, but in this case we state the results from Lemma~\ref{thm:consistency} slightly differently. Specifically, we replace point 3 in the proof of Lemma~\ref{thm:evec_consistency_1} with
\begin{enumerate}
\item[3.] With probability one, for all $n$ large enough and for all $k, l \in [c]$, $k\not=l$, we have $d(\widehat \llk^{(n)}, \widehat{ \ell(\lambda, l)}^{(n)})^\alpha - 3M^\alpha\sigma_n^{\alpha(1-\epsilon)} \geq \Delta^\alpha$.
\end{enumerate}
Now, for large $n$, $\sigma_n < \Delta\log(13^8c^9n^{10})^{-1/\alpha}$.
Using the first part of the proof of Lemma~\ref{thm:mmc_norm0}, we thus find that if $X_i, X_j \in \widehat\llk^{(n)}$, then
\begin{align*}
||\V_{(i),1:c}^{(n)} - \V_{(j),1:c}^{(n)}|| \leq n^{3/2}c^{1/2}\exp\left(-\frac{\Delta^\alpha}{2\sigma_n^\alpha}\right)\exp\left(-\frac{M^\alpha\sigma_n^{\alpha(1-\epsilon)}}{2\sigma_n^\alpha}\right)\leq n^{3/2}c^{1/2}\exp\left(-\frac{\Delta^\alpha}{2\sigma_n^\alpha}\right)\\
\Rightarrow n^{1/2}\exp(\sigma_n^{-\alpha\sqrt{\epsilon}})||\V_{(i),1:c}^{(n)} - \V_{(j),1:c}^{(n)}|| \leq n^{2}c^{1/2}\exp\left(-\frac{\Delta^\alpha-2\sigma_n^{\alpha(1-\sqrt{\epsilon})}}{2\sigma_n^\alpha}\right) \to 0 \mbox{ as } n\to \infty,
\end{align*}
similar to the previous proofs.
On the other hand, if $X_i\in \widehat\llk^{(n)}$ and $X_j\in \widehat{\ell(\lambda, l)}^{(n)}$ for $l\not = k$, then again using part of the proof of Lemma~\ref{thm:mmc_norm0}, we have
\begin{align*}
n^{1/2}\exp(\sigma_n^{-\alpha\sqrt{\epsilon}})||\V_{(i),1:c}^{(n)} - \V_{(j),1:c}^{(n)}|| &\geq \exp\left(\frac{2\sigma_n^{\alpha(\epsilon-\sqrt{\epsilon})}-M^\alpha}{2\sigma^{\alpha\epsilon}}\right) \to \infty \mbox{ as } n\to \infty,
\end{align*}
since $\epsilon < 1 \Rightarrow \sqrt{\epsilon} > \epsilon$ and so $\sigma_n^{\alpha(\epsilon-\sqrt{\epsilon})}\to\infty$.
The rest of the proof is analogous to the proof of Lemma~\ref{thm:evec_consistency_1}.
\end{proof}

The above three results show that the level set components are consistently estimated by spectral clustering applied to the estimated level set, $\widehat\Ll^{(n)}$, assuming that the number of components is known. Dependence on this value arises in the fact that the terms $d_{ij}^{(n)}$ are computed from the first $c$ columns of the eigenvectors of $\L^{(n)}, \Ln^{(n)}$ and $\Lno^{(n)}$. The final three results show that $c$ can be consistently estimated by considering scaled sequences of the eigenvalues of the various Laplacian matrices. Combining these with the previous results therefore ensures that the level set components can be consistently estimated using the approach described herein. Note that the resuirements on the sequence of scaling parameters, $\{\sigma_n\}_{n=1}^\infty$ occur in their strictest form in the following three results. Using a sequence satisfying the conditions which follow therefore ensures all results of this section hold almost surely.

\begin{lemma}\label{thm:eval_consistency_1}
Let the conditions of Lemma~\ref{thm:evec_consistency_1} hold, except assume now that $\sigma_n < \log(n)^{-\nu}$ for all large $n$, where in this case $\nu > \alpha^{-1}\epsilon^{-1/2}$.
%
%
%
For each $l\in\left[\big|\widehat \Ll^{(n)}\big|\right]$, let $e_l^{(n)}$ be the $l$-th eigenvalue of $\L^{(n)}$.
Then we have,
\begin{align*}
\frac{e^{(n)}_l}{f_n} &\xrightarrow{a.s.} 0,  \mbox{ for } l\in [c], \ \ 
\frac{e^{(n)}_{c+1}}{f_n} \xrightarrow{a.s.} \infty,
\end{align*}
%
%
for any sequence $\{f_n\}_{n=1}^\infty$ of the form $f_n = Hn^hK(\sigma_n^{-\sqrt{\epsilon}})$, where $H>0$ and $h\in \R$ are any fixed constants.
\end{lemma}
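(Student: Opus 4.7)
The plan is to apply Lemma~\ref{thm:eval_bounds_1} to the graph $\G=(\widehat\Ll^{(n)},K_{\sigma_n})$, using exactly the partition of $\widehat\Ll^{(n)}$ into estimated level-set components that was established in the proof of Lemma~\ref{thm:evec_consistency_1}. With $S_n=D\sigma_n^{1-\epsilon}$ and a choice of $1<\xi<1/(1-\epsilon)$, Lemma~\ref{thm:consistency} supplies constants $M,\Delta>0$ such that, with probability one, for all $n$ large enough the sets $\widehat\llk^{(n)}:=\widehat\Ll^{(n)}\cap\B_{M\sigma_n^{1-\epsilon}}(\llk)$, $k\in[c]$, partition $\widehat\Ll^{(n)}$, each $\widehat\llk^{(n)}$ is connected at distance $\delta_n:=M\sigma_n^{1-\epsilon}$, and any two distinct pieces are separated by at least $\Delta$; also $|\widehat\Ll^{(n)}|\leq n$. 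Feeding this partition into Lemma~\ref{thm:eval_bounds_1} with $k=c$ gives, for all such $n$,
\begin{align*}
\sum_{l=1}^c e_l^{(n)}&\leq nc\,K_{\sigma_n}(\Delta),\\
e_{c+1}^{(n)}&\geq \frac{K_{\sigma_n}(\delta_n)}{9n^2(c+1)^4}-n^3c\,K_{\sigma_n}(\Delta),
\end{align*}
so the theorem reduces to two super-polynomial ratio estimates.

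The first is that $K_{\sigma_n}(\Delta)/K(\sigma_n^{-\sqrt{\epsilon}})$ decays faster than any polynomial in $n$. Since $\sqrt\epsilon<1$, I would have $\sigma_n^{-\sqrt\epsilon}\leq \Delta/(2\sigma_n)$ eventually, and AK3 gives
\[
\frac{K(\Delta/\sigma_n)}{K(\sigma_n^{-\sqrt\epsilon})}\leq A\exp\!\bigl(-(\Delta/\sigma_n-\sigma_n^{-\sqrt\epsilon})^\alpha\bigr)\leq A\exp\!\bigl(-(\Delta/(2\sigma_n))^\alpha\bigr).
\]
The hypothesis $\sigma_n<\log(n)^{-\nu}$ with $\alpha\nu>\epsilon^{-1/2}>1$ then bounds the right-hand side by $A\exp(-(\Delta/2)^\alpha\log(n)^{\alpha\nu})$, which is $o(n^{-p})$ for every fixed $p>0$; in particular $nc\,K_{\sigma_n}(\Delta)/f_n\to 0$ a.s.\ for any fixed $H>0$ and $h\in\R$.

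The second is that $K_{\sigma_n}(\delta_n)/K(\sigma_n^{-\sqrt\epsilon})$ grows faster than any polynomial in $n$. Rearranging AK3 as $K(y)\geq K(x)\exp((x-y)^\alpha)/A$ for $0\leq y\leq x$, and using $\epsilon<\sqrt\epsilon$ so that $\delta_n/\sigma_n=M\sigma_n^{-\epsilon}\leq\tfrac12\sigma_n^{-\sqrt\epsilon}$ eventually,
\[
\frac{K_{\sigma_n}(\delta_n)}{K(\sigma_n^{-\sqrt\epsilon})}\geq \frac{1}{A}\exp\!\bigl((\sigma_n^{-\sqrt\epsilon}-M\sigma_n^{-\epsilon})^\alpha\bigr)\geq \frac{1}{A}\exp\!\bigl(2^{-\alpha}\sigma_n^{-\alpha\sqrt\epsilon}\bigr).
\]
Since $\nu\alpha\sqrt\epsilon>1$, the exponent is at least $2^{-\alpha}\log(n)^{\nu\alpha\sqrt\epsilon}$, which exceeds any multiple of $\log n$, so this ratio dominates every polynomial in $n$. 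Combined with the super-polynomial decay of the subtracted term from the previous paragraph, this gives $e_{c+1}^{(n)}/f_n\to\infty$ a.s.

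The main obstacle is the two-sided bookkeeping with AK3: the choice of $\sqrt\epsilon$ in $f_n$ is designed precisely so that its scale $\sigma_n^{-\sqrt\epsilon}$ lies strictly between the within-cluster scale $\delta_n/\sigma_n\asymp\sigma_n^{-\epsilon}$ and the between-cluster scale $\Delta/\sigma_n\asymp\sigma_n^{-1}$ on the power-of-$\sigma_n$ axis, opening exponential gaps on both sides which, once $\nu\alpha\sqrt\epsilon>1$ is invoked, swamp any polynomial factor in $n$.
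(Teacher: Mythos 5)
Your proposal is correct and follows essentially the same route as the paper: apply Lemma~\ref{thm:eval_bounds_1} to the partition of $\widehat\Ll^{(n)}$ supplied by Lemma~\ref{thm:consistency} (with separation $\Delta$ and connectedness $M\sigma_n^{1-\epsilon}$), then use AK3 on both sides of the intermediate scale $\sigma_n^{-\sqrt{\epsilon}}$ together with $\sigma_n<\log(n)^{-\nu}$, $\alpha\nu\sqrt{\epsilon}>1$, to show the relevant ratios are super-polynomial in $n$. The bookkeeping matches the paper's line by line, so nothing further is needed.
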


\begin{proof}
We again use the beginning of the proof of Lemma~\ref{thm:evec_consistency_1} to obtain points 1--4 from the results of Lemma~\ref{thm:consistency}.
By Lemma~\ref{thm:eval_bounds_1} we thus have,
\begin{align*}
\sum_{l=1}^c e^{(n)}_l &\leq ncK\left(\frac{\Delta}{\sigma_n}\right)\\
e^{(n)}_{c+1} &\geq \frac{1}{9n^2(c+1)^4}K\left(\frac{M}{\sigma_n^\epsilon}\right)-n^3cK\left(\frac{\Delta}{\sigma_n}\right).
\end{align*}
For any $H>0$ and $h\in \R$ and any $l\in [c]$, we thus have, for $n$ large enough that $\sigma_n^{1-\sqrt{\epsilon}} < \frac{1}{2}\Delta$,
\begin{align*}
\frac{e^{(n)}_l}{Hn^hK(\sigma_n^{-\sqrt{\epsilon}})} &\leq \frac{cK(\Delta/\sigma_n)}{Hn^{h-1}K(\sigma_n^{-\sqrt{\epsilon}})} \leq \frac{cA}{Hn^{h-1}}\exp\left(-\left(\frac{\Delta - \sigma_n^{1-\sqrt{\epsilon}}}{\sigma_n}\right)^\alpha\right)\\
&\leq \frac{cA}{Hn^{h-1}B}\exp\left(-\frac{\Delta^\alpha}{2^\alpha\sigma_n^{\alpha}}\right)
\leq \frac{cA}{Hn^{h-1}B}\exp\left(-\frac{\Delta^\alpha}{2^\alpha}\log(n)^{\alpha\nu}\right)\to 0 \mbox{ as } n\to \infty,
\end{align*}
since $\alpha\nu > \alpha\sqrt{\epsilon}\nu > 1$.
In addition, we have
\begin{align*}
\frac{e^{(n)}_{c+1}}{Hn^hK(\sigma_n^{-\sqrt{\epsilon}})} &\geq \frac{K(M/\sigma_n^\epsilon)}{9Hn^{h+2}(c+1)^4K(\sigma_n^{-\sqrt{\epsilon}})} - \frac{cK(\Delta/\sigma_n)}{Hn^{h-3}K(\sigma_n^{-\sqrt{\epsilon}})}
\end{align*}
Now, since $\epsilon < 1$, we have $\sqrt{\epsilon}>\epsilon$, and so for $n$ large enough, we have $M\sigma_n^{\sqrt{\epsilon}-\epsilon} < \frac{1}{2}$. Therefore, using assumption AK3 and the fact that $K$ is strictly positive,
\begin{align*}
\frac{K(M/\sigma_n^\epsilon)}{9Hn^{h+2}(c+1)^4K(\sigma_n^{-\sqrt{\epsilon}})} &\geq 
\frac{1}{9AHn^{h+2}(c+1)^4}\exp\left(\left(\frac{1 - M\sigma_n^{\sqrt{\epsilon}-\epsilon}}{\sigma_n^{\sqrt{\epsilon}}}\right)^\alpha\right)\\
 &\geq \frac{1}{9AHn^{h+2}(c+1)^4}\exp\left(\frac{1}{2^\alpha\sigma_n^{\alpha\sqrt{\epsilon}}}\right)\\
&\geq \frac{1}{9AHn^{h+2}(c+1)^4}\exp\left(\frac{1}{2^\alpha}\log(n)^{\alpha\nu\sqrt{\epsilon}}\right)\to\infty \mbox{ as } n\to\infty,\\
\mbox{ and }\frac{cK(\Delta/\sigma_n)}{Hn^{h-3}K(\sigma_n^{-\sqrt{\epsilon}})} &\leq \frac{cA}{Hn^{h-3}B}\exp\left(-\frac{\Delta^\alpha}{2^\alpha}\log(n)^{\alpha\nu}\right)\to 0 \mbox{ as } n\to \infty,
\end{align*}
as before. This proves the result.
%
%
\end{proof}

\begin{lemma}\label{thm:eval_consistency_2}
Let the conditions of Lemma~\ref{thm:eval_consistency_1} hold. 
 Let $\Ln^{(n)}$ be the normalised Laplacian of the graph with vertices $\widehat \Ll^{(n)}$ and edge weights determined using $K_{\sigma_n}$. For each $l\in\left[\big|\widehat \Ll^{(n)}\big|\right]$, let $e_l^{(n)}$ be the $l$-th eigenvalue of $\Ln^{(n)}$.
Then we have,
\begin{align*}
\frac{e^{(n)}_l}{f_n} &\xrightarrow{a.s.} 0,  \mbox{ for } l\in [c], \ \ 
\frac{e^{(n)}_{c+1}}{f_n} \xrightarrow{a.s.} \infty,
\end{align*}
for any sequence $\{f_n\}_{n=1}^\infty$ of the form $f_n = Hn^hK(\sigma_n^{-\sqrt{\epsilon}})$, where $H>0$ and $h\in \R$ are any fixed constants.
\end{lemma}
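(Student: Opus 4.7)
The plan is to mimic the proof of Lemma~\ref{thm:eval_consistency_1} almost verbatim, replacing the eigenvalue bounds for $\L$ from Lemma~\ref{thm:eval_bounds_1} with the corresponding bounds for $\Ln$ from Lemma~\ref{thm:eval_bounds_2}. Inspecting the two bounds, the only substantive difference is that the denominator in the lower bound on the $(k+1)$-st eigenvalue becomes $9n^3(k+1)^4$ rather than $9n^2(k+1)^4$. Since $f_n$ is allowed to contain an arbitrary polynomial factor $n^h$ with $h\in\R$, this change is harmless; all of the essential work is already done by the exponential factor $K(\sigma_n^{-\sqrt{\epsilon}})$.

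First I would recover, almost surely for all large $n$, the four structural facts established at the start of the proof of Lemma~\ref{thm:evec_consistency_1}: there exists $\Delta > 0$ and $M>0$ such that the sets $\widehat{\ell(\lambda,k)}^{(n)} := \widehat\Ll^{(n)} \cap \B_{M\sigma_n^{1-\epsilon}}(\llk)$ partition $\widehat\Ll^{(n)}$, each is connected at distance $M\sigma_n^{1-\epsilon}$, and any two distinct ones are separated by at least $\Delta$. These play the role of the partition $\C_1,\dots,\C_c$ in Lemma~\ref{thm:eval_bounds_2}, with $\delta_l \leq M\sigma_n^{1-\epsilon}$ and $d(\C_m, \widehat\Ll^{(n)}\setminus \C_m) \geq \Delta$ for all $l,m\in[c]$.

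Next, applying Lemma~\ref{thm:eval_bounds_2} gives
\begin{align*}
\sum_{l=1}^c e_l^{(n)} &\leq n c\, K(\Delta/\sigma_n),\\
e_{c+1}^{(n)} &\geq \frac{1}{9n^3(c+1)^4}\, K(M/\sigma_n^{\epsilon}) - n^3 c\, K(\Delta/\sigma_n).
\end{align*}
Dividing through by $f_n = H n^h K(\sigma_n^{-\sqrt{\epsilon}})$ and using assumption AK3 together with the upper bound $\sigma_n < \log(n)^{-\nu}$ with $\nu > \alpha^{-1}\epsilon^{-1/2}$, the estimates proceed exactly as in Lemma~\ref{thm:eval_consistency_1}: for $n$ large enough that $\sigma_n^{1-\sqrt{\epsilon}} < \tfrac{1}{2}\Delta$, each of the first $c$ eigenvalues is dominated by $\exp(-\Delta^\alpha \log(n)^{\alpha\nu}/2^\alpha)$ up to a polynomial in $n$, which tends to zero; meanwhile the leading contribution to $e_{c+1}^{(n)}/f_n$ is bounded below by $\exp(\log(n)^{\alpha\nu\sqrt{\epsilon}}/2^\alpha)$ up to a polynomial in $n$, which tends to infinity, while the negative term is negligible by the same argument as for the small eigenvalues.

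I do not anticipate any genuine obstacle here; the argument is purely a transcription with the slightly weaker polynomial factor $n^3$ in the lower bound absorbed into the free exponent $h$ of the scaling sequence $f_n$. The only item to check carefully is that the bounds of Lemma~\ref{thm:eval_bounds_2} are applied with the correct partition and that the exponential term $K(M/\sigma_n^{\epsilon})$ dominates $K(\sigma_n^{-\sqrt{\epsilon}})$ as $n\to\infty$, which follows from $\sqrt{\epsilon} > \epsilon$ exactly as in the previous proof.
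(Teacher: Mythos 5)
Your proposal is correct and matches the paper's approach: the paper's own proof of this lemma is simply the statement that it is exactly analogous to that of Lemma~\ref{thm:eval_consistency_1}, substituting the eigenvalue bounds of Lemma~\ref{thm:eval_bounds_2} for those of Lemma~\ref{thm:eval_bounds_1}. You correctly identify that the only change is the polynomial factor $9n^3(c+1)^4$ in place of $9n^2(c+1)^4$, which is harmless since the exponential terms dominate any polynomial in $n$.
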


\begin{proof}
The proof is exactly analogous to the previous proof.
\end{proof}

\begin{lemma}\label{thm:eval_consistency_3}
Let the conditions of Lemma~\ref{thm:eval_consistency_1} hold, and let $K(x) = \exp(-x^\alpha)$.
 Let $\Lno^{(n)}$ be the normalised Laplacian of the graph with vertices $\widehat \Ll^{(n)}$ and edge weights determined using $K_{\sigma_n}$, but with reflexive edges removed. For each $l\in\left[\big|\widehat \Ll^{(n)}\big|\right]$, let $e_l^{(n)}$ be the $l$-th eigenvalue of $\Lno^{(n)}$.
Then we have,
\begin{align*}
\frac{e^{(n)}_l}{f_n} &\xrightarrow{a.s.} 0,  \mbox{ for } l\in [c], \ \ 
\frac{e^{(n)}_{c+1}}{f_n} \xrightarrow{a.s.} \infty,
\end{align*}
for any sequence $\{f_n\}_{n=1}^\infty$ of the form $f_n = Hn^hK(\sigma_n^{-\sqrt{\epsilon}})$, where $H>0$ and $h\in \R$ are any fixed constants.
\end{lemma}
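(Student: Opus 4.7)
The plan is to follow exactly the template of the proof of Lemma~\ref{thm:eval_consistency_1}, replacing every appeal to Lemma~\ref{thm:eval_bounds_1} by the corresponding bounds from Lemma~\ref{thm:eval_bounds_3}, and then tracking how the extra $K_\sigma(\delta_m)$ factor in the denominator of both eigenvalue bounds propagates through the exponential estimates. I would first reuse verbatim the opening of the proof of Lemma~\ref{thm:evec_consistency_1} to extract, almost surely and for all sufficiently large $n$, a partition of $\widehat{\Ll}^{(n)}$ into subsets $\widehat{\ell(\lambda,1)}^{(n)}, \dots, \widehat{\ell(\lambda,c)}^{(n)}$ such that each is connected at distance $M\sigma_n^{1-\epsilon}$ for some $M>0$ independent of $n$, while any two distinct pieces are separated by at least $\Delta > 0$. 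This puts us precisely in the setting of Lemma~\ref{thm:eval_bounds_3}.

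Substituting $K(x) = \exp(-x^\alpha)$ into the conclusions of Lemma~\ref{thm:eval_bounds_3}, together with $\delta_l = M\sigma_n^{1-\epsilon}$ and $d(\C_m, \widehat\Ll^{(n)}\setminus\C_m)\geq \Delta$, yields the explicit bounds
\begin{align*}
\sum_{l=1}^c e_l^{(n)} &\leq n\,c\, \exp\!\left(\frac{M^\alpha}{\sigma_n^{\alpha\epsilon}} - \frac{\Delta^\alpha}{\sigma_n^\alpha}\right),\\
e_{c+1}^{(n)} &\geq \frac{\exp(-M^\alpha \sigma_n^{-\alpha\epsilon})}{9 n^3 (c+1)^4} - n^3 c\, \exp\!\left(\frac{M^\alpha}{\sigma_n^{\alpha\epsilon}} - \frac{\Delta^\alpha}{\sigma_n^\alpha}\right).
\end{align*}
The extra $\exp(M^\alpha\sigma_n^{-\alpha\epsilon})$ factors, relative to the analogous estimates used in the proof of Lemma~\ref{thm:eval_consistency_1}, are the only essential novelty introduced by removing the reflexive edges.

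The remainder is asymptotic bookkeeping. Dividing the upper bound on $\sum_{l\leq c} e_l^{(n)}$ by $f_n = Hn^hK(\sigma_n^{-\sqrt{\epsilon}}) = Hn^h\exp(-\sigma_n^{-\alpha\sqrt{\epsilon}})$ produces an exponent $M^\alpha\sigma_n^{-\alpha\epsilon} + \sigma_n^{-\alpha\sqrt{\epsilon}} - \Delta^\alpha\sigma_n^{-\alpha}$. Since $\epsilon < \sqrt{\epsilon} < 1$, the $-\Delta^\alpha\sigma_n^{-\alpha}$ term strictly dominates as $\sigma_n \to 0$, and the hypothesis $\alpha\nu > 1/\sqrt{\epsilon} > 1$ ensures $\sigma_n^{-\alpha}\geq \log(n)^{\alpha\nu}$ grows so fast that the resulting super-polynomial decay swamps the polynomial prefactor $n^{1-h}$; hence $e_l^{(n)}/f_n \to 0$ almost surely for $l\leq c$. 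For the ratio $e_{c+1}^{(n)}/f_n$, the positive part of the lower bound contributes exponent $\sigma_n^{-\alpha\sqrt{\epsilon}} - M^\alpha\sigma_n^{-\alpha\epsilon}$, which diverges to $+\infty$ precisely because $\sqrt{\epsilon} > \epsilon$, and the inequality $\alpha\nu\sqrt{\epsilon} > 1$ then guarantees that this divergence overwhelms the polynomial factor $n^{h+3}(c+1)^4$. The subtracted term is handled identically to the upper bound above and vanishes after division by $f_n$, so it cannot spoil the divergence.

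The only real obstacle is essentially a sandwiching check: one must verify that the intermediate exponent $\sqrt{\epsilon}$ sits strictly between $\epsilon$ (which governs the within-cluster connectedness scale $K_{\sigma_n}(M\sigma_n^{1-\epsilon}) = \exp(-M^\alpha\sigma_n^{-\alpha\epsilon})$ now appearing in both the numerator of the $(c+1)$-th bound and the denominator of the upper bound) and $1$ (which governs the between-cluster decay $K_{\sigma_n}(\Delta) = \exp(-\Delta^\alpha\sigma_n^{-\alpha})$), so that $K(\sigma_n^{-\sqrt{\epsilon}})$ lies strictly between these two rates. Given this, and the condition $\nu > \alpha^{-1}\epsilon^{-1/2}$ which is stated exactly to make the corresponding exponentials dominate all polynomial-in-$n$ prefactors, the proof reduces to a direct transcription of the argument for Lemma~\ref{thm:eval_consistency_1}.
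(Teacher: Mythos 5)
Your proposal is correct and follows essentially the same route as the paper: the paper's proof likewise imports the partition/connectedness structure from Lemma~\ref{thm:eval_consistency_1}, substitutes the ratio bounds of Lemma~\ref{thm:eval_bounds_3} to obtain $\sum_{l\leq c}e_l^{(n)}\leq nc\,K(\Delta/\sigma_n)/K(M/\sigma_n^\epsilon)$ and the matching lower bound on $e_{c+1}^{(n)}$, and then uses exactly your exponent comparison $-\Delta^\alpha\sigma_n^{-\alpha}+M^\alpha\sigma_n^{-\alpha\epsilon}+\sigma_n^{-\alpha\sqrt{\epsilon}}\leq -\Delta^\alpha/(2\sigma_n^\alpha)$ together with $\alpha\nu>1$ and $\alpha\nu\sqrt{\epsilon}>1$ to conclude. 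The sandwiching $\epsilon<\sqrt{\epsilon}<1$ you identify as the crux is indeed the point of the paper's argument.
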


\begin{proof}
Using the same approach as in the proof of Lemma~\ref{thm:eval_consistency_1} we have, now using Lemma~\ref{thm:eval_bounds_3}, that
\begin{align*}
\sum_{l=1}^c e^{(n)}_l &\leq nc\frac{K\left(\Delta/\sigma_n\right)}{K(M/\sigma_n^{\epsilon})}\\
e^{(n)}_{c+1} &\geq \frac{K\left(M/\sigma_n^\epsilon\right)}{9n^3(c+1)^4}-n^3c\frac{K\left(\Delta/\sigma_n\right)}{K(M/\sigma_n^{\epsilon})}.
\end{align*}
The first term in $e_{c+1}^{(n)}$, divided by $f_n$ tends to $\infty$ almost surely, almost exactly as before. The second term in $e^{(n)}_{c+1}$ and in the first $c$ eigenvalues converge to zero fast enough, almost surely, since for $n$ large enough that $M^\alpha\sigma_n^{\alpha(1-\epsilon)} < \frac{1}{4}\Delta^\alpha$ and $\sigma_n^{\alpha(1-\sqrt{\epsilon})} < \frac{1}{4}\Delta^\alpha$, we have
\begin{align*}
\frac{cn^3K(\Delta/\sigma_n)}{Hn^hK(\sigma_n^{-\sqrt{\epsilon}})K(M/\sigma_n^\epsilon)} &= \frac{c}{Hn^{h-1}}\exp\left(-\frac{\Delta^\alpha}{\sigma_n^\alpha} + \frac{M^\alpha}{\sigma_n^{\alpha\epsilon}}+\frac{1}{\sigma_n^{\alpha\sqrt{\epsilon}}}\right)\\
& \leq \frac{c}{Hn^{h-1}}\exp\left(-\frac{\Delta^\alpha}{2\sigma_n^\alpha}\right) \to 0 \mbox{ as } n\to \infty.
\end{align*}
The rest of the proof follows as before.
\end{proof}

\section{Discussion}\label{sec:conclusion}

In this paper we investigated the relationships between spectral clustering and the problems of maximum margin clustering and estimation of level sets of a probability density. Although these two problems are not usually associated with one another, by applying a maximum margin clustering method to a truncated sample whose low-density points have been removed, it is intuitively the case that such an approach is likely to recover an approximation of the components of a level set of the underlying density. We extended existing theory on the connection between spectral clustering and density level sets by considering multiple versions of spectral clustering, by considering a broader class of kernels including the ubiquitous Gaussian kernel, and importantly achieve consistent estimation with a sequence of scaling parameters which decreases with the sample size. Existing convergence results for spectral clustering assume a fixed bandwidth kernel is used. Although intuitive, as far as we are aware the connection between spectral clustering and maximum margin clustering in the general case has not been made explicit until now.


\end{document}